\definecolor{darkcerulean}{rgb}{0.03, 0.27, 0.49} 
\definecolor{iris}{rgb}{0.35, 0.31, 0.81} 
\newcommand{\tr}{{\rm tr}}
\newtheorem{assumption}{Assumption}
\newtheorem{theorem}{Theorem}
\newtheorem{lemma}{Lemma}
\theoremstyle{remark}
\def\R{\mathbb R}
\renewcommand\epsilon{\varepsilon}
\def\va{{\mathbf{a}}}
\def\vb{{\mathbf{b}}}
\def\vd{{\mathbf{d}}}
\def\ve{{\mathbf{e}}}
\def\vs{{\mathbf{s}}}
\def\vu{{\mathbf{u}}}
\def\vv{{\mathbf{v}}}
\def\vx{{\mathbf{x}}}
\def\vy{{\mathbf{y}}}
\def\vz{{\mathbf{z}}}
\DeclareMathOperator{\asto}{\xrightarrow{\text{a.s.}}}
\def \vgamma{{\bm{\gamma}}}
\def \vepsilon{{\bm{\epsilon}}}
\def \vdelta{{\bm{\delta}}}
\def \vxi{{\bm{\xi}}}
\def \valpha{{\bm{\alpha}}}
\def\mA{{\mathbf{A}}}
\def\mB{{\mathbf{B}}}
\def\mC{{\mathbf{C}}}
\def\mD{{\mathbf{D}}}
\def\mI{{\mathbf{I}}}
\def\mM{{\mathbf{M}}}
\def\mN{{\mathbf{N}}}
\def\mQ{{\mathbf{Q}}}
\def\mS{{\mathbf{S}}}
\def\mT{{\mathbf{T}}}
\def\mU{{\mathbf{U}}}
\def\mV{{\mathbf{V}}}
\def\mW{{\mathbf{W}}}
\def\mX{{\mathbf{X}}}
\def\mY{{\mathbf{Y}}}
\def\mZ{{\mathbf{Z}}}
\def\mSigma {{\mathbf{\Sigma }}}
\def\R{\mathbb{R}}
\definecolor{bluerow}{rgb}{0.0, 0.53, 0.74} %
\title{Analysing Multi-Task Regression via Random Matrix Theory with Application to Time Series Forecasting}
\newcommand{\vasilii}[1]{\textcolor{purple}{#1}}
\author{%
  Romain Ilbert\thanks{Correspondence to: \texttt{romain.ilbert@hotmail.fr}.}$^{*1,2}$ \quad Malik Tiomoko$^1$ \quad Cosme Louart$^3$\quad Ambroise Odonnat$^1$ \\  \textbf{Vasilii Feofanov }$^1$ \quad \textbf{Themis Palpanas}$^2$ \quad \textbf{Ievgen Redko}$^1$ \\
  $^1$Huawei Noah’s Ark Lab, Paris, France \quad $^2$LIPADE, Paris Descartes University, Paris, France \\ $^3$ School of Data Science, The Chinese University of Hong Kong (Shenzhen), Shenzhen, China
}
\pgfplotsset{compat=1.18}
\begin{document}
\addtocontents{toc}{\protect\setcounter{tocdepth}{0}}

\maketitle
\begin{abstract}
In this paper, we introduce a novel theoretical framework for multi-task regression, applying random matrix theory to provide precise performance estimations, under high-dimensional, non-Gaussian data distributions. We formulate a multi-task optimization problem as a regularization technique to enable single-task models to leverage multi-task learning information. We derive a closed-form solution for multi-task optimization in the context of linear models. Our analysis provides valuable insights by linking the multi-task learning performance to various model statistics such as raw data covariances, signal-generating hyperplanes, noise levels, as well as the size and number of datasets. We finally propose a consistent estimation of training and testing errors, thereby offering a robust foundation for hyperparameter optimization in multi-task regression scenarios. Experimental validations on both synthetic and real-world datasets in regression and multivariate time series forecasting demonstrate improvements on univariate models, incorporating our method into the training loss and thus leveraging multivariate information.
\end{abstract}

\section{Introduction}
The principle of multi-task learning, which encompasses concepts such as "learning to learn" and "knowledge transfer," offers an efficient paradigm that mirrors human intelligence. Unlike traditional machine learning, which tackles problems on a task-specific basis with tailored algorithms, multi-task learning leverages shared information across various tasks to enhance overall performance. This approach not only improves accuracy but also addresses challenges related to insufficient data and data cleaning. By analyzing diverse and multimodal data and structuring representations multi-task learning can significantly boost general intelligence, similar to how humans integrate skills. This concept is well-established~\cite{niu2020decade} and has been successfully applied to a wide range of domains, such as computer vision~\cite{shao2015transfer}, natural language processing ~\cite{ruder2019transfer, raffel2020exploring} and biology ~\cite{mei2011gene,shin2016deep,hu2019statistical}. 

\paragraph{Our Method.}
We consider the multi-task regression framework \cite{caruana1997multitask} with $T$ tasks with the input space $\mathcal{X}^{(t)}\subset\mathbb{R}^d$ and the output space $\mathcal{Y}^{(t)}\subset\mathbb{R}^q$ for $t\in\{1,\dots,T\}$. For each task $t$, we assume that we are given $n_t$ training examples organized into  
the feature matrix $\mX^{(t)} = [\vx_1^{(t)}, \ldots, \vx_{n_t}^{(t)}] \in \mathbb{R}^{d \times n_t}$ and the corresponding response matrix $\mY^{(t)} = [\vy_1^{(t)}, \ldots, \vy_{n_t}^{(t)}] \in \mathbb{R}^{q \times n_t}$, where $\vx_i^{(t)} \in \mathcal{X}^{(t)}$ represents the $i$-th feature vector of the $t$-th task and $\vy_i^{(t)} \in \mathcal{Y}^{(t)}$ is the associated response. In order to have a more tractable and insightful setup, we follow \citet{romera2013multilinear} and consider the linear multi-task regression model. In particular, we study a straightforward linear signal-plus-noise model that evaluates the response $\vy_i^{(t)}$ for the $i$-th sample of the $t$-th task as follows:
\begin{equation}
\label{eq:y_i_definition}
\mY^{(t)} = \frac{{{}\mX^{(t)}}^{\top}\mW_t}{\sqrt{Td}} + \vepsilon^{(t)},  \quad \forall t \in \{1, \ldots, T\},
\end{equation}
where $\vepsilon^{(t)} \in \mathbb{R}^{n_t \times q}$ is a matrix of noise vectors with each  $\vepsilon_i^{(t)} \sim \mathcal{N}(0, \mSigma_N)$, $\mSigma_N\in\R^{q\times q}$ denoting the covariance matrix.
The matrix $\mW_t \in \mathbb{R}^{d \times q}$ denotes the signal-generating hyperplane for task $t$. We denote the concatenation of all task-specific hyperplanes by $\mW = [\mW_1^\top, \ldots, \mW_T^\top]^\top \in \mathbb{R}^{Td \times q}$. 
We assume that $\mW_t$ can be decomposed as a sum of a common matrix $\mW_0 \in \mathbb{R}^{d \times q}$, which captures the shared information across all the tasks, and a task-specific matrix $\mV_t \in \mathbb{R}^{d \times q}$, which captures deviations specific to the task $t$:
\begin{align}
\mW_t = \mW_0 + \mV_t.
\end{align}
Given the multitask regression framework and the linear signal-plus-noise model, we now want to retrieve the common and specific hyperplanes, $\mW_0$ and $\mV_t$, respectively. To achieve this, we study the following minimization problem governed by a parameter $\lambda$:
\begin{align}
    \mW_0^*, \{\mV_t^*\}_{t=1}^T, \lambda^* = \text{argmin}\quad \frac1{2\lambda} \|\mW_0 \|_F^2+\frac 1{2}\sum_{t=1}^T \frac{ \|\mV_t\|_F^2}{\gamma_t}+\frac 1{2}\sum_{t=1}^T \left\Vert\mY^{(t)}-\frac{{{}\mX^{(t)}}^{\top}\mW_t}{\sqrt{Td}}\right\Vert_F^2.
    \label{eq:obj_func}
\end{align}
This parameter controls the balance between the common and specific components of $\mW$, thereby influencing the retrieval of the optimal hyperplanes.
\paragraph{Contributions and Main Results.} 
We seek to provide a rigorous theoretical study of \eqref{eq:obj_func} in high dimension, providing practical insights that make the theoretical application implementable in practice.
Our contributions can be summarized in four major points: 
\begin{enumerate}
    \item We provide exact train and test risks for the solutions of \eqref{eq:obj_func} using random matrix theory. We then decompose the test risk into a signal and noise term responsible for the effectiveness of multi-task learning and the negative transfer, respectively.
    \item We show how the signal and noise terms compete with each other depending on $\lambda$ for which we obtain an optimal value optimizing the test risk.
    \item In a particular case, we derive a closed-form data-dependent solution for the optimal $\lambda^{\star}$, involving raw data covariances, signal-generating hyperplane, noise levels, and the size and number of data sets.
    \item We offer empirical estimates of these model variables to develop a ready-to-use method for hyperparameter optimization in multitask regression problems within our framework.
\end{enumerate}

\paragraph{Applications.} As an application, we view multivariate time series forecasting (MTSF) as a multi-task problem and show how the proposed regularization method can be used to efficiently employ univariate models in the multivariate case. Firstly, we demonstrate that our method improves \texttt{PatchTST} \cite{nie2023patchtst} and DLinear \cite{zeng2022effective} compared to the case when these approaches are applied independently to each variable. Secondly, we show that the univariate models enhanced by the proposed regularization achieve performance similar to the current state-of-the-art multivariate forecasting models such as \texttt{SAMformer} \cite{ilbert2024unlocking} and \texttt{iTransformer} \cite{liu2024itransformer}.

\section{Related Work}


\textbf{Usual Bounds in Multi-Task Learning. \phantom{.}}
Since its introduction~\cite{caruana1997multitask,ando2005framework,lounici2009taking}, multi-task learning (MTL) has demonstrated that transfer learning concepts can effectively leverage shared information to solve related tasks simultaneously. Recent research has shifted focus from specific algorithms or minimization problems to deriving the lowest achievable risk given an MTL data model. The benefits of transfer learning are highlighted through risk bounds, using concepts such as contrasts~\cite{li2022transfer}, which study high-dimensional linear regression with auxiliary samples characterized by the sparsity of their contrast vector or transfer distances~\cite{mousavi2020minimax}, which provide lower bounds for target generalization error based on the number of labeled source and target data and their similarities. Additionally, task-correlation matrices~\cite{nguyen2023asymptotic} have been used to capture the relationships inherent in the data model.

However, these studies remain largely theoretical, lacking practical algorithms to achieve the proposed bounds. They provide broad estimates and orders of magnitude that, while useful for gauging the feasibility of certain objectives, do not offer precise performance evaluations. Moreover, these estimates do not facilitate the optimal tuning of hyperparameters within the MTL framework, a critical aspect of our study. We demonstrate the importance of precise performance evaluations and effective hyperparameter tuning to enhance the practical application of MTL.


\textbf{Random Matrix Theory in Multi-Task Learning. \phantom{.}} 
In the high-dimensional regime, Random Matrix Theory allows obtaining precise theoretical estimates on functionals (e.g., train or test risk) of data matrices with the number of samples comparable to data dimension~\cite{bai2010spectral, erdos2017dynamical, Nica_Speicher_2006, tao2012topics}. In the multi-task classification case, large-dimensional analysis has been performed for Least Squares SVM~\cite{tiomoko2020large} and supervised PCA~\cite{tiomoko23pca} showing counterproductivity of tackling tasks independently. We inspire our optimization strategy from \cite{tiomoko2020large} and conduct theoretical analysis for the multi-task regression, which introduces unique theoretical challenges and with a different intuition since we need to consider another data generation model and a regression learning objective. Our data modeling shares similarity with \cite{yang2023precise} which theoretically studied conditions of a negative transfer under covariate and model shifts. Our work focuses on hyperparameter selection considering a more general optimization problem and using other mathematical tools.


\paragraph{High dimensional analysis for regression.} Regression is an important problem that has been extensively explored in the context of single-task learning for high-dimensional data, employing tools such as Random Matrix Theory~\cite{dobriban2018high}, physical statistics~\cite{gerbelot2022asymptotic,clarte2023theoretical}, and the Convex Gaussian MinMax Theorem (CGMT) \cite{sifaou2021precise}, among others. Typically, authors employ a linear signal plus noise model to calculate the asymptotic test risk as a function of the signal-generating parameter and the covariance of the noise.
Our research builds upon these studies, extending them to a multi-task learning framework. In doing so, we derive several insights unique to multi-task learning. However, none of these previous studies offer a practical method for estimating the asymptotic test risk, which is dependent on the ground truth signal-generating parameter.
Therefore, our work not only extends previous studies into the multi-task case within a generic data distribution (under the assumption of a concentrated random vector), but also provides a practical method for estimating these quantities.


\section{Framework}

\textbf{Notation. \phantom{.}} Throughout this study, matrices are represented by bold uppercase letters (e.g., matrix $\mA$), vectors by bold lowercase letters (e.g., vector $\vv$), and scalars by regular, non-bold typeface (e.g., scalar $a$).
The notation $\mA\otimes \mB$ for matrices or vectors $\mA,\mB$ is the Kronecker product.  $\mathcal{D}_{\vx}$ or $\text{Diag}(\vx)$ stands for a diagonal matrix containing on its diagonal the elements of the vector $\vx$. The superscripts $t$ and $i$ are used to denote the task and the sample number, respectively, e.g., $\vx_i^{(t)}$ writes the $i$-th sample of the $t$-th task. The canonical vector of $\mathbb{R}^T$ is denoted by $\ve_{t}^{[T]}$ with $[e_{t}^{[T]}]_i=\delta_{ti}$. Given a matrix $M\in \mathbb R^{p\times n}$, the Frobenius norm of $\mM$ is denoted $\|\mM\|_F \equiv \sqrt{\tr (\mM^\top \mM)}$. For our theoretical analysis, we introduce the following notation of training data: 
\begin{align*}
\mY=[{}\mY^{(1)},\ldots,{}\mY^{(T)}]\in\mathbb{R}^{q\times n}, \qquad {\mZ=\sum_{t=1}^T \left( \ve_{t}^{[T]}{\ve_{t}^{[T]}}^\top\right)\otimes \mX^{(t)}}\in\mathbb{R}^{Td\times n}
\end{align*}
where $n = \sum_{t=1}^T n_t$ is the total number of samples in all the tasks.
\par

\subsection{Regression Model}
\label{subsec:minimization_problem}
We define \(\mV = [\mV_1^\top, \ldots, \mV_T^\top]^\top \in \mathbb{R}^{dT \times q} \) and  propose to solve the linear multi-task problem by finding \(\hat \mW = [\hat\mW_1^\top, \ldots, \hat\mW_T^\top]^\top \in \mathbb{R}^{dT \times q} \) which solves the following optimization problem under the assumption of relatedness between the tasks, i.e., \(\mW_t = \mW_0 + \mV_t\) for all tasks \(t\) :
\begin{align}
\label{eq:optimization_output}
    \min_{(\mW_0,\mV)\in \mathbb{R}^{d\times q}\times \mathbb{R}^{dT\times q}} \mathcal{J}(\mW_0,\mV)
\end{align}
where
\begin{align*}
    \mathcal{J}(\mW_0,\mV)&\equiv \frac1{2\lambda} \|\mW_0 \|_F^2+\frac 1{2}\sum_{t=1}^T \frac{ \|\mV_t\|_F^2}{\gamma_t}+\frac 1{2}\sum_{t=1}^T \left\Vert\mY^{(t)}-\frac{{{}\mX^{(t)}}^{\top}\mW_t}{\sqrt{Td}}\right\Vert_F^2.
\end{align*}
The objective function consists of three components: a regularization term for \(\mW_0\) to mitigate overfitting, a task-specific regularization term that controls deviations \(\mV_t\) from the shared weight matrix \(\mW_0\), and a loss term quantifying the error between the predicted outputs and the actual responses for each task.

The optimization problem is convex for any positive values of $\lambda, \gamma_1, \ldots, \gamma_T$, and it possesses a unique solution. The details of the calculus are given in Appendix \ref{sec:optimization_derivation}. The formula for $\hat\mW_t$ is given as follows:
\begin{align*}
\hat\mW_t=\left({\ve_{t}^{[T]}}^\top\otimes \mI_{d}\right)\frac{\mA\mZ\mQ\mY}{\sqrt{Td}}, \qquad
\hat\mW_0=\left(\mathbb{1}_T^\top\otimes \lambda \mI_{d}\right)\frac{\mZ\mQ\mY}{\sqrt{Td}},
\end{align*}
where with $\mQ=\left(\frac{\mZ^\top \mA\mZ}{Td}+\mI_{n}\right)^{-1}\in\mathbb{R}^{n\times n}$, and $\mA=\left(\mathcal{D}_{\vgamma}+\lambda\mathbb{1}_T\mathbb{1}_T^\top\right)\otimes \mI_d\in\mathbb{R}^{Td\times Td}$.

\subsection{Assumptions}
In order to use Random Matrix Theory (RMT) tools, we make two assumptions on the data distribution and the asymptotic regime. 
Following \cite{wainwright2019high}, we adopt a concentration hypothesis on the feature vectors $\vx_i^{(t)}$, which was shown to be highly effective for analyzing machine learning problems~\cite{couillet2022random,feofanov23qlds}.
\begin{assumption}[Concentrated Random Vector]
\label{def:concentration_measure}
We assume that there exists two constants $C,c >0$ (independent of dimension $d$) such that,  for any task $t$, for any $1$-Lipschitz function $f:\mathbb{R}^{d}\to \mathbb{R}$, any feature vector $\mathbf{x}^{(t)}\in\mathcal{X}^{(t)}$ verifies:
\begin{align*}
\forall t>0:\ &\mathbb{P}( |f(\vx^{(t)})-\mathbb{E}[f(\vx^{(t)})] | \geq t)\leq Ce^{-(t/c)^2},\\
&\mathbb{E}[\vx^{(t)}]=0\text{\quad and\quad}\mathrm{Cov}[\vx^{(t)}]=\mSigma^{(t)}.
\end{align*}
\end{assumption}
In particular, we distinguish the following scenarios: $\vx_{i}^{(t)}\in\mathbb{R}^d$ are concentrated when they are (i) independent Gaussian random vectors with covariance of bounded norm, (ii) independent random vectors uniformly distributed on the $\mathbb{R}^d$ sphere of radius $\sqrt{d}$, and most importantly (iii) any Lipschitz transformation $\phi(\vx_{i}^{(t)})$ of the above two cases, with bounded Lipschitz norm. Scenario (iii)  is especially pertinent for modeling data in realistic settings. Recent research~\cite{seddik2020random} has demonstrated that images produced by generative adversarial networks (GANs) are inherently qualified as concentrated random vectors.

Next, we present a classical RMT assumption that establishes a commensurable relationship between the number of samples and dimension.
\begin{assumption}[High-dimensional asymptotics]
\label{ass:growth_rate}
As $d\rightarrow\infty$, $n_t = \mathcal{O}(d)$ and $T = \mathcal{O}(1)$. More specifically, we assume that $n/d\asto c_0 <\infty$ with  $n = \sum_{t=1}^T n_t$.
\end{assumption}
Although different from classical asymptotic where the number of samples is implicitly assumed to be exponentially larger than the dimension, the high-dimensional asymptotic finds many applications including telecommunications \citep{couillet2011wireless}, finance \citep{Potters:2005} and machine learning \citep{couillet2022random,tiomoko2020large,feofanov23qlds}.

\section{Main Theoretical Results}
\subsection{Estimation of the Performances}
Given the training dataset $\mX \in \mathbb{R}^{n\times d}$ with the corresponding response variable $\mY\in\mathbb{R}^ {n\times q}$ and for any test dataset $\vx^{(t)}\in\mathbb{R}^d$ and the corresponding response variable $\vy^{(t)}\in\mathbb{R}^{q}$, we aim to derive the theoretical training and test risks defined as follows:
\begin{align*}
    \mathcal{R}_{train}^\infty &= \frac{1}{Tn} \mathbb{E}\left[\|\mY - g(\mX)\|_2^2\right], \quad
    \mathcal{R}_{test}^\infty = \frac{1}{T}\sum\limits_{t=1}^T\mathbb{E}[\|\vy^{(t)} - g(\vx^{(t)})\|_2^2]
\end{align*}
The output score $g({\vx^{(t)}})\in\mathbb{R}^q$ for data $\vx^{(t)}\in \mathbb{R}^d$ of task $t$  is given by:  
\begin{equation}
\label{eq:score_class2}
    g({\vx^{(t)}})=\frac{1}{\sqrt{Td}} \left(\ve_t^{[T]}\otimes \vx^{(t)}\right)^{\top}\hat\mW_t=\frac{1}{Td} \left(\ve_t^{[T]}\otimes \vx^{(t)}\right)^{\top}\mA\mZ\mQ\mY
\end{equation}
In particular, the output for the training score is given by:
\begin{equation}
\label{eq:output_g}
    g(\mX)=\frac{1}{Td}\mZ^\top \mA\mZ\mQ\mY, \qquad \mQ = \left(\frac{\mZ^\top\mA\mZ}{Td} + \mI_{Td}\right)^{-1}
\end{equation}

To understand the statistical properties of \(\mathcal{R}_{train}^\infty\) and \(\mathcal{R}_{test}^\infty\) for the linear generative model, we study the statistical behavior of the resolvent matrix \(\mQ\) defined in \eqref{eq:output_g}. The notion of a deterministic equivalent from random matrix theory~\citep{hachem2007deterministic} is particularly useful here as it allows us to design a deterministic matrix equivalent to \(\mQ\) in the sense of linear forms. Specifically, a deterministic equivalent \(\bar \mM\) of a random matrix \(\mM\) is a deterministic matrix that satisfies \(u(\mM - \bar \mM) \to 0\) almost surely for any bounded linear form \(u : \mathbb{R}^{d \times d} \to \mathbb{R}\). We denote this as
    $\mM \leftrightarrow \bar \mM$.
This concept is crucial to our analysis because we need to estimate quantities such as  $\frac1d \tr \left(\mA \mQ\right)$ or $\va^\top \mQ \vb$, which are bounded linear forms of $\mQ$ with $\mA, \va$ and $\vb$ all having bounded norms. For convenience, we further express some contributions of the performances with the so-called ``coresolvent'' defined as $\tilde \mQ \equiv (\frac{\mA^{\frac{1}{2}}\mZ\mZ^T\mA^{\frac{1}{2}}}{Td} + \mI_{Td})^{-1}$.

Using Lemma \ref{pro:deterministic_equivalent} provided in the Appendix \ref{app:subsec_lemma}, whose proofs are included in Appendices \ref{app:subsec_lemma_2}, \ref{app:subsec_lemma_3} and \ref{app:subsec_lemma_4}, we establish the deterministic equivalents that allow us to introduce our Theorems \ref{th:training_risk} and \ref{th:main}, characterizing the asymptotic behavior of both training and testing risks. 

\begin{theorem}[Asymptotic train and test risk]
\label{th:main}
Assuming that the training data vectors \(\vx_i^{(t)}\) and the test data vectors \(\vx^{(t)}\) are concentrated random vectors, and given the growth rate assumption (Assumption \ref{ass:growth_rate}), it follows that: 
\begin{align}
    \mathcal{R}_{test}^\infty = \underbrace{\frac{\tr\left(\mW^\top \mA^{-\frac 12}\bar{\tilde \mQ}_2(\mA)\mA^{-\frac 12}\mW\right)}{Td}}_{\textit{signal term}} + \underbrace{\frac{\operatorname{tr}(\mSigma_n \bar \mQ_2)}{Td} + \tr\left(\mSigma_n\right)}_{\textit{noise terms}} \tag{\textrm{ATR}} \label{eq:test-err}.
\end{align}
In addition, the asymptotic risk on the training data is given by
\begin{align*}
    \mathcal{R}_{train}^\infty &\leftrightarrow \frac{\tr\left(\mW^\top \mA^{-1/2} \bar{\tilde{\mQ}} \mA^{-1/2}\mW\right)}{Tn} - \frac{\tr\left(\mW^\top \mA^{-1/2} \bar{\tilde \mQ}_2(\mI_{Td}) \mA^{-1/2}\mW\right)}{Tn} + \frac{\tr\left(\mSigma_n\bar \mQ_2\right)}{Tn},
\end{align*}
where $\bar{\tilde{\mQ}}$, $\bar{\tilde{\mQ}}_2(\mI_{Td})$ and $\bar{\mQ}_2$ are respectively deterministic equivalents for $\tilde{\mQ}$, $\tilde{\mQ}^2$ and $\mQ^2$.
\end{theorem}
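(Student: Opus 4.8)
The plan is to reduce the computation of both risks to evaluating linear forms of the resolvent $\mQ$ and the coresolvent $\tilde\mQ$ (and their squares), for which the deterministic equivalents furnished by Lemma \ref{pro:deterministic_equivalent} can be substituted. First I would substitute the closed-form estimator $\hat\mW_t = (\ve_t^{[T]\top}\otimes\mI_d)\frac{\mA\mZ\mQ\mY}{\sqrt{Td}}$ and the generative model $\mY^{(t)} = \frac{\mX^{(t)\top}\mW_t}{\sqrt{Td}}+\vepsilon^{(t)}$ into the definitions of $g(\vx^{(t)})$ and $g(\mX)$. Writing $\mY = \frac{\mZ^\top\mW}{\sqrt{Td}}+\mSigma_n^{1/2}\mathcal{E}$ in the stacked notation, the test score $g(\vx^{(t)})$ becomes an explicit bilinear expression in $\mQ$, $\mZ$, $\mW$ and the noise $\mathcal{E}$. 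Expanding the squared-error risks and taking the expectation over the noise (using $\mathbb{E}[\mathcal{E}\mathcal{E}^\top]$ proportional to the identity and independence of noise from the features) I expect the cross terms between signal and noise to vanish, leaving a pure \emph{signal} contribution that is quadratic in $\mW$ and a pure \emph{noise} contribution proportional to $\tr(\mSigma_n\,\cdot)$. This separation is exactly the signal/noise split displayed in \eqref{eq:test-err}.

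Next I would handle the two contributions separately. For the noise term, the expectation over $\mathcal{E}$ produces quantities of the form $\frac1{Td}\tr(\mSigma_n\,\mQ^2)$ (plus the irreducible $\tr(\mSigma_n)$ coming from the fresh test noise that the model cannot predict), and here I invoke the deterministic equivalent $\mQ^2\leftrightarrow\bar\mQ_2$. For the signal term, after isolating the factor $\mW^\top(\cdots)\mW$, I would use the algebraic identity relating $\mZ\mZ^\top$-type products to the coresolvent $\tilde\mQ=(\frac{\mA^{1/2}\mZ\mZ^\top\mA^{1/2}}{Td}+\mI)^{-1}$, inserting the symmetric factors $\mA^{-1/2}$ on each side so that the random kernel is expressed through $\tilde\mQ^2$; the deterministic equivalents $\tilde\mQ^2\leftrightarrow\bar{\tilde\mQ}_2(\cdot)$ and $\tilde\mQ\leftrightarrow\bar{\tilde\mQ}$ then close the computation. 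The training risk is treated by the same mechanism but with $\mZ$ evaluated on the training data itself, which is why an extra self-interaction term $-\frac1{Tn}\tr(\mW^\top\mA^{-1/2}\bar{\tilde\mQ}_2(\mI_{Td})\mA^{-1/2}\mW)$ appears: it is the correction accounting for the projection of the training labels onto their own design, i.e.\ the difference between in-sample fit and generalization.

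The main obstacle, and the step requiring the most care, will be justifying the passage from the random bilinear and trace forms to their deterministic equivalents \emph{uniformly}, and in particular controlling the \emph{square} resolvents $\mQ^2$ and $\tilde\mQ^2$. These do not follow directly from the first-order deterministic equivalent $\mQ\leftrightarrow\bar\mQ$; they require a differentiation-in-the-regularization argument (writing $\mQ^2$ as a derivative $-\partial_z(z\mapsto\mQ(z))$ and differentiating the fixed-point equation defining $\bar\mQ$) together with the concentration hypothesis of Assumption \ref{def:concentration_measure} to guarantee that the linear forms concentrate around their means at the relevant rate. Establishing these second-order equivalents and verifying that the test data vector $\vx^{(t)}$, being independent of the training design, contributes its covariance $\mSigma^{(t)}$ cleanly through the deterministic kernel is where I expect the technical weight of the argument to lie; the reduction itself and the noise/signal bookkeeping are, by contrast, routine once the equivalents of Lemma \ref{pro:deterministic_equivalent} are in hand.
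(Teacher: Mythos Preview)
Your proposal is correct and follows essentially the same route as the paper: substitute the closed-form estimator and the generative model, expand the squared error so that signal/noise cross terms vanish, convert $\mZ\mQ$-type products into coresolvent expressions via $\mA^{1/2}\mZ\mQ=\tilde\mQ\mA^{1/2}\mZ$ (and, for the training risk, the identity $\frac{1}{Td}\mZ\mQ^2\mZ^\top=\mA^{-1/2}(\tilde\mQ-\tilde\mQ^2)\mA^{-1/2}$, which is what produces the $\bar{\tilde\mQ}-\bar{\tilde\mQ}_2(\mI_{Td})$ difference you correctly anticipate), and then invoke Lemma~\ref{pro:deterministic_equivalent}. The one divergence is your suggestion to obtain the second-order equivalents $\bar{\tilde\mQ}_2(\cdot)$ and $\bar\mQ_2$ by differentiating the resolvent in the regularization parameter; the paper instead establishes Lemma~\ref{pro:deterministic_equivalent} directly via leave-one-out (Schur complement) manipulations, but since the theorem's proof treats the lemma as a black box this is a side issue rather than a gap.
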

The proof of the theorem is deferred to Appendix \ref{sec:asymptotic_risks}. Note that derivations of the asymptotic risk for the training data are more complex and involve computing deterministic equivalents (Lemma \ref{pro:deterministic_equivalent} of Appendix \ref{app:subsec_lemma}), which is a powerful tool commonly used in Random Matrix Theory. In the derived theorem, We achieve a stronger convergence result compared to the almost surely convergence. Specifically, we prove a concentration result for the training and test risk with an exponential convergence rate for sufficiently large values of \( n \) and \( d \).


\subsection{Error Contribution Analysis}
To gain theoretical insights, we analyze \eqref{eq:test-err} consisting of the signal and the noise components.

\textbf{Signal Term. \phantom{.}} The signal term can be further approximated, up to some constants as
$\tr(\mW^\top (\mA\mSigma + \mI)^{-2}\mW)$ with $\mSigma = \sum_{t=1}^T\frac{n_t}{d}\mSigma^{(t)}$. The matrix \((\mA \mSigma + \mI)^{-2}\) plays a crucial role in amplifying the signal term \(\tr(\mW^\top \mW)\), which in turn allows the test risk to decrease. The off-diagonal elements of \(\mA \mSigma + \mI)^{-2}\) amplify the cross terms (\(\tr(\mW_v^\top \mW_t)\) for \(t \neq v\)), enhancing the multi-task aspect, while the diagonal elements amplify the independent terms (\(\|\mW_t\|_2^2\)). This structure is significant in determining the effectiveness of multi-task learning.

Furthermore, both terms decrease with an increasing number of samples $n_t$, smaller values of $\gamma_t$, and a larger value of $\lambda$. The cross term, which is crucial for multi-task learning, depends on the matrix $\mSigma_t^{-1}\mSigma_v$. This matrix represents the shift in covariates between tasks. When the features are aligned (i.e., $\mSigma_t^{-1}\mSigma_v = \mI_d$), the cross term is maximized, enhancing multi-task learning. However, a larger Fisher distance between the covariances of the tasks results in less favorable correlations for multi-task learning.

\textbf{Noise term. \phantom{.}} Similar to the signal term, the noise term can be approximated, up to some constants, as $\tr(\mSigma_N(\mA^{-1} + \mSigma)^{-1})$. However, there is a major difference between the way both terms are expressed in the test risk. The noise term does not include any cross terms because the noise across different tasks is independent. In this context, only the diagonal elements of the matrix are significant. This diagonal term increases with the sample size and the value of $\lambda$. It is responsible for what is known as negative transfer. As the diagonal term increases, it negatively affects the transfer of learning from one task to another. This is a critical aspect to consider in multi-task learning scenarios.

\subsection{Simplified Model for Clear Insights}

In this section, we specialize the theoretical analysis to the simple case of two tasks (\(T=2\)). We assume that the tasks share the same identity covariance and that \(\gamma_1 = \gamma_2 \equiv \gamma\). Under these conditions, the test risk can be approximated, up to some constants, as
\begin{align*}
    \mathcal{R}_{test}^\infty = \mD_{IL}\left(\|\mW_1\|_2^2 + \mW_2\|_2^2 \right) + \mC_{MTL}\mW_1^\top\mW_2 + \mN_{NT}\tr\mSigma_n
\end{align*} 
where the diagonal term (independent learning) \(\mD_{IL}\), the cross term (multi-task learning) \(\mC_{MTL}\), and the noise term (negative transfer) \(\mN_{NT}\) have closed-form expressions depending on \(\gamma\) and \(\lambda\):
\begin{align*}
    \mD_{IL} &= \frac{(c_0(\lambda+\gamma)+1)^2+c_0^2\lambda^2}{(c_0(\lambda+\gamma)+1)^2 - c_0^2\lambda^2}, \quad \mC_{MTL} = \frac{-2c_0\lambda(c_0(\lambda+\gamma)+1)}{(c_0(\lambda+\gamma)+1)^2 - c_0^2\lambda^2} \\
    \mN_{NT} &= \frac{(c_0(\lambda+\gamma)^2 + (\lambda+\gamma) - c_0\lambda^2)^2+\lambda^2}{\left((c_0(\lambda+\gamma)+1)^2 - c_0^2\lambda^2\right)^2}
\end{align*}
We recall that $c_0$ has been defined in the Assumption \ref{ass:growth_rate}. As previously mentioned, the test risk is primarily composed of two terms: the signal term and the noise term, which are in competition with each other. The more similar the tasks are, the stronger the signal term becomes. In the following plot, we illustrate how this competition can influence the risk. Depending on the value of the parameter $\lambda$ and the sample sizes, the risk can either decrease monotonically, increase monotonically, or exhibit a convex behavior. This competition can lead to an optimal value for $\lambda$, which interestingly has a simple closed-form expression that can be obtained by deriving the $\mathcal{R}_{test}^\infty$ w.r.t. $\lambda$ as follows (see details in Appendix \ref{app:optim_lambda}):
 \begin{equation*}
 \lambda^\star = \frac{n}{d}\textit{SNR} - \frac{\gamma}{2}, \text{ with } \textit{SNR} = \frac{\|\mW_1\|_2^2 + \mW_2\|_2^2}{\tr\mSigma_N} + \frac{\mW_1^\top\mW_2}{\tr\mSigma_N}.
 \end{equation*}

\begin{figure}[!t]
    \centering
    \begin{subfigure}[b]{0.32\textwidth}
        \centering
        \includegraphics[scale=0.2]{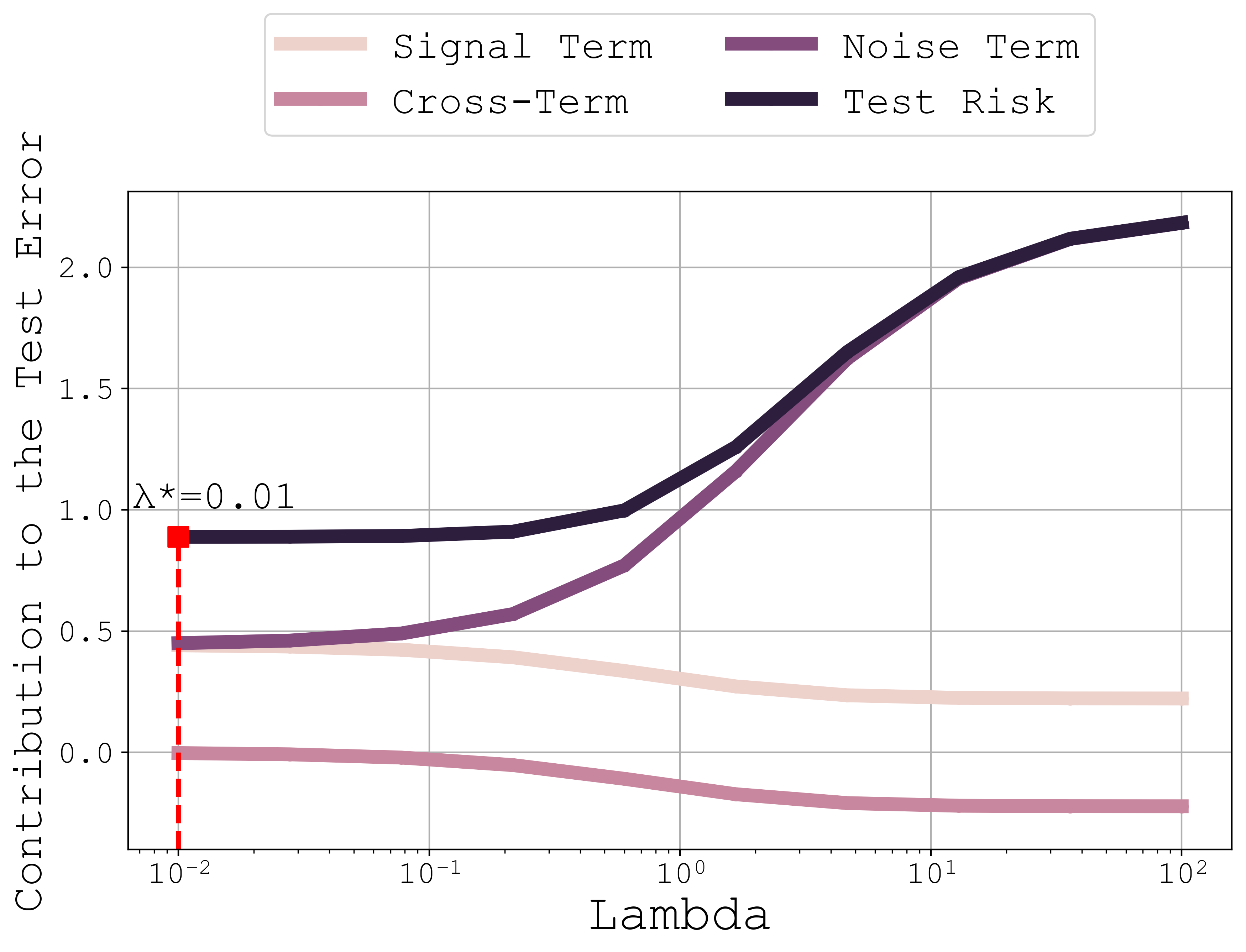}
        \caption{$\frac{n}{d}=0.5$}
    \end{subfigure}
    \hfill
    \begin{subfigure}[b]{0.32\textwidth}
        \centering
        \includegraphics[scale=0.2]{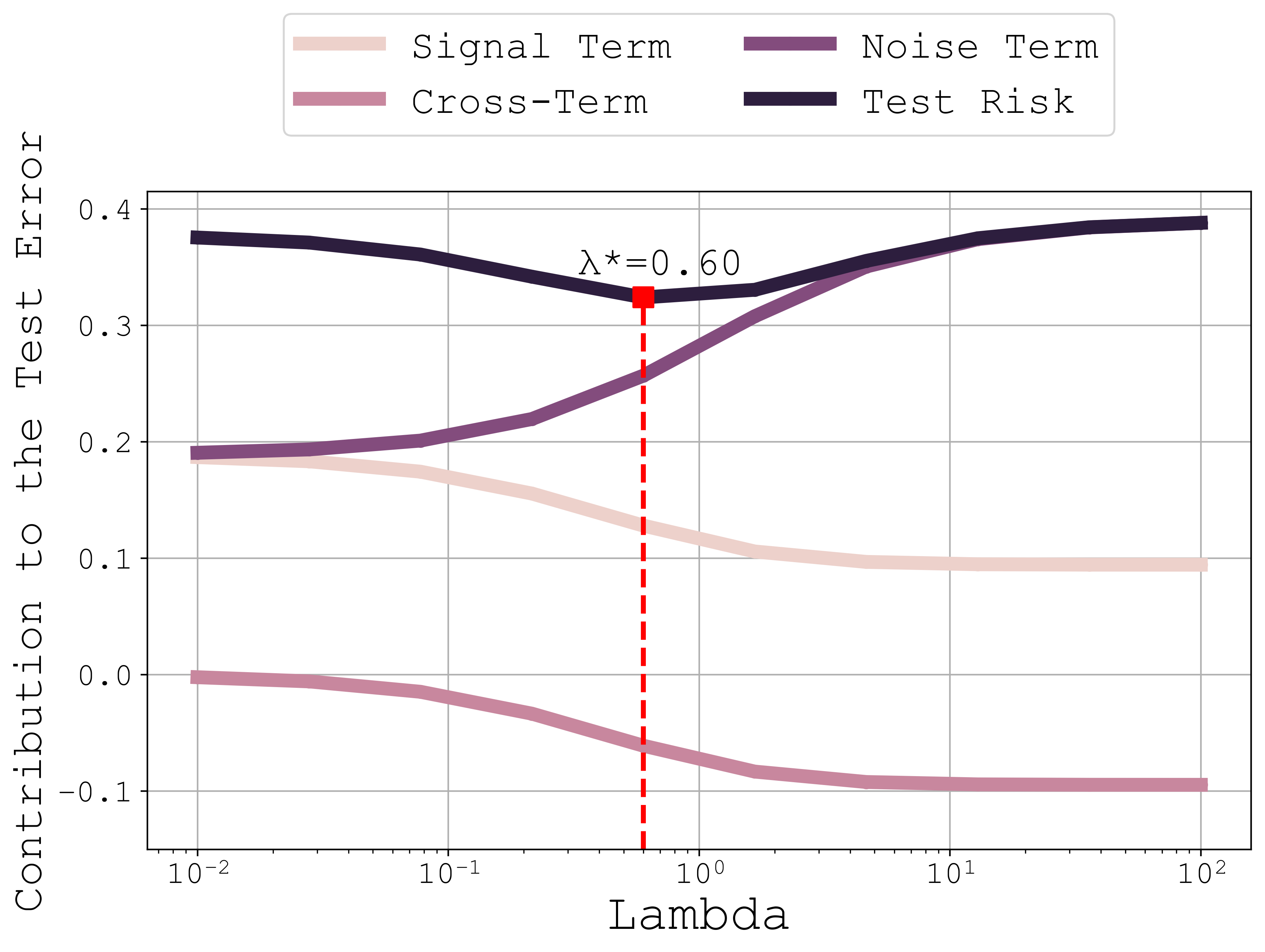}
        \caption{$\frac{n}{d}=1.5$}
    \end{subfigure}
    \hfill
    \begin{subfigure}[b]{0.32\textwidth}
        \centering
        \includegraphics[scale=0.2]{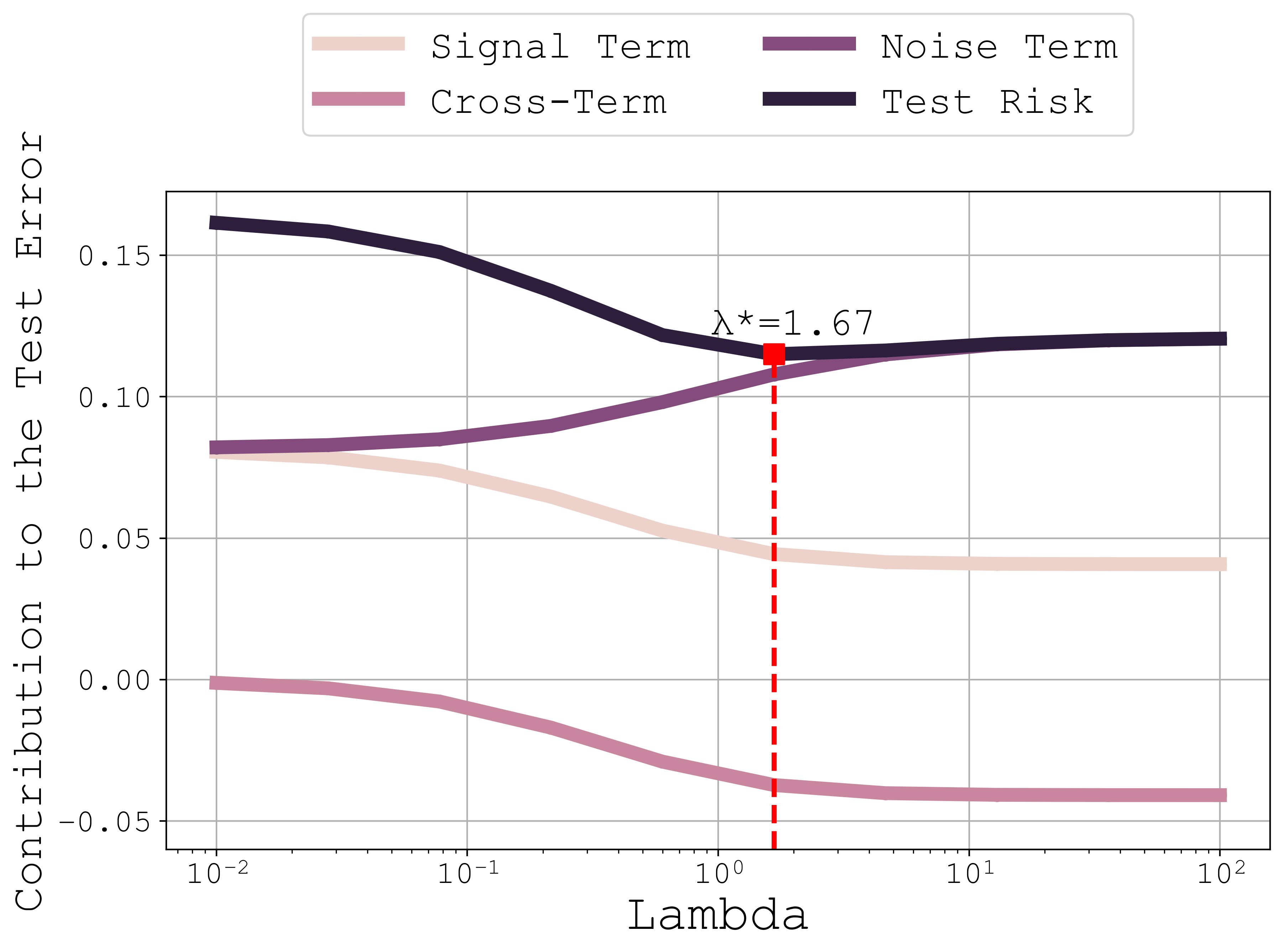}
        \caption{$\frac{n}{d}=2.5$}
    \end{subfigure}
    \caption{Test loss contributions $\mD_{IL}$, $\mC_{MTL}$, $\mN_{NT}$ across three sample size regimes. Test risk exhibits decreasing, increasing, or convex shapes based on the regime. Optimal values of $\lambda$ from theory are marked.}
\label{fig:increased_task}
\end{figure}


\subsection{Comparison between Empirical and Theoretical Predictions}
In this section, we compare the theoretical predictions with the empirical results. Our experiment is based on a two-task setting ($T=2$) defined as
$\mW_{1}\sim\mathcal{N}(0,I_p)$ with 
$\mW_2=\alpha \mW_1+\sqrt{1-\alpha^2}\mW_{1}^{\perp}$.
$\mW_{1}^{\perp}$ represents any vector orthogonal to $\mW_{1}$ and $\alpha\in[0,1]$. This setting allows us to adjust the similarity between tasks through $\alpha$.

Figure~\ref{fig:lambda_influence} shows a comparison of the theoretical and empirical classification errors for different values of $\lambda$, highlighting the error-minimizing value of $\lambda$. Despite the relatively small values of $n$ and $p$, there is a very precise match between the asymptotic theory and the practical experiment. This is particularly evident in the accurate estimation of the optimal value for $\lambda$.
\begin{figure}[h]
    \centering
    \includegraphics[scale=0.3]{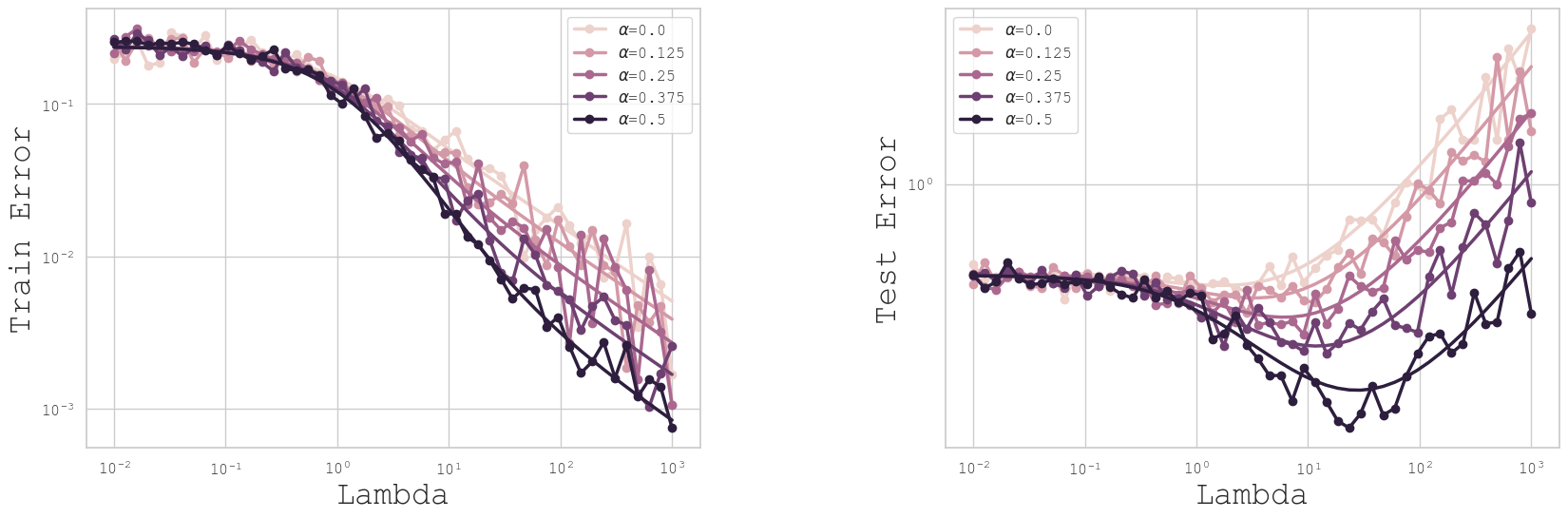}
    \caption{Empirical and theoretical train and test MSE as functions of the parameter $\lambda$ for different values of $\alpha$. The smooth curves represent the theoretical predictions, while the corresponding curves with the same color show the empirical results, highlighting that the empirical observations indeed match the theoretical predictions. }

    \label{fig:lambda_influence}
\end{figure}

\section{Hyperparameter Optimization in Practice}
\subsection{Empirical Estimation of Task-wise Signal, Cross Signal, and Noise}
All the quantities defined in Theorem \ref{th:main} are known except for the bilinear expressions $\frac{1}{Td} \tr(\mW^\top \mM \mW)$ and $\frac{1}{Td} \tr\left(\mSigma_N \mM\right)$. These quantities can be consistently estimated under Assumptions \ref{ass:growth_rate} as follows (see details in Section \ref{sec:estimation_quantities} of the Appendix) :
\begin{equation*}
    \frac{1}{Td}\tr\left(\mW^\top \mM \mW\right) - \zeta(\mM)\asto 0, \qquad
    \frac{1}{Td}\tr\left(\mSigma_N\mM\right) - \hat\sigma\tr\mM \asto 0
\end{equation*}
where $\zeta(\mM) = \frac {1}{Td}\tr(\hat\mW^\top \kappa^{-1}(\mM)\mM \hat\mW) - \frac {\hat\sigma}{Td}\tr\bar{\tilde \mQ}_2(\mA^{\frac 12}\kappa^{-1}(\mM)\mA^{\frac 12}) $.

We define the estimate of the noise as $\hat\sigma = \lim\limits_{\substack{\lambda\rightarrow0 \\ \gamma_t\rightarrow\infty}} \mathcal{R}_{train}^\infty$
and the function  $\kappa^{-1}$ is the functional inverse of the mapping $\kappa: \mathbb{R}^{Td\times Td} \rightarrow \mathbb{R}^{q\times q}$ defined as follows
\begin{equation*}
\kappa(\mM) = \mM - 2\mA^{-\frac 12}\bar{\tilde{\mQ}}\mA^{\frac 12}\mM + \mA^{-\frac 12}\bar{\tilde \mQ}_2(\mA^{\frac 12}\mM\mA^{\frac 12})\mA^{-\frac 12}- \frac{n}{(Td)^2}\mA^{-\frac 12}\mSigma\mA^{-\frac 12}\bar{\tilde \mQ}_2(\mA^{\frac 12}\mM\mA^{\frac 12}).
\end{equation*}

\subsection{Application to Multi-task Regression}
\label{sec:regression}
In our study, we apply the theoretical framework presented in our paper to a real-world regression problem, specifically, the \textit{Appliance Energy dataset} which aims to predict the total usage of a house. This dataset is a multivariate regression dataset containing $138$ time series each of dimension $24$. We select two of these features as 2 tasks to cast the problem as a multi-task learning regression problem.

\begin{figure}[!t]
    \centering
    \begin{subfigure}[b]{0.49\textwidth}
        \centering
        \includegraphics[scale=0.25]{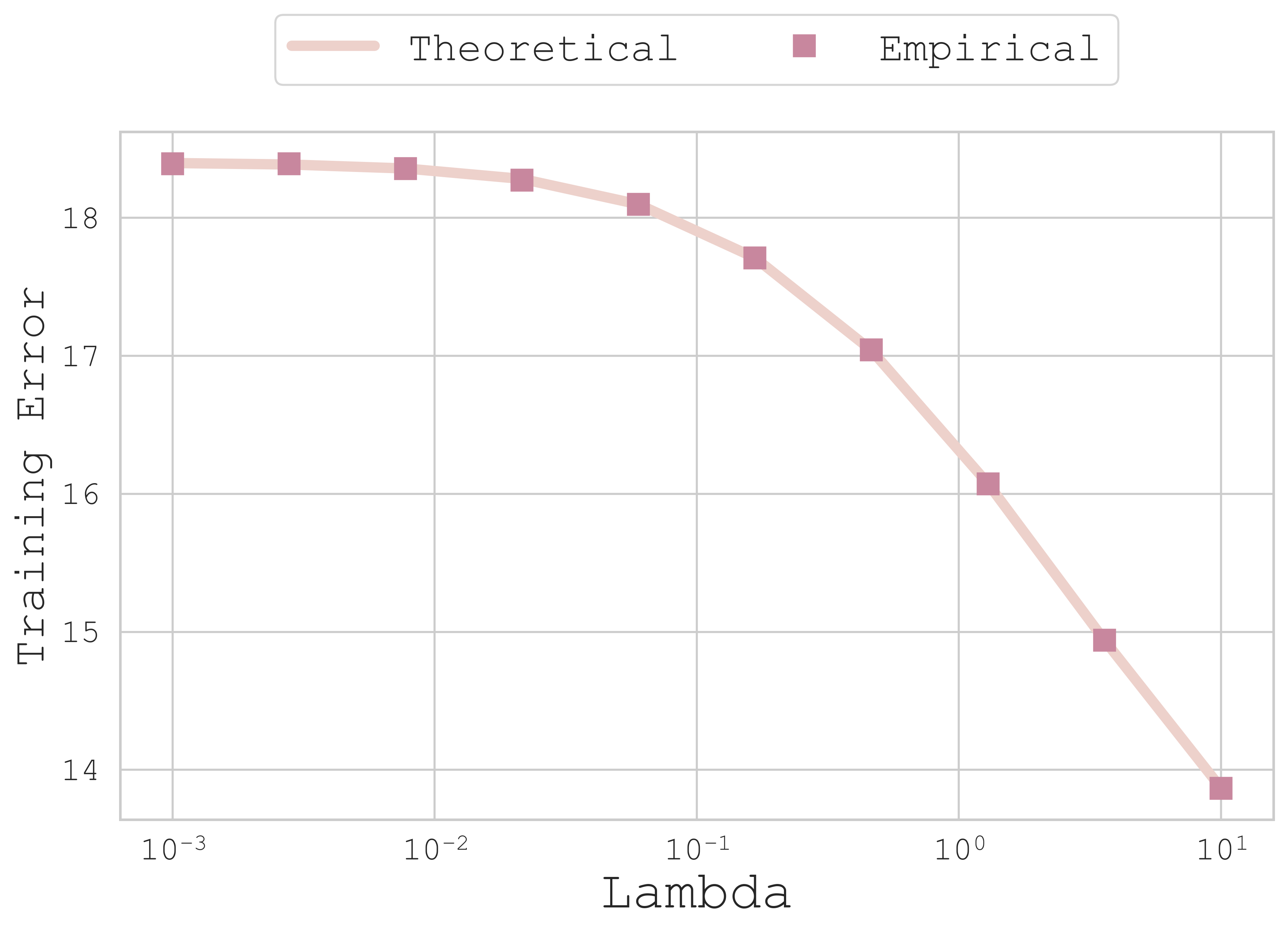}
        \caption{Training MSE}
    \end{subfigure}
    \hfill
    \begin{subfigure}[b]{0.49\textwidth}
        \centering
        \includegraphics[scale=0.25]{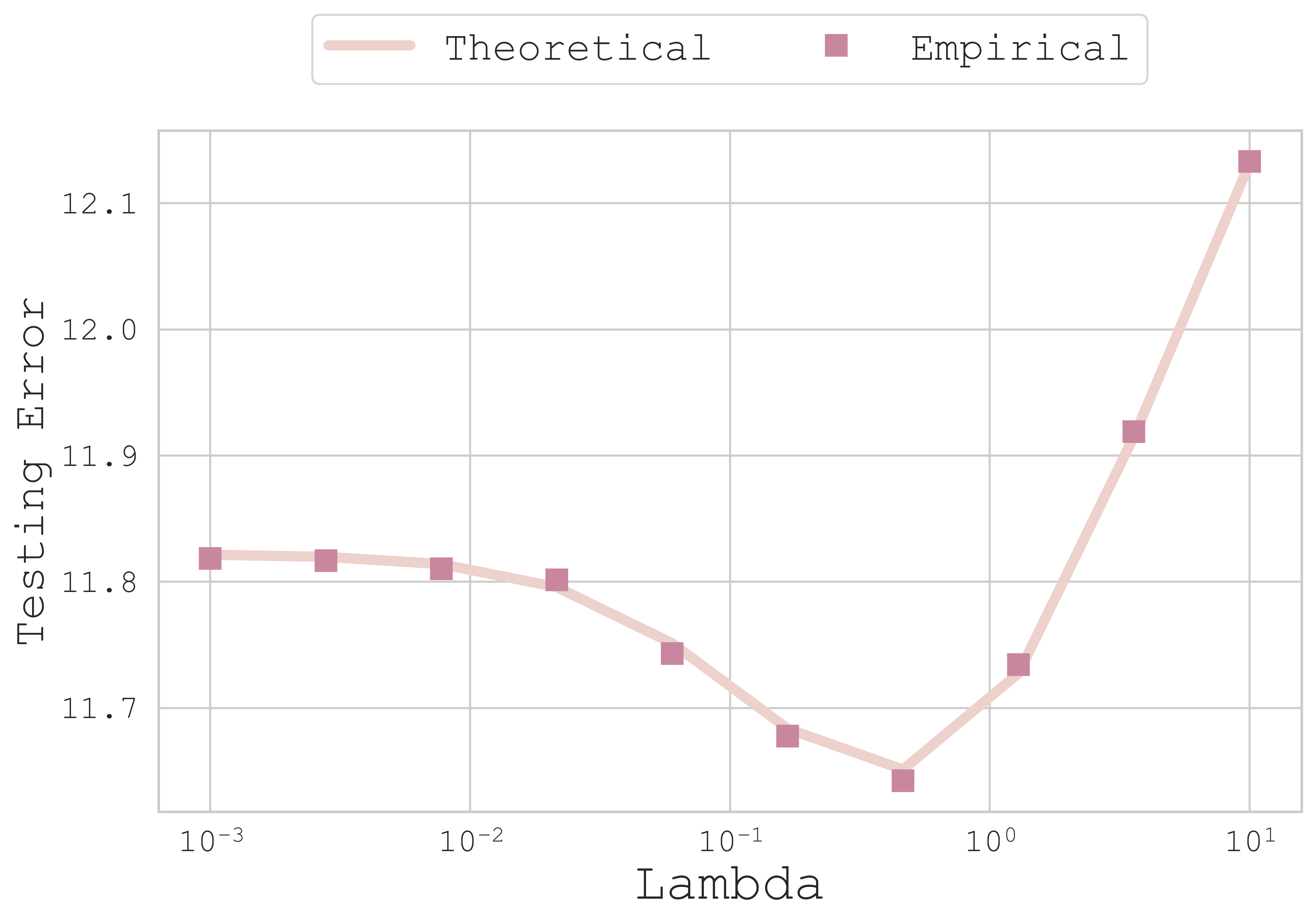}
        \caption{Testing MSE}
    \end{subfigure}
    \caption{Theoretical vs Empirical MSE as function of regularization parameter $\lambda$. Close fit between the theoretical and the empirical predictions which underscores the robustness of the theory in light of varying assumptions as well as the accuracy of the suggested estimates. We consider the first two channels as the the two tasks and $d=144$. $95$ samples are used for the training and $42$ samples are used for the test.}
\label{fig:multi_task_real_worls}
\end{figure}

Figure \ref{fig:multi_task_real_worls} presents a comparison between the theoretical predictions and empirical outcomes of our proposed linear framework. Despite the assumptions made in the main body of the paper, the theoretical predictions closely align with the experimental results. This demonstrates that our estimates are effective in practice and provide a reliable approximation of the optimal regularization.

In essence, our theoretical framework, when applied to real-world multi-task learning regression problems, yields practical and accurate estimates, thereby validating its effectiveness and applicability.


\subsection{Application to Multivariate Time Series Forecasting}
\label{sec:multivariate_forecasting}

Our theoretical framework is applied in the context of Multivariate Time Series Forecasting, with related work detailed in Appendix \ref{app:related_work}. We previously applied this framework in a linear setting, and now aim to evaluate its empirical validity in the non-linear setting of neural networks. The results presented in this section represent the best test MSE, assuming the ability to find the optimal lambda value, which can be considered as an oracle scenario. A study of these limitations can be found in Appendix \ref{app:limitations}.

\textbf{Motivation. \phantom{.}} Our approach is applied to the MTSF setting for several reasons. Firstly, many models currently used are essentially univariate, where predictions for individual series are simply concatenated without exploiting the multivariate information inherent in traditional benchmarks. Given that these benchmarks are designed for multivariate forecasting, leveraging multivariate information should yield better results. Secondly, our theoretical framework can benefit this domain, as most predictive models use a linear layer on top of the model to project historical data of length $d$ for predicting the output of length $q$. This characteristic aligns well with our method, making it a promising fit for enhancing forecasting accuracy.
 
\textbf{Our approach. \phantom{.}}  We propose a novel method for MTSF by modifying the loss function to incorporate both individual feature transformations \( f_t \) and a shared transformation \( f_0 \). Each univariate-specific transformation \( f_t \) is designed to capture the unique dynamics of its respective feature, while \( f_0 \) serves as a common transformation applied across all features to capture underlying patterns shared among them.

We consider a neural network \( f \) with inputs \( \mX = [\mX^{(1)}, \mX^{(2)}, \ldots, \mX^{(T)}] \), where \( T \) is the number of channels and $\mX^{(t)} \in \mathbb{R}^{n\times d}$. For a univariate model without MTL regularization, we predict \( \mY = [\mY^{(1)}, \mY^{(2)}, \ldots, \mY^{(T)}] = [f_1(\mX^{(1)}), \ldots, f_T(\mX^{(T)})] \) and $\mY^{(t)} \in \mathbb{R}^{n\times q}$. We compare these models with their corresponding versions that include MTL regularization, formulated as: \( f_t^{MTL}(\mX^{(t)}) = f_t(\mX^{(t)}) + f_0(\mY) \) with \( f_t : \mathbb{R}^{n \times d} \rightarrow \mathbb{R}^{n \times q} \) and \( f_0 : \mathbb{R}^{n \times qT} \rightarrow \mathbb{R}^{n \times q} \). We define our regularized loss as follows:

\begin{align*}
    \mathcal{L}(\mX,\mY)&=\sum_{t=1}^T \|\mY^{(t)}- f_t^{MTL}(\mX^{(t)}) \|_F^2 + \lambda \|f_0(\mX) \|_F^2+\sum_{t=1}^T \gamma_t \|f_t(\mX^{(t)}\|_F^2 ,\quad \forall t\in\{1,\ldots,T\}.
\end{align*}

where $\mY^{(t)}$ are the true predictions, $f_t$ represents the univariate model for each channel $t$, and $\lambda$ is our regularization parameter, for which we have established a closed form in the case of linear $f_t$. $f_0$ serves a role equivalent to $W_0$, which was defined in our theoretical study and allows for the regularization of the common part. This component  can be added at the top of a univariate model. The parameters $\gamma_t$ enable the regularization of the specialized parts $f_t$.

In our setup, \( f_t \) is computed in a similar way as in the model without regularization and \( f_0 \) is computed by first flattening the concatenation of the predictions of \( \mX^{(t)} \), then applying a linear projection leveraging common multivariate information before reshaping.
The loss function is specifically designed to balance fitting the multivariate series using \( f_0 \) and the specific channels using \( f_t \). This approach enhances the model's generalization across various forecasting horizons and datasets. More details on our regularized loss function can be found in Appendix \ref{sec:architecture}

\textbf{Results. \phantom{.}} We present experimental results on different forecasting horizons, using common benchmark MTSF datasets, the characteristics of which are outlined in the Appendix \ref{app:datasets}. Our models include \texttt{PatchTST}~\cite{nie2023patchtst}, known to be on par with state-of-the-art in MTSF while being a univariate model, a univariate \texttt{DLinear} version called \texttt{DLinearU} compared to its multivariate counterpart \texttt{DLinearM}~\cite{zeng2022effective}, and a univariate \texttt{Transformer}~\cite{ilbert2024unlocking} with temporal-wise attention compared to the multivariate state-of-the-art models \texttt{SAMformer}~\cite{ilbert2024unlocking}  and \texttt{iTransformer}~\cite{liu2024itransformer} . Table \ref{tab:all_results} provides a detailed comparison of the test mean squared errors (MSE) for different MTSF models, emphasizing the impact of MTL regularization. Models with MTL regularization are compared to their versions without regularization, as well as \texttt{SAMformer}  and \texttt{iTransformer}. All the experiments can be found in Appendix \ref{app:add_xp}.

\def\thick{0.8}

\begin{table*}[!t]
\centering
\caption{MTL regularization results. Algorithms marked with $^\dagger$ are state-of-the-art multivariate models and serve as baseline comparisons. All others are univariate. We compared the models with MTL regularization to their corresponding versions without regularization. Each MSE value is derived from 3 different random seeds. MSE values marked with * indicate that the model with MTL regularization performed significantly better than its version without regularization, according to a Student's t-test with a p-value of 0.05. MSE values are in \textbf{bold} when they are the best in their row, indicating the top-performing models.}
\renewcommand{\arraystretch}{1.2}
\label{tab:all_results}
\scalebox{0.6}{
\begin{tabular}{cccccccccccc}
\toprule[\thick pt]%
\multicolumn{1}{c}{\multirow{2}{*}{Dataset}} & \multicolumn{1}{c}{\multirow{2}{*}{$H$}} & \multicolumn{3}{c}{with MTL regularization} & \multicolumn{6}{c}{without MTL regularization}\\ 
\cmidrule(r{10pt}l{5pt}){3-5} \cmidrule(r{10pt}l{5pt}){6-12} 
\multicolumn{1}{c}{} & \multicolumn{1}{c}{} & \multicolumn{1}{c}{\texttt{PatchTST}} &  \multicolumn{1}{c}{\texttt{DLinearU}} & \multicolumn{1}{c}{\texttt{Transformer}} & \multicolumn{1}{c}{\texttt{PatchTST}} &  \multicolumn{1}{c}{\texttt{DLinearU}} & \multicolumn{1}{c}{\texttt{DLinearM}} & \multicolumn{1}{c}{\texttt{Transformer}} &
\multicolumn{1}{c}{\texttt{SAMformer$^\dagger$}} &
\multicolumn{1}{c}{\texttt{iTransformer$^\dagger$}} \\
\midrule[\thick pt]
\multirow{4}{*}{\rotatebox[origin=c]{90}{ETTh1}} & $96$  & 0.385  & $\textbf{0.367}^*$  &  0.368   & 0.387 &  0.397&0.386 &0.370 & 0.381 & 0.386 &\\
& $192$ & 0.422  & $\textbf{0.405}^*$  & $0.407^*$  & 0.424 &   0.422& 0.437 & 0.411  & 0.409   & 0.441\\
& $336$ & $0.433^*$ & 0.431   & 0.433 &0.442  & 0.431& 0.481 & 0.437 &   \textbf{0.423}  & 0.487\\ 
& $720$ &$0.430^*$  & 0.454    & $0.455^*$ & 0.451 & 0.428&0.519 & 0.470  &  \textbf{0.427}  & 0.503\\
\midrule[\thick pt]
\multirow{4}{*}{\rotatebox[origin=c]{90}{ETTh2}} & $96$ & 0.291 & $\textbf{0.267}^*$  & 0.270 &0.295  &  0.294& 0.333 &0.273  &0.295 & 0.297  \\
& $192$ & $0.346^*$ & $\textbf{0.331}^*$   & 0.337 & 0.351 & 0.361& 0.477& 0.339 &  0.340  & 0.380\\
& $336$ &$\textbf{0.332}^*$  & 0.367   & $0.366^*$ & 0.342 & 0.361&0.594 & 0.369  &  0.350 & 0.428\\ 
& $720$ &$\textbf{0.384}^*$  & 0.412    & $0.405^*$ &0.393  & 0.395& 0.831&  0.428 &  0.391  & 0.427\\
\midrule[\thick pt]
\multirow{4}{*}{\rotatebox[origin=c]{90}{Weather}} & $96$ & \textbf{0.148} & $0.149^*$   & $0.154^*$ & 0.149 &  0.196& 0.196& 0.170 & 0.197   & 0.174\\
& $192$ & \textbf{0.190} & $0.206^*$ & $0.198^*$ & 0.193 &    0.243&0.237& 0.214 &   0.235  & 0.221\\
& $336$ & $\textbf{0.242}^*$ & $0.249^*$ & 0.258 &0.246  &   0.283& 0.283& 0.260  & 0.276   & 0.278\\ 
& $720$ &$\textbf{0.316}^*$  & $0.326^*$   & 0.331 & 0.322 & 0.339&0.345  & 0.326 &  0.334  & 0.358\\
\bottomrule[\thick pt]%
\end{tabular}
}
\end{table*}

Adding MTL regularization improves the performance of \texttt{PatchTST}, \texttt{DLinearU}, and \texttt{Transformer} in most cases. When compared to state-of-the-art multivariate models, the MTL-regularized models are often competitive. \texttt{SAMformer} is outperformed by at least one MTL-regularized method per horizon and dataset, except for ETTh1 with horizons of 336 and 720. \texttt{iTransformer} is consistently outperformed by at least one MTL-regularized methods regardless of the dataset and horizon.

The best performing methods are \texttt{PatchTST} and \texttt{DLinearU} with MTL regularization. These models not only outperform their non-regularized counterparts, often significantly as shown by Student's t-tests with a p-value of 0.05, but also surpass state-of-the-art multivariate models like \texttt{SAMformer} and \texttt{Transformer}. This superior performance is indicated by the bold values in the table.

Finally, MTL regularization enhances the performance of univariate models, making them often competitive with state-of-the-art multivariate methods like \texttt{SAMformer} and \texttt{iTransformer}. This approach seems to better captures shared dynamics among tasks, leading to more accurate forecasts.


\section{Conclusions and Future Works}
In this article, we have explored linear multi-task learning by deriving a closed-form solution for an optimization problem that capitalizes on the wealth of information available across multiple tasks. Leveraging Random Matrix Theory, we have been able to obtain the asymptotic training and testing risks, and have proposed several insights into high-dimensional multi-task learning regression. Our theoretical analysis, though based on a simplified model, has been effectively applied to multi-task regression and multivariate forecasting, using both synthetic and real-world datasets. We believe that our work lays a solid foundation for future research, paving the way for using random matrix theory with more complex models, such as deep neural networks, within the multi-task learning framework. 

\bibliography{references}
\bibliographystyle{apalike}

\appendix

\newpage

\textbf{\LARGE Appendix}

\paragraph{Roadmap.} This appendix provides the technical details omitted in the main paper. It starts with an overview of the setup considered in the paper, especially the zero-mean assumption in \textbf{Section \ref{sec:setup_appendix}}. \textbf{Section \ref{sec:optimization_derivation}} offers a detail computation for $\hat\mW_t$ and $\hat\mW_0$ . \textbf{Section \ref{sec:asymptotic_risks}} explains the theoretical steps for deriving the training and test risks, as well as the deterministic equivalents. \textbf{Section \ref{sec:intuitions}} discusses the technical tools used to derive the main intuitions presented by the theory. \textbf{Section \ref{sec:estimation_quantities}} focuses on the derivation of the estimations of the main quantities involved in the training and test risks. \textbf{Section \ref{sec:time_series}} complements the experimental study by presenting additional experiments. Finally, \textbf{Section \ref{app:limitations}} deals with the limitations of our approach in a non-linear setting.

\addtocontents{toc}{\protect\setcounter{tocdepth}{2}}

\renewcommand*\contentsname{\Large Table of Contents}

\tableofcontents

\section{Multi-task Learning setup}
\label{sec:setup_appendix}

\subsection{On the zero-mean assumption}
\label{subsec:zero-mean assumption}
We would like to note that we are assuming that both the noise $\vepsilon$ and the feature $\vx_i^{(t)}$ have zero mean. This is a common assumption in many statistical models and it simplifies the analysis. However, this assumption is not restrictive. In practice, if the data or the response variable are not centered, we can always preprocess the data by subtracting the mean. This preprocessing step brings us back to the zero-mean setting that we consider in our theoretical analysis.

\section{Minimization Problem}
\label{sec:optimization_derivation}
\subsection{Computation of $\hat\mW_t$ and $\hat\mW_0$}
The proposed multi task regression finds $\hat \mW=[\hat\mW_1^\top,\ldots,\hat\mW_k^\top]^\top\in\mathbb{R}^{dT\times q}$ which solves the following optimization problem using the additional assumption of relatedness between the tasks ($\mW_t=\mW_0+\mV_t$ for all tasks $t$): 
\begin{align}
    \min_{(\mW_0,\mV)\in\mathbb{R}^d\times \mathbb{R}^{d\times T}\times \mathbb{R}^T} \mathcal{J}(\mW_0,\mV)
\end{align}
where
\begin{align*}
    \mathcal{J}(\mW_0,\mV)&\equiv \frac1{2\lambda} \tr\left(\mW_0^\top \mW_0\right)+\frac 1{2}\sum_{t=1}^T \frac{ \tr\left(\mV_t^\top \mV_t\right)}{\gamma_t}+\frac 1{2}\sum_{t=1}^T\tr\left(\vxi_t^\top\vxi_t\right) \\
    \vxi_t&=\mY^{(t)}-\frac{{{}\mX^{(t)}}^{\top}\mW_t}{\sqrt{Td}},\quad \forall t\in\{1,\ldots,T\}.
\end{align*}

The Lagrangian introducing the lagrangian parameters for each task $t$, $\valpha_t \in \mathbb{R}^{n_t\times q}$ reads as 
\begin{align*}
    \mathcal{L}(\mW_0, \mV_t, \vxi_t, \valpha_t) &= \frac{1}{2\lambda}\tr\left(\mW_0^\top \mW_0\right) + \frac 12 \sum_{t=1}^T \frac{\tr\left(\mV_t^\top \mV_t\right)}{\gamma_t} + \frac 12 \sum_{t=1} \tr\left(\vxi_t^\top\vxi_t\right)\\
    & +\sum_{t=1}^T \tr \left(\valpha_t^\top\left(\mY^{(t)}- \frac{{{}\mX^{(t)}}^{\top}(\mW_0 + \mV_t)}{\sqrt{Td}} - \vxi_t\right)\right)
\end{align*}
Differentiating with respect to the unknown variables $\hat\mW_0$, $\hat\mV_t$, $\vxi_t$, $\valpha_t$ and $\vb_t$, we get the following system of equation
\begin{align*}
    \frac{1}{\lambda}\hat\mW_0 - \sum_{t=1}^T \frac{\mX^{(t)}\valpha_t}{\sqrt{Td}} &= 0\\
    \frac{1}{\gamma_t} \hat\mV_t -  \frac{\mX^{(t)}\valpha_t}{\sqrt{Td}} &= 0\\
     \vxi_t - \valpha_t &= 0\\
    \mY^{(t)} - \frac{{{}\mX^{(t)}}^\top\hat\mW_0}{\sqrt{Td}} - \frac{{{}\mX^{(t)}}^\top \hat\mV_t}{\sqrt{Td}} - \vxi_t &=0
\end{align*}
Plugging the expression of $\hat\mW_0$, $\hat\mV_t$ and $\vxi_t$ into the expression of $\mY^{(t)}$ gives
\begin{align*}
    \mY^{(t)} &= \lambda\sum_{t=1}^T \frac{{{}\mX^{(t)}}^\top {\mX^{(t)}}}{Td}\valpha_t + \gamma_t \frac{{{}\mX^{(t)}}^\top \mX^{(t)}}{Td}\valpha_t + \valpha_t
\end{align*}
which can be rewritten as
\begin{align*}
\mY^{(t)}&=\left(\lambda+\gamma_t\right)\frac{{{}\mX^{(t)}}^\top \mX^{(t)}}{Td}\valpha_t+\lambda\sum\limits_{v\neq t}\frac{{{}\mX^{(t)}}^\top \mX^{(v)}}{Td}\valpha_v+\valpha_t
\end{align*}

With $\mY=[{{}\mY^{(1)}}^\top,\ldots,{{}\mY^{(T)}}^\top]^\top\in\mathbb{R}^{n\times q}$, $\valpha=[\valpha_1^\top,\ldots,\valpha_k^\top]^\top\in\mathbb{R}^{n\times q}$, ${\mZ=\sum_{t=1}^T \ve_{t}^{[T]}{\ve_{t}^{[T]}}^\top\otimes \mX^{(t)}}\in\mathbb{R}^{Td\times n}$, this system of equations can be written under the following compact matrix form:
\begin{align*}
\mQ^{-1}\valpha &=\mY
\end{align*}
with $\mQ=\left(\frac{\mZ^\top \mA\mZ}{Td}+\mI_{n}\right)^{-1}\in\mathbb{R}^{n\times n}$, and $\mA=\left(\mathcal{D}_{\vgamma}+\lambda\mathbb{1}_T\mathbb{1}_T^\top\right)\otimes \mI_d\in\mathbb{R}^{Td\times Td}$.

Solving for $\valpha$ then gives:
\begin{align*}
    \valpha &= \mQ\mY
\end{align*}
Moreover, using $\hat\mW_t=\hat\mW_0+\hat\mV_t$, the expression of $\mW_t$ becomes:
\begin{align*}
\hat\mW_t=\left({\ve_{t}^{[T]}}^\top\otimes \mI_{d}\right)\frac{\mA\mZ\valpha}{\sqrt{Td}},\\
\hat\mW_0=\left(\mathbb{1}_T^\top\otimes \lambda \mI_{d}\right)\frac{\mZ\valpha}{\sqrt{Td}}.
\end{align*}

%
\section{Lemma~\ref{pro:deterministic_equivalent} and proof with Random Matrix Theory}
\label{app:deterministic_eq}

\subsection{Lemma \ref{pro:deterministic_equivalent}}
\label{app:subsec_lemma}

\begin{lemma}[Deterministic equivalents for $\tilde{\mQ}$, $\tilde{\mQ}\mM\tilde{\mQ}$ and $\mQ^2$ for any $\mM\in \mathbb{R}^{n\times n}$]
\label{pro:deterministic_equivalent}
Under the concentrated random vector assumption for each feature vector $\vx_i^{(t)}$ and under the growth rate assumption (Assumption \ref{ass:growth_rate}), for any deterministic $\mM\in \mathbb{R}^{n\times n}$, we have the following convergence: 
\begin{align*}
    \tilde{\mQ} \leftrightarrow \bar{\tilde{\mQ}}, \qquad \tilde{\mQ}\mM\tilde{\mQ} \leftrightarrow \bar{\tilde \mQ}_2(\mM), \qquad \qquad \mQ^2 \leftrightarrow \bar \mQ_2
\end{align*}
where $\bar{\tilde \mQ}_2$, $\bar{\tilde{\mQ}}$ and $\bar \mQ_2$ are defined as follows
\begin{align*}
    &\bar{\tilde{\mQ}} = \left(\sum_{t=1}^{T} \frac {c_0\mC^{(t)}}{1+\delta_t} + \mI_{Td}\right)^{-1}, \quad \delta_t = \frac{1}{Td}\tr\left(\mSigma^{(t)}\bar{\tilde{\mQ}}\right), \quad \mC^{(t)} = \mA^{\frac 12}\left(\ve_t^{[T]}\otimes \mSigma^{(t)}\right)\mA^{\frac 12} \\
    & \bar{\tilde \mQ}_2(\mM) = \bar{\tilde{\mQ}}\mM\bar{\tilde{\mQ}} + \frac{1}{Td}\sum_{t=1}^T
\frac{ d_t}{1+\delta_t}\bar{\tilde{\mQ}}\mC^{(t)}\bar{\tilde{\mQ}}, \qquad
    \vd = \left(\mI_{T} - \frac{1}{ Td}\Psi\right)^{-1}\Psi(\mM)\in \mathbb R^{T}\\
& \bar \mQ_2 = \mI_{n} - \text{Diag}_{t \in [T]}(v_t \mI_{n_t}), \quad v_t = \frac {1}{Td}\frac{\tr(\mC^{(t)}\bar{\tilde{\mQ}})}{(1+\delta_t)^2}  +\frac{1}{Td}\frac{\tr\left(\mC^{(t)}\bar{\tilde \mQ}_2(\mI_n)\right)}{(1+\delta_t)^2}
\end{align*}
where 
\begin{align*}
    \Psi(M) =\left(\frac {n_t}{Td} \frac{\tr\left(\mC^{(t)}\bar{\tilde{\mQ}}\mM\bar{\tilde{\mQ}}\right)}{1+\delta_t}\right)_{t\in[T]} \in \mathbb R^{T}, \qquad
    \Psi = \left(\frac{n_t}{Td}\frac{\tr\left(\mC^{(t)}\bar{\tilde{\mQ}}\mC^{(t')}\bar{\tilde{\mQ}}\right)}{(1+\delta_t)(1+\delta_{t'})}\right)_{t, t'\in[T]} \in \mathbb R^{T\times T}, \qquad
\end{align*}
\end{lemma}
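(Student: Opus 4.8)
The plan is to prove the three equivalences in sequence, first establishing the first-order equivalent $\bar{\tilde{\mQ}}$ for $\tilde{\mQ}$ and then bootstrapping to the second-order objects $\bar{\tilde \mQ}_2(\mM)$ and $\bar \mQ_2$. The structural observation that drives everything is the block rank-one form of the coresolvent: since the column of $\mZ$ attached to sample $i$ of task $t$ is $\ve_t^{[T]}\otimes\vx_i^{(t)}$, one has $\frac{1}{Td}\mA^{\frac12}\mZ\mZ^\top\mA^{\frac12}=\frac{1}{Td}\sum_{t=1}^T\sum_{i=1}^{n_t}\vz_i^{(t)}{\vz_i^{(t)}}^\top$ with $\vz_i^{(t)}=\mA^{\frac12}(\ve_t^{[T]}\otimes\vx_i^{(t)})$, so $\tilde{\mQ}$ is the resolvent of a sum of independent rank-one terms whose task-dependent ``variance profile'' is exactly $\mC^{(t)}$.

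First I would establish concentration. Because each $\vx_i^{(t)}$ is a concentrated random vector (Assumption~\ref{def:concentration_measure}) and the map from the data to any bounded linear form $u(\tilde{\mQ})$ is Lipschitz (the resolvent has operator norm at most $1$ and its differential in the data is uniformly bounded), the Lipschitz-concentration property transfers to $u(\tilde{\mQ})$, giving $u(\tilde{\mQ})-\mathbb{E}[u(\tilde{\mQ})]\to 0$ with the exponential rate advertised in Theorem~\ref{th:main}. This reduces each claim to identifying the deterministic limit of $\mathbb{E}[\tilde{\mQ}]$, $\mathbb{E}[\tilde{\mQ}\mM\tilde{\mQ}]$ and $\mathbb{E}[\mQ^2]$. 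For the first-order limit I would run a leave-one-column-out argument: writing $\tilde{\mQ}$ and the resolvent $\tilde{\mQ}_{-i}$ with $\vz_i^{(t)}$ deleted, the Sherman--Morrison identity expresses $\tilde{\mQ}$ through $\tilde{\mQ}_{-i}$ and the scalar ${\vz_i^{(t)}}^\top\tilde{\mQ}_{-i}\vz_i^{(t)}$; concentration of quadratic forms (the non-Gaussian substitute for Stein's lemma, again justified by Assumption~\ref{def:concentration_measure}) replaces this scalar by $\frac{1}{Td}\tr(\mC^{(t)}\bar{\tilde{\mQ}})=\delta_t$ up to vanishing error, and summing the rank-one contributions yields the self-consistent system $\bar{\tilde{\mQ}}=(\sum_t \frac{c_0\mC^{(t)}}{1+\delta_t}+\mI_{Td})^{-1}$ with $\delta_t=\frac{1}{Td}\tr(\mSigma^{(t)}\bar{\tilde{\mQ}})$. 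Existence and uniqueness of the $\delta_t$ follow from a standard contraction/monotonicity (Stieltjes-transform) argument on the positive orthant.

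Finally I would obtain the two second-order objects. For $\tilde{\mQ}\mM\tilde{\mQ}$, $\mathbb{E}[\tilde{\mQ}\mM\tilde{\mQ}]$ equals the naive term $\bar{\tilde{\mQ}}\mM\bar{\tilde{\mQ}}$ plus a correction coming from the fluctuations of the quadratic forms ${\vz_i^{(t)}}^\top\tilde{\mQ}\mM\tilde{\mQ}\vz_i^{(t)}$; collecting these corrections task by task produces the linear system $(\mI_T-\frac{1}{Td}\Psi)\vd=\Psi(\mM)$, whose solution $\vd$ enters the stated formula for $\bar{\tilde \mQ}_2(\mM)$ through the terms $\bar{\tilde{\mQ}}\mC^{(t)}\bar{\tilde{\mQ}}$. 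For $\mQ^2$ I would pass from the $Td\times Td$ side to the $n\times n$ side via the push-through identity $\mQ=\mI_n-\frac{1}{Td}\mZ^\top\mA^{\frac12}\tilde{\mQ}\mA^{\frac12}\mZ$, square it, and feed in the already-established equivalents for $\tilde{\mQ}$ and $\tilde{\mQ}_2(\cdot)$, which yields the block-diagonal form $\bar \mQ_2=\mI_n-\mathrm{Diag}_{t\in[T]}(v_t\mI_{n_t})$ with the stated $v_t$.

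The main obstacle is the non-Gaussian, merely concentrated, nature of the data: every place where a Gaussian argument would invoke Stein's lemma or exact integration by parts must instead be handled by concentration of quadratic forms together with a careful accounting showing that the accumulated error terms are $o(1)$ uniformly. Within that, the hardest single step is the fixed-point derivation for $\bar{\tilde{\mQ}}$ combined with controlling the cross-task coupling introduced by the shared $\lambda\mathbb{1}_T\mathbb{1}_T^\top$ block inside $\mA$, since this coupling is what prevents the problem from decoupling into $T$ independent single-task resolvents and forces the self-consistent vector/matrix equations for $\delta_t$ and $\vd$.
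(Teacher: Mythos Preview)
Your proposal is correct and follows essentially the same route as the paper: leave-one-out via Sherman--Morrison together with concentration of quadratic forms for $\bar{\tilde{\mQ}}$ and $\bar{\tilde{\mQ}}_2(\mM)$, and the push-through identity $\mQ=\mI_n-\frac{1}{Td}\mZ^\top\tilde{\mQ}\mZ$ (then squared and estimated entrywise) for $\bar{\mQ}_2$. The only notable difference in presentation is that the paper imports the first-order equivalent $\tilde{\mQ}\leftrightarrow\bar{\tilde{\mQ}}$ (and the moment bounds on $\vz_i^\top\tilde{\mQ}_{-i}\vz_i$) directly from \cite{louart2021spectral} rather than re-deriving the fixed point, so the bulk of its explicit work---and arguably the hardest step in its write-up---is the bilinear-form estimate leading to the $(\mI_T-\frac{1}{Td}\Psi)^{-1}$ system, not the cross-task coupling in $\bar{\tilde{\mQ}}$ that you single out.
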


\subsection{Deterministic equivalent of the resolvent $\tilde{\mQ}$}
\label{app:subsec_lemma_2}
The evaluation of the expectation of linear forms on $\tilde{\mQ}$ and $\tilde{\mQ}^2$ can be found in the literature. To find a result that meets exactly our setting, we will cite \cite{louart2021spectral} that is a bit more general since it treats cases where $\mathbb E[x_i^{(t)}] \neq 0$ for $t \in [T]$ and $i \in [n_t]$. Unlike the main paper, and to be more general, the study presented below is ``quasi asymptotic'' meaning that the results are true for finite value of $d,n$. Let us first rewrite the general required hypotheses, adapting them to our setting.
For that purpose, we consider in the rest of this paper a certain asymptotic $I \subset \{(d,n), d\in \mathbb N, n\in \mathbb N\} = \mathbb N^2$ satisfying:
\begin{align*}
     \{d, \exists\,n\in \mathbb N: \ (d,n) \in I\} = \mathbb N
     && \text{and} &&
     \{n, \exists\,d\in \mathbb N: \ (d,n) \in I\} = \mathbb N.
 \end{align*}
 such that $n$ and $d$ can tend to $\infty$ but with some constraint that is given in the first item of Assumption~\ref{ass:X} below.
 Given two sequences $(a_{d,n})_{d,n\in I}, (b_{d,n})_{d,n\in I}>0$, the notation $a_{d,n}\leq O(b_{d,n})$ (or $a\leq O(b)$) means that there exists a constant $C> 0$ such that for all $(d,n) \in I$, $a_{d,n} \leq C b_{d,n}$. 
\begin{assumption}\label{ass:X}
There exists some constants $C, c>0$ independent such that:
\begin{itemize}
   \item $n\leq O(d) $
   \item $\mZ = (\vz_1, \ldots, \vz_n)\in \mathbb R^{Td\times n}$ has independent columns
   \item for any $(d,n) \in I$, and any $f: \mathbb R^{Td\times n }\to \mathbb R$ $1$-Lipschitz for the euclidean norm:
   \begin{align*}
      \mathbb P \left( \left\vert f(\mZ) - \mathbb E [f(\mZ)] \right\vert \geq t \right)\leq C e^{-ct^2}.
   \end{align*}
   \item $\forall i \in \{n, \exists d\in \mathbb N, (d,n) \in I\}$: $\|\mathbb E [\vz_i]\| \leq O(1)$.
\end{itemize}

\end{assumption}

\begin{theorem}[\cite{louart2021spectral}, Theorem 0.9.]\label{the:concentration_resolvent}
   Given $T\in \mathbb N$, $\mZ \in \mathbb{R}^{Td\times n}$ and two deterministic $A\in \mathbb R^{Td\times Td}$, we note $\tilde \mQ \equiv (\frac{1}{Td}\mA^{\frac{1}{2}}\mZ\mZ^\top \mA^{\frac{1}{2}}+ I_{Td})^{-1}$. If $\mZ$ satisfies Assumption~\ref{ass:X} and $\mM\in \mathbb R^{Td\times Td}$ is a deterministic matrix satisfying $\|\mM\|_F\leq 1$, one has the concentration:
   \begin{align*}
      \mathbb P \left( \left\vert \tr(M\tilde\mQ) - \tr(M \bar{\tilde \mQ}_{\vdelta(\mS)}(\mS)) \right\vert \geq t \right)\leq C e^{-ct^2},
   \end{align*}
   where $\mS = (\mS_1,\ldots, \mS_n) = (\mathbb E[\vz_1\vz_1^\top], \ldots, \mathbb E[\vz_n\vz_n^\top])$, for $\vdelta \in \mathbb R^n$, $\bar{\tilde \mQ}_\vdelta$ is defined as:
   \begin{align*}
      \bar{\tilde \mQ}_\vdelta(\mS) = \left( \frac{1}{Td}\sum_{i\in [n]} \frac{\mA^{\frac{1}{2}}\mS_i\mA^{\frac{1}{2}}}{1 +  \vdelta_i} +  I_{Td} \right)^{-1},
   \end{align*}
   and $\vdelta(\mS)$ is the unique solution to the system of equations:
   \begin{align*}
      \forall i\in [n]: \quad \vdelta(\mS)_i = \frac{1}{n}\tr \left( \mA^{\frac{1}{2}}\mS_i \mA^{\frac{1}{2}}\bar{\tilde \mQ}_{\vdelta(\mS)} \right).
   \end{align*}
\end{theorem}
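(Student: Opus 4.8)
The plan is to prove the concentration in two stages: first show that the linear form $\tr(\mM\tilde{\mQ})$ concentrates sharply around its expectation $\mathbb{E}[\tr(\mM\tilde{\mQ})]$, and then identify this expectation with $\tr(\mM\bar{\tilde{\mQ}}_{\vdelta(\mS)}(\mS))$ up to an error that vanishes uniformly over the asymptotic $I$. This separation is natural: the first stage is a pure consequence of the Lipschitz concentration hypothesis on $\mZ$ (Assumption~\ref{ass:X}), whereas the second is the genuinely algebraic random-matrix computation and is where the deterministic equivalent $\bar{\tilde{\mQ}}_{\vdelta}(\mS)$ together with its fixed-point vector $\vdelta(\mS)$ is produced.

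For the first stage I would use the resolvent identity $\mathrm{d}\tilde{\mQ} = -\tilde{\mQ}\,(\mathrm{d}\tilde{\mQ}^{-1})\,\tilde{\mQ}$ to compute the differential of the map $\mZ \mapsto \tr(\mM\tilde{\mQ})$ and to bound the Frobenius norm of its gradient. Since $\|\tilde{\mQ}\|\le 1$ and the relation $\frac{1}{Td}\mA^{\frac12}\mZ\mZ^\top\mA^{\frac12}\tilde{\mQ} = \mI_{Td} - \tilde{\mQ}$ keeps the relevant operator norms bounded through the resolvent structure, the map is Lipschitz with a constant of order $\|\mM\|_F$, uniformly over $I$. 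Plugging this into the hypothesis $\mathbb{P}(|f(\mZ)-\mathbb{E}[f(\mZ)]|\ge t)\le Ce^{-ct^2}$ and using $\|\mM\|_F\le 1$ yields the stated sub-Gaussian tail for $\tr(\mM\tilde{\mQ})$ around its mean.

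The second stage is the crux, and I would carry it out by the leave-one-column-out (cavity) method. Writing $\tilde{\mQ}_{-i}$ for the resolvent built after removing the $i$-th column $\vz_i$, the Sherman--Morrison identity gives
\[
\tilde{\mQ} = \tilde{\mQ}_{-i} - \frac{1}{Td}\,\frac{\tilde{\mQ}_{-i}\mA^{\frac12}\vz_i\vz_i^\top \mA^{\frac12}\tilde{\mQ}_{-i}}{1 + \frac{1}{Td}\vz_i^\top \mA^{\frac12}\tilde{\mQ}_{-i}\mA^{\frac12}\vz_i}.
\]
Because $\tilde{\mQ}_{-i}$ is independent of $\vz_i$, a Hanson--Wright-type concentration inequality for concentrated vectors (again a consequence of Assumption~\ref{ass:X}) shows that the scalar $\frac{1}{Td}\vz_i^\top \mA^{\frac12}\tilde{\mQ}_{-i}\mA^{\frac12}\vz_i$ concentrates around $\frac{1}{Td}\tr(\mA^{\frac12}\mS_i\mA^{\frac12}\tilde{\mQ}_{-i})$, which is in turn close to $\vdelta(\mS)_i$. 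Summing these expansions over $i$ and comparing the two resolvents through the interpolation identity $\bar{\tilde{\mQ}}_{\vdelta} - \mathbb{E}[\tilde{\mQ}] = \bar{\tilde{\mQ}}_{\vdelta}\,\mathbb{E}\!\left[\tilde{\mQ}^{-1} - \bar{\tilde{\mQ}}_{\vdelta}^{-1}\right]\mathbb{E}[\tilde{\mQ}]$ bounds the difference $\mathbb{E}[\tr(\mM\tilde{\mQ})] - \tr(\mM\bar{\tilde{\mQ}}_{\vdelta}(\mS))$ by a residual of order $O(1/\sqrt{n})$. Existence and uniqueness of the fixed point $\vdelta(\mS)$ follow from a standard monotone/contraction argument on the positive cone, since the defining map is a standard interference function, so the system is well posed and the deterministic object is unambiguous.

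The main obstacle is the bookkeeping in the second stage: one must control the accumulation of the $n$ rank-one perturbation errors while simultaneously exploiting the two distinct uses of concentration (the global Lipschitz concentration of $\tilde{\mQ}$ as a function of $\mZ$, and the per-column Hanson--Wright concentration of the quadratic forms), and show that these combine into a single $O(1/\sqrt{n})$ deterministic-equivalent error rather than compounding. The non-centered setting adopted in \cite{louart2021spectral}, where $\mathbb{E}[\vz_i]\neq 0$ so that each $\mS_i = \mathbb{E}[\vz_i\vz_i^\top]$ carries a rank-one mean contribution, is precisely what makes both the quadratic-form concentration and the fixed-point analysis more delicate than in the zero-mean case.
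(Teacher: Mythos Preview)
The paper does not provide a proof of this theorem: it is stated as a direct citation of \cite[Theorem~0.9]{louart2021spectral} and is used as a black-box input to the subsequent arguments (in particular Theorems~\ref{the:estimation_produit_dependent_resolvent} and~\ref{the:estimation_Q}). There is therefore no ``paper's own proof'' to compare against, only the reference.

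Your sketch is nonetheless the standard and correct strategy for results of this type, and in particular it matches in spirit how the surrounding lemmas in the paper are handled. The two-stage decomposition (Lipschitz concentration of $\mZ\mapsto\tr(\mM\tilde{\mQ})$ around its mean, followed by a leave-one-out identification of the mean with the deterministic equivalent) is exactly the structure that underlies the cited reference, and the paper explicitly reuses the Sherman--Morrison identities~\eqref{eq:identity_Qi_Qmi} and the moment bounds of Lemma~\ref{lem:borne_VA} in its own proofs. One caveat on your first stage: the map $\mZ\mapsto\tr(\mM\tilde{\mQ})$ is not globally Lipschitz with constant $O(\|\mM\|_F)$ unless one also controls $\|\mA^{1/2}\mZ\|$, since the gradient has a factor of $\frac{1}{Td}\tilde{\mQ}\mA^{1/2}\mZ$ whose operator norm is not bounded by the resolvent alone; the usual fix is either to restrict to a high-probability event where $\|\mZ\|\le O(\sqrt{Td})$ (which follows from Assumption~\ref{ass:X}) or to use that $\|\frac{1}{\sqrt{Td}}\tilde{\mQ}\mA^{1/2}\mZ\|\le 1$ via $\tilde{\mQ}^{1/2}\cdot\frac{1}{\sqrt{Td}}\tilde{\mQ}^{1/2}\mA^{1/2}\mZ$. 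With that adjustment your outline is sound.
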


We end this subsection with some results that will be useful for next subsection on the estimation of bilinear forms on $\bar{\tilde \mQ}$.

\begin{lemma}[\cite{louart2021spectral}, Lemmas 4.2, 4.6]\label{lem:borne_VA}
   Under the setting of Theorem~\ref{the:concentration_resolvent}, given a deterministic vector $\vu\in \mathbb R^{Td}$ such that $\|\vu\|\leq O(1)$ and two deterministic matrices $\mU,\mV$ such that $\|\mU\|,\|\mV\|\leq O(1)$ and a power $r>0$, $r\leq O(1)$:
   \begin{itemize}
      \item $\mathbb E \left[ \left\vert \vu^\top\mU\tilde \mQ_{-i} \mV \vz_i \right\vert^r \right] \leq O(1)$
      \item $\mathbb E \left[ \left\vert \frac{1}{Td} \vz_i^\top\mU\tilde \mQ_{-i} \mV \vz_i - \mathbb E \left[ \frac{1}{Td} \tr \left( \Sigma_i \mU \bar{\tilde \mQ} \mB  \right) \right] \right\vert^r \right] \leq O \left( \frac{1}{d^{\frac{r}{2}}} \right)$.
   \end{itemize}
\end{lemma}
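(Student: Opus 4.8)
The plan is to exploit the leave-one-out independence at the heart of such estimates: by construction $\tilde\mQ_{-i}$ is assembled from $\{\vz_j\}_{j\neq i}$ and is therefore independent of $\vz_i$ under Assumption~\ref{ass:X}, so every bound can be obtained by first conditioning on $\tilde\mQ_{-i}$ and then averaging. Two elementary facts drive the whole argument. First, extracting a single column is a $1$-Lipschitz map of $\mZ$, so $\vz_i$ inherits the concentration of $\mZ$ and satisfies $\|\E[\vz_i]\|\leq O(1)$. Second, since $\mA\succ 0$ and $\tfrac{1}{Td}\mA^{1/2}\mZ_{-i}\mZ_{-i}^\top\mA^{1/2}$ is positive semidefinite, $\tilde\mQ_{-i}=(\tfrac{1}{Td}\mA^{1/2}\mZ_{-i}\mZ_{-i}^\top\mA^{1/2}+\mI_{Td})^{-1}$ has all eigenvalues in $(0,1]$, hence $\|\tilde\mQ_{-i}\|\leq 1$; combined with $\|\mU\|,\|\mV\|\leq O(1)$ this makes $\mM_i\equiv\mU\tilde\mQ_{-i}\mV$ a matrix of bounded operator norm, uniformly in the randomness.

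For the first item I would view $\vz_i\mapsto \vu^\top\mM_i\vz_i$, conditionally on $\tilde\mQ_{-i}$, as a linear and hence Lipschitz functional of the concentrated vector $\vz_i$, with Lipschitz constant $\|\mM_i^\top\vu\|\leq\|\vu\|\,\|\mU\|\,\|\mV\|\leq O(1)$. A Lipschitz image of a concentrated vector concentrates around its mean with an $O(1)$ subgaussian parameter, and the conditional mean $\vu^\top\mM_i\E[\vz_i]$ is $O(1)$ by the bound on $\|\E[\vz_i]\|$. Integrating the subgaussian tail bounds every fixed moment, giving $\E[|\vu^\top\mM_i\vz_i|^r\mid\tilde\mQ_{-i}]\leq O(1)$ uniformly, and averaging over $\tilde\mQ_{-i}$ yields the claim.

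The second item is a Hanson--Wright--type statement for concentrated vectors. Writing $X=\tfrac{1}{Td}\vz_i^\top\mM_i\vz_i$, the map $\vz\mapsto \tfrac{1}{Td}\vz^\top\mM_i\vz$ is only locally Lipschitz, with Lipschitz constant $\lesssim K\|\mM_i\|/(Td)$ on the ball $\{\|\vz\|\leq K\}$. Since $\|\vz_i\|$ concentrates at the scale $\sqrt{\E\|\vz_i\|^2}=O(\sqrt d)$, localizing at $K=O(\sqrt d)$ and controlling the exponentially small off-ball contribution yields $\P(|X-\E[X\mid\tilde\mQ_{-i}]|\geq t)\leq Ce^{-c\,d\,t^2}$, so that $\E[|X-\E[X\mid\tilde\mQ_{-i}]|^r]\leq O(d^{-r/2})$. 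It then remains to identify the conditional mean: $\E[X\mid\tilde\mQ_{-i}]=\tfrac{1}{Td}\tr(\mM_i\mSigma_i)+\tfrac{1}{Td}\E[\vz_i]^\top\mM_i\E[\vz_i]$, where the second term is $O(1/d)$. Replacing the random $\tilde\mQ_{-i}$ inside $\tfrac{1}{Td}\tr(\mSigma_i\mU\tilde\mQ_{-i}\mV)$ by the deterministic equivalent $\bar{\tilde\mQ}$ through Theorem~\ref{the:concentration_resolvent}, and using the rank-one resolvent identity $\tilde\mQ-\tilde\mQ_{-i}=-\tfrac{1}{Td}\tilde\mQ_{-i}\mA^{1/2}\vz_i\vz_i^\top\mA^{1/2}\tilde\mQ/(1+\tfrac{1}{Td}\vz_i^\top\mA^{1/2}\tilde\mQ_{-i}\mA^{1/2}\vz_i)$ to see that $\tilde\mQ_{-i}$ and $\tilde\mQ$ share the same deterministic equivalent (their traces against a bounded matrix differ by $O(1/d)$), produces $\tfrac{1}{Td}\tr(\mSigma_i\mU\bar{\tilde\mQ}\mV)$; assembling the pieces with Minkowski's inequality in $L^r$ delivers the stated rate.

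I expect the main obstacle to be the quadratic-form concentration at the sharp $O(d^{-r/2})$ rate, since $\vz\mapsto\vz^\top\mM_i\vz$ fails to be globally Lipschitz: one must carefully localize on a high-probability norm ball and show that the contribution of the complementary event is negligible at every moment, which is precisely where the Gaussian-like tail of the concentration hypothesis is used. A secondary difficulty is the bookkeeping needed to transfer the deterministic equivalent from the full resolvent $\tilde\mQ$ to the leave-one-out resolvent $\tilde\mQ_{-i}$ uniformly over $i$, which relies on the rank-one perturbation identity and the stability of $\bar{\tilde\mQ}$ under such perturbations.
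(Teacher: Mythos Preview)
The paper does not prove this lemma at all: it is imported verbatim as a citation of \cite{louart2021spectral}, Lemmas~4.2 and~4.6, and is used as a black box in the subsequent proof of Theorem~\ref{the:estimation_produit_dependent_resolvent}. There is therefore no ``paper's own proof'' to compare against.

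Your sketch is nonetheless a sound reconstruction of how such bounds are obtained. The two ingredients you isolate---(i) Lipschitz concentration of $\vz_i\mapsto \vu^\top\mU\tilde\mQ_{-i}\mV\vz_i$ conditionally on $\tilde\mQ_{-i}$, using $\|\tilde\mQ_{-i}\|\leq 1$ and the leave-one-out independence, and (ii) a localized-Lipschitz / Hanson--Wright argument for the quadratic form at scale $\|\vz_i\|=O(\sqrt d)$ giving an $O(1/\sqrt d)$ effective Lipschitz constant and hence $O(d^{-r/2})$ moments---are exactly the standard ones in the concentration-of-measure approach to resolvent estimates, and they match in spirit the derivations in \cite{louart2021spectral}. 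Your identification of the centering via $\E[X\mid\tilde\mQ_{-i}]=\tfrac{1}{Td}\tr(\mS_i\mU\tilde\mQ_{-i}\mV)+O(1/d)$ and the replacement of $\tilde\mQ_{-i}$ by $\bar{\tilde\mQ}$ through Theorem~\ref{the:concentration_resolvent} plus the rank-one identity is also correct; note, incidentally, that the ``$\mB$'' appearing in the lemma's displayed centering is a typographical slip and should read $\mV$, which is how you have interpreted it.
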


\subsection{Deterministic equivalent of bilinear forms of the resolvent}
\label{app:subsec_lemma_3}
To simplify the expression of the following theorem, we take $\mA=\mI_{Td}$. One can replace $\mZ$ with $\mA^{\frac{1}{2}}\mZ$ to retrieve the result necessary for the main paper.
\begin{theorem}\label{the:estimation_produit_dependent_resolvent}
   Under the setting of Theorem~\ref{the:concentration_resolvent}, with $\mA = \mI_{Td}$, one can estimate for any deterministic matrices $\mU, \mV\in \mathbb R^{Td}$ such that $\|\mU\|, \|\mV\|\leq O(1)$ and any deterministic vector $\vu, \vv \in \mathbb R^{Td}$ such that $\|\vu\|,\|\vv\|\leq 1$, if one notes $\mB = \frac{1}{ Td} \mV$ or $\mB = \vu\vv^\top$, one can estimate:
   \begin{align}\label{eq:identity_QQ}
      \left\vert \mathbb E \left[ \tr(\mB \tilde \mQ \mU \tilde \mQ) \right] - \Psi(\mU,\mB) - \frac{1}{ Td}\Psi(\mU)^\top \left( \mI_{n} - \frac{1}{ Td}\Psi \right)^{-1}\Psi(\mB) \right\vert \leq O \left( \frac{1}{\sqrt {d}} \right)
   \end{align}
   where we noted:
   \begin{itemize}
       \item $\bar{\tilde \mQ} \equiv \bar{\tilde \mQ}_{\vdelta}(\mS)$, $\vdelta = \vdelta(\mS)$,
      \item $\Psi \equiv \frac{1}{ Td}\left( \frac{\tr \left( \mS_i  \bar{\tilde \mQ}  \mS_j \bar{\tilde \mQ} \right)}{(1+\delta_i)(1+\delta_j)} \right)_{i,j \in [n]}\in \mathbb R^{n,n}$
      \item $\forall \mU \in \mathbb R^{n\times n}$ : $\Psi(\mU) \equiv \frac{1}{ Td}\left( \frac{\tr \left( \mU  \bar{\tilde \mQ}  \mS_i \bar{\tilde \mQ} \right)}{1+\delta_i} \right)_{i \in [n]}\in \mathbb R^{n}$
      \item $\forall \mU, \mV \in \mathbb R^{n\times n}$ : $\Psi(\mU, \mV) \equiv \frac{1}{ Td}\tr \left( \mU  \bar{\tilde \mQ} \mV \bar{\tilde \mQ} \right) \in \mathbb R$
    \end{itemize}  
\end{theorem}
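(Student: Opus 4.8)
The plan is to prove the estimate in two stages: first reduce the random quantity $\tr(\mB\tilde{\mQ}\mU\tilde{\mQ})$ to its expectation, and then compute that expectation by a leave-one-out expansion that closes into a finite-dimensional self-consistent linear system. For the reduction, I would observe that the map $\mZ\mapsto\tr(\mB\tilde{\mQ}\mU\tilde{\mQ})$ is Lipschitz for the Euclidean norm with a constant of order $O(1/\sqrt d)$ whenever $\|\mB\|,\|\mU\|\le O(1)$, since $\tilde{\mQ}$ and its $\mZ$-derivatives are uniformly bounded in operator norm ($\|\tilde{\mQ}\|\le 1$). The concentration hypothesis of Assumption~\ref{ass:X}, exactly as exploited in Theorem~\ref{the:concentration_resolvent}, then yields $|\tr(\mB\tilde{\mQ}\mU\tilde{\mQ})-\mathbb{E}[\tr(\mB\tilde{\mQ}\mU\tilde{\mQ})]|\le O(1/\sqrt d)$ with high probability, so it suffices to estimate the expectation. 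Both cases $\mB=\tfrac1{Td}\mV$ and $\mB=\vu\vv^\top$ can be treated simultaneously, the rank-one case relying in addition on the anisotropic moment bounds of Lemma~\ref{lem:borne_VA}.

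To compute the expectation (with $\mA=\mI_{Td}$) I would start from the resolvent identity $\tilde{\mQ}=\bar{\tilde{\mQ}}-\bar{\tilde{\mQ}}(\tilde{\mQ}^{-1}-\bar{\tilde{\mQ}}^{-1})\tilde{\mQ}$, where $\tilde{\mQ}^{-1}-\bar{\tilde{\mQ}}^{-1}=\frac1{Td}\sum_{i\in[n]}\big(\vz_i\vz_i^\top-\tfrac{\mS_i}{1+\delta_i}\big)$, giving
\[
\tr(\mB\tilde{\mQ}\mU\tilde{\mQ}) = \tr(\mB\bar{\tilde{\mQ}}\mU\tilde{\mQ}) - \frac{1}{Td}\sum_{i\in[n]}\tr\Big(\mB\bar{\tilde{\mQ}}\big(\vz_i\vz_i^\top-\tfrac{\mS_i}{1+\delta_i}\big)\tilde{\mQ}\mU\tilde{\mQ}\Big).
\]
The first trace concentrates to $\tr(\mB\bar{\tilde{\mQ}}\mU\bar{\tilde{\mQ}})$, recovering the leading term $\Psi(\mU,\mB)$ of the statement. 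For each summand I would apply Sherman--Morrison, $\tilde{\mQ}\vz_i=\tilde{\mQ}_{-i}\vz_i/(1+\tfrac1{Td}\vz_i^\top\tilde{\mQ}_{-i}\vz_i)$ with $\tilde{\mQ}_{-i}$ the resolvent with column $i$ deleted, and use $\tfrac1{Td}\vz_i^\top\tilde{\mQ}_{-i}\vz_i\to\delta_i$ from Lemma~\ref{lem:borne_VA}. The $\vz_i\vz_i^\top$ and $\tfrac{\mS_i}{1+\delta_i}$ contributions cancel at leading order; after taking expectation and replacing $\vz_i\vz_i^\top$ by $\mS_i$ under the trace, the surviving piece is proportional to $\frac1{Td}\sum_i\frac{1}{1+\delta_i}\tr\big(\mS_i\,\mathbb{E}[\tilde{\mQ}\mU\tilde{\mQ}]\,\mB\bar{\tilde{\mQ}}\big)$, in which the very object $\mathbb{E}[\tilde{\mQ}\mU\tilde{\mQ}]$ reappears on the right-hand side.

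This self-referential structure is exactly the linearization of the fixed point defining $\bar{\tilde{\mQ}}$ in Theorem~\ref{the:concentration_resolvent}, and closing it is what produces the Neumann factor. Projecting the recursion onto the functionals $\{\tfrac1{Td(1+\delta_i)}\tr(\mS_i\,\cdot\,\bar{\tilde{\mQ}})\}_{i\in[n]}$ reduces it to a finite linear system $\xi=\Psi(\mU)+\frac1{Td}\Psi\,\xi$ for an auxiliary vector $\xi\in\mathbb{R}^{n}$, whose solution $\xi=(\mI_n-\frac1{Td}\Psi)^{-1}\Psi(\mU)$ supplies the factor $(\mI_n-\frac1{Td}\Psi)^{-1}$; substituting $\xi$ back and pairing against $\Psi(\mB)$ yields the off-diagonal correction $\frac1{Td}\Psi(\mU)^\top(\mI_n-\frac1{Td}\Psi)^{-1}\Psi(\mB)$, completing the identity~\eqref{eq:identity_QQ}.

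The hard part will be twofold. First, one must verify that every leave-one-out replacement (of $\tilde{\mQ}$ by $\tilde{\mQ}_{-i}$, of $\tfrac1{Td}\vz_i^\top\tilde{\mQ}_{-i}\vz_i$ by $\delta_i$, and of $\vz_i\vz_i^\top$ by $\mS_i$ inside traces) costs only $O(1/\sqrt d)$, and that the $n=O(d)$ such errors aggregate to $O(1/\sqrt d)$ rather than accumulating; this is where the moment estimates of Lemma~\ref{lem:borne_VA}, the uniform bounds $\|\tilde{\mQ}\|,\|\bar{\tilde{\mQ}}\|\le 1$, and Cauchy--Schwarz control of the fluctuations are used repeatedly. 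Second, and more delicate, one must invert $\mI_n-\frac1{Td}\Psi$ with an $O(1)$ operator-norm bound, which requires the spectral radius of $\frac1{Td}\Psi$ to stay bounded away from $1$ uniformly in $(d,n)\in I$. This stability estimate holds because $\frac1{Td}\Psi$ is, up to conjugation by $\mathrm{Diag}(1+\delta_i)$, the Jacobian at the attractive fixed point $\vdelta(\mS)$ of the self-consistent map of Theorem~\ref{the:concentration_resolvent}, hence a contraction in the relevant weighted norm; making this non-degeneracy rigorous is the crux of the argument.
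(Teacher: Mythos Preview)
Your proposal is correct and follows essentially the same route as the paper: expand $\tilde{\mQ}-\bar{\tilde{\mQ}}$ via the resolvent identity into a sum over columns, apply the Sherman--Morrison leave-one-out identities together with Lemma~\ref{lem:borne_VA} to isolate the surviving term, and close the resulting self-consistent linear system $\theta(\mV)=\Psi(\mV)+\tfrac{1}{Td}\Psi\,\theta(\mV)$ by inverting $\mI_n-\tfrac{1}{Td}\Psi$. Two minor remarks: the initial concentration step is unnecessary since the statement already concerns the expectation, and your careful discussion of the invertibility of $\mI_n-\tfrac{1}{Td}\Psi$ actually goes further than the paper, which simply assumes it.
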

If there exist $T<n$ dinstinct matrices $\mC_1, \ldots, \mC_T$ such that:
    \begin{align*}
       \left\{ \mS_1, \ldots , \mS_n \right\}
       = \left\{ \mC_1, \ldots, \mC_T \right\},
    \end{align*}
and if we denote $\forall t \in [T]$ $n_t = \#\{i\in [n] \ | \ \mS_i = \mC_t\}$ and:
\begin{align*}
&\hspace{2cm}P\equiv \left( I_{T} - \left(\frac{n_tn_v}{( Td)^2}\frac{\tr \left( \mS_t  \bar{\tilde \mQ}  \mS_v \bar{\tilde \mQ} \right)}{(1+\delta_t)(1+\delta_v)} \right)_{t,v \in [T]} \right)^{-1}\in \mathbb R^{T,T}\\
    &\forall \mU \in \mathbb R^{Td\times Td}: \quad \bar{\tilde \mQ}_2(\mU)\equiv  \bar{\tilde \mQ} \mU  \bar{\tilde \mQ} + \frac{1}{(Td)^2}\sum_{t,v=1}^T  \frac{ \tr(\mS_t \bar{\tilde \mQ} \mU  \bar{\tilde \mQ}) P_{t,v}   \bar{\tilde \mQ} \mS_v  \bar{\tilde \mQ}}{(1+\delta_t)(1+\delta_v)},
\end{align*}
the result of Theorem~\ref{the:estimation_produit_dependent_resolvent} rewrites:
   \begin{align}\label{eq:identity_QQ}
      \left\Vert \mathbb E \left[ \tilde \mQ \mU \tilde \mQ \right] - \bar{\tilde \mQ}_2(\mU) \right\Vert \leq O \left( \frac{1}{\sqrt {d}} \right)
   \end{align}
\begin{proof}
   Given $i\in [n]$, let us note $\mZ_{-i} = (\vz_1, \ldots, \vz_{i-1},0, \vz_{i+1},\ldots, \vz_n)$ and $\tilde \mQ_{-i} = (\frac{1}{Td}\mZ_{-i}\mZ_{-i}^\top +\mI_{Td})^{-1}$, then we have the identity:
   \begin{align}\label{eq:identity_Qi_Qmi}
      \tilde \mQ - \tilde \mQ_{-i} = \frac{1}{Td} \tilde \mQ \vz_i\vz_i^\top\tilde \mQ_{-i}
      &&\text{and}&&
      \tilde \mQ\vz_i = \frac{\tilde \mQ_{-i}\vz_i}{1 + \frac{1}{Td}\vz_i^\top \tilde \mQ_{-i}\vz_i}.
   \end{align}
   Given $\vu,\vv\in \mathbb R^{Td }$, such that $\|\vu\|,\|\vv\|\leq 1$, let us express:
   \begin{align}\label{eq:initial_eq}
      \mathbb E \left[ \frac{1}{Td}\vu^\top \left( \tilde \mQ - \bar{\tilde \mQ} \right) \mU \tilde \mQ  \vv \right]
      = \frac{1}{n} \sum_{i=1}^n  \mathbb E \left[ \vu^\top\tilde \mQ \left( \frac{\mS_i}{1 + \vdelta_i} - \vz_i\vz_i^\top\right) \bar{\tilde \mQ}  \mU \tilde \mQ \vv  \right]\\
   \end{align}  
   First, given $i \in [n]$, let us estimate thanks to~\eqref{eq:identity_Qi_Qmi}:
   \begin{align*}
       \mathbb E \left[ \vu^\top\tilde \mQ \mS_i \bar{\tilde \mQ}  \mU \tilde \mQ \vv  \right]
       &=\mathbb E \left[ \vu^\top\tilde \mQ_{-i} \mS_i \bar{\tilde \mQ}  \mU \tilde \mQ \vv  \right] - \frac{1}{Td}\mathbb E \left[ \vu^\top\tilde \mQ\vz_i\vz_i^\top\tilde \mQ_{-i} \mS_i \bar{\tilde \mQ}  \mU \tilde \mQ \vv  \right] \\
   \end{align*}
   H\"older inequality combined with Lemma~\ref{lem:borne_VA}
   allows us to bound:
   \begin{align*}
      \frac{1}{Td}\left\vert \mathbb E \left[ \vu^\top\tilde \mQ\vz_i\vz_i^\top\tilde \mQ_{-i} \mS_i \bar{\tilde \mQ}  \mU \tilde \mQ \vv  \right]  \right\vert 
      &\leq\frac{1}{Td} \mathbb E \left[ \left\vert \vu^\top\tilde \mQ\vz_i \right\vert^2\right]^{\frac{1}{2}} \mathbb E \left[ \left\vert \vz_i^\top\tilde \mQ_{-i} \mS_i \bar{\tilde \mQ}  \mU \tilde \mQ \vv \right\vert^2  \right]^{\frac{1}{2}} 
      \leq  O \left( \frac{1}{d} \right),
   \end{align*}
   one can thus deduce:
   \begin{align}\label{eq:Spart}
        \mathbb E \left[ \vu^\top\tilde \mQ \mS_i \bar{\tilde \mQ}  \mU \tilde \mQ \vv  \right]
       &=\mathbb E \left[ \vu^\top\tilde \mQ_{-i} \mS_i \bar{\tilde \mQ}  \mU \tilde \mQ \vv  \right] + O \left( \frac{1}{d} \right)
       =\mathbb E \left[ \vu^\top\tilde \mQ_{-i} \mS_i \bar{\tilde \mQ}  \mU \tilde \mQ_{-i} \vv  \right] + O \left( \frac{1}{d} \right).
   \end{align}
   Second, one can also estimate thanks to Lemma~\ref{eq:identity_Qi_Qmi}:
   \begin{align*}
        \mathbb E \left[ \vu^\top\tilde \mQ \vz_i\vz_i^\top \bar{\tilde \mQ}  \mU \tilde \mQ \vv  \right]
       &=\mathbb E \left[ \frac{\vu^\top\tilde \mQ_{-i} \vz_i\vz_i^\top \bar{\tilde \mQ}  \mU \tilde \mQ \vv}{1+\frac{1}{Td}\vz_i^\top\mQ_{-i} \vz_i}  \right]
       &=\mathbb E \left[ \frac{\vu^\top\tilde \mQ_{-i} \vz_i\vz_i^\top \bar{\tilde \mQ}  \mU \tilde \mQ \vv}{1+\delta_i}  \right] + O \left( \frac{1}{\sqrt d} \right),
   \end{align*}
   again thanks to H\"older inequality combined with Lemma~\ref{lem:borne_VA} that allow us to bound:
   \begin{align*}
       &\mathbb E \left[ \left\vert \frac{\delta_i - \frac{1}{Td}\vz_i^\top\mQ_{-i} \vz_i}{(1+\delta_i) \left(1+\frac{1}{Td}\vz_i^\top\mQ_{-i} \vz_i\right)} \right\vert|\vu^\top\tilde \mQ_{-i} \vz_i\vz_i^\top \bar{\tilde \mQ}  \mU \tilde \mQ \vv|  \right]\\
       &\hspace{1cm}\leq  \mathbb E \left[ \left\vert \delta_i - \frac{1}{Td}\vz_i^\top\mQ_{-i} \vz_i \right\vert^2\right]^{\frac{1}{2}}\mathbb E \left[|\vu^\top\tilde \mQ_{-i} \vz_i\vz_i^\top \bar{\tilde \mQ}  \mU \tilde \mQ \vv|^2  \right]^{\frac{1}{2}}
       \ \  \leq \   O  \left( \frac{1}{\sqrt d} \right),
   \end{align*}
   The independence between $\vz_i$ and $\tilde \mQ_{-i}$ (and $\bar {\tilde \mQ}$) then allow us to deduce (again with formula \eqref{eq:identity_Qi_Qmi}):
   \begin{align}\label{eq:zzpart}
        \mathbb E \left[ \vu^\top\tilde \mQ \vz_i\vz_i^\top \bar{\tilde \mQ}  \mU \tilde \mQ \vv  \right]
       &=\mathbb E \left[ \frac{\vu^\top\tilde \mQ_{-i} \mS_i \bar{\tilde \mQ}  \mU \tilde \mQ_{{-i}} \vv}{1+\delta_i}  \right]  + \frac{1}{Td}\mathbb E \left[ \frac{\vu^\top\tilde \mQ_{-i} \vz_i\vz_i^\top \bar{\tilde \mQ}  \mU \tilde \mQ \vz_i\vz_i^\top \tilde \mQ_{{-i}} \vv}{1+\delta_i}  \right]+ O \left( \frac{1}{\sqrt d} \right).
   \end{align}
    Let us inject \eqref{eq:Spart} and \eqref{eq:zzpart} in \eqref{eq:initial_eq} to obtain (again with an application of H\"older inequality and Lemma~\ref{lem:borne_VA} that we do not detail this time):
    \begin{align*}
      \mathbb E \left[ \vu^\top\tilde \mQ \left( \frac{\mS_i}{1 + \vdelta_i} - \vz_i\vz_i^\top\right) \bar{\tilde \mQ}  \mU \tilde \mQ \vv  \right]
      &= \frac{1}{Td} \mathbb E \left[ \frac{\vu^\top\tilde \mQ_{-i} \vz_i\vz_i^\top  \bar{\tilde \mQ}  \mU \tilde \mQ\vz_i\vz_i^\top\tilde \mQ_{-i} \vv }{\left( 1 + \frac{1}{n}\vz_i^\top\tilde \mQ_{-i}\vz_i \right)^2 }\right] + O \left( \frac{1}{\sqrt {d}} \right),\\
      &= \frac{1}{Td} \frac{\mathbb E \left[ \vu^\top \tilde \mQ_{-i} \mS_i\tilde \mQ_{-i} \vv \right] }{\left( 1 + \delta_i \right)^2} \tr \left( \mS_i  \bar{\tilde \mQ}  \mU \bar{\tilde \mQ} \right)  + O \left( \frac{1}{\sqrt {d}} \right),
    \end{align*}
    Putting all the estimations together, one finally obtains:
    \begin{align}\label{eq:est_no_notattion_QAQ}
       \left\Vert  \mathbb E \left[ \tilde \mQ \mU \tilde \mQ \right] -  \mathbb E \left[ \bar{\tilde \mQ} \mU \bar{\tilde \mQ} \right] - \frac{1}{(Td)^2}\sum_{i=1}^n  \frac{\tr \left( \mS_i  \bar{\tilde \mQ}  \mU \bar{\tilde \mQ} \right) }{\left( 1 + \delta_i \right)^2}  \mathbb E \left[ \tilde \mQ_{-i} \mS_i\tilde \mQ_{-i}  \right]\right\Vert \leq O \left( \frac{1}{\sqrt {d}} \right)
    \end{align}
    One then see that if we introduce for any $\mV\in \mathbb R^{n\times n}$ the block matrices:
    \begin{itemize}
       \item $\theta = \frac{1}{ Td}(\frac{\mathbb E \left[ \tr(\mS_j \tilde \mQ \mS_i \tilde \mQ^Y )\right]}{(1 + \delta_i)(1 + \delta_j)})_{i, j\in [n]} \in \mathbb R^{n\times n}$
       \item $\theta(\mV) = \frac{1}{ Td}(\frac{\mathbb E \left[ \tr(\mV \tilde \mQ \mS_i \tilde \mQ^Y )\right]}{1 + \delta_i})_{i\in [n]}\in \mathbb R^{n}$,
       \item $\theta(\mU, \mV) = \frac{1}{ Td}\mathbb E \left[ \tr(\mV \tilde \mQ \mU \tilde \mQ^Y )\right]\in \mathbb R$,
    \end{itemize}
   then, if $\|\mV\| \leq O(1)$, multiplying \eqref{eq:est_no_notattion_QAQ} with $\mV$ and taking the trace leads to:
    \begin{align}\label{eq:theta_A_B}
        \theta(\mU,\mV) = \Psi(\mU,\mV) + \frac{1}{ Td}\Psi(\mU)^\top \theta(\mV) + O \left( \frac{1}{\sqrt {d}} \right),
     \end{align} 
     Now, taking $\mU = \frac{\mS_1}{1+\delta_1},\ldots, \frac{\mS_n}{1+\delta_n}$, one gets the vectorial equation:
     \begin{align*}
        \theta(\mV) = \Psi(\mV) + \frac{1}{ Td}\Psi \theta(\mV) + O \left( \frac{1}{\sqrt {d}} \right),
     \end{align*}
     When $(I_{Td} -\frac{1}{ Td} \Psi)$ is invertible, one gets $\theta(\mV) = (I_{Td} - \frac{1}{ Td}\Psi)^{-1}\Psi(\mV) + O \left( \frac{1}{\sqrt {d}} \right)$, and combining with~\eqref{eq:theta_A_B}, one finally obtains:
     \begin{align*}
        \theta(\mU,\mV) = \Psi(\mU,\mV) + \frac{1}{ Td}\Psi(\mU)^\top (I_{Td} - \frac{1}{ Td}\Psi)^{-1}\Psi(\mV) + O \left( \frac{1}{\sqrt {d}} \right).
     \end{align*}
\end{proof}\

\subsection{Estimation of the deterministic equivalent of $\mQ^2$}
\label{app:subsec_lemma_4}

\begin{theorem}\label{the:estimation_Q}
   Under the setting of Theorem~\ref{the:estimation_produit_dependent_resolvent}, one can estimate:
   \begin{align}\label{eq:identity_QQ}
   \left\Vert \mathbb E \left[  \mQ^2  \right] - \mI_n + \mathcal D_v  \right\Vert \leq O \left( \frac{1}{\sqrt {d}} \right),
   \end{align}
   with, $\forall i \in [n]$:
   \begin{align*}
       v_i \equiv \frac{1}{Td}\frac{ \tr\left( \mS_i \bar{\tilde \mQ} \right) }{(1+\delta_i)^2} + \frac{1}{Td}\frac{  \tr\left( \mS_i \bar{\tilde \mQ}_2(\mI_n) \right) }{(1+\delta_i)^2}
   \end{align*}
\end{theorem}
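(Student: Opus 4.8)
The plan is to reduce the estimation of $\mathbb{E}[\mQ^2]$ to two quadratic forms in the coresolvent $\tilde{\mQ}$, for which deterministic equivalents are already available. The starting point is the exact Woodbury/push-through identity $\mQ = \mI_n - \frac{1}{Td}\mZ^\top\tilde{\mQ}\mZ$. Squaring it and using $\frac{1}{Td}\mZ\mZ^\top = \tilde{\mQ}^{-1} - \mI_{Td}$ to collapse the resulting quartic term yields the purely algebraic identity
\begin{equation*}
\mQ^2 = \mI_n - \frac{1}{Td}\mZ^\top\tilde{\mQ}\mZ - \frac{1}{Td}\mZ^\top\tilde{\mQ}^2\mZ ,
\end{equation*}
equivalently $\mathbb{E}[\mQ^2] = \mathbb{E}[\mQ] - \frac{1}{Td}\mathbb{E}[\mZ^\top\tilde{\mQ}^2\mZ]$. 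It therefore suffices to estimate the two symmetric matrices $\frac{1}{Td}\mZ^\top\tilde{\mQ}\mZ$ and $\frac{1}{Td}\mZ^\top\tilde{\mQ}^2\mZ$ entrywise and to show that, up to $O(1/\sqrt d)$ in operator norm, both are diagonal with task-dependent entries, so that $\mathbb{E}[\mQ^2]$ inherits the diagonal correction $\mathcal{D}_v$.

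For the diagonal entries I would run the leave-one-out argument used in the proof of Theorem~\ref{the:estimation_produit_dependent_resolvent}. Writing $\tilde{\mQ}_{-i}$ for the coresolvent with column $\vz_i$ deleted and $X_i = \frac{1}{Td}\vz_i^\top\tilde{\mQ}_{-i}\vz_i$, Sherman--Morrison gives $\tilde{\mQ}\vz_i = \tilde{\mQ}_{-i}\vz_i/(1+X_i)$, whence $\frac{1}{Td}\vz_i^\top\tilde{\mQ}\vz_i = X_i/(1+X_i)$ and, using symmetry of $\tilde{\mQ}$, $\frac{1}{Td}\vz_i^\top\tilde{\mQ}^2\vz_i = \frac{1}{(1+X_i)^2}\,\frac{1}{Td}\vz_i^\top\tilde{\mQ}_{-i}^2\vz_i$. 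The quadratic-form concentration of Lemma~\ref{lem:borne_VA} then replaces $X_i$ by $\delta_i = \frac{1}{Td}\tr(\mS_i\bar{\tilde{\mQ}})$ and the numerator $\frac{1}{Td}\vz_i^\top\tilde{\mQ}_{-i}^2\vz_i$ by $\frac{1}{Td}\tr(\mS_i\,\bar{\tilde{\mQ}}_2(\mI_n))$, the latter being precisely the deterministic equivalent of $\tilde{\mQ}^2$ supplied by Theorem~\ref{the:estimation_produit_dependent_resolvent} (swapping $\tilde{\mQ}_{-i}$ for $\tilde{\mQ}$ costs only $O(1/d)$ since one column is removed from $n = O(d)$). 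Collecting the two pieces produces the two diagonal contributions of $v_i$: the first from $\frac{1}{Td}\mZ^\top\tilde{\mQ}\mZ = \mI_n - \mathbb{E}[\mQ]$, the second from the $\tilde{\mQ}^2$ term.

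For the off-diagonal entries ($i\neq j$) I would estimate $\mathbb{E}\big[(\frac{1}{Td}\mZ^\top\tilde{\mQ}\mZ)_{ij}\big]$ and $\mathbb{E}\big[(\frac{1}{Td}\mZ^\top\tilde{\mQ}^2\mZ)_{ij}\big]$ by a leave-two-out step that detaches both $\vz_i$ and $\vz_j$ from the resolvent; independence together with the zero-mean hypothesis annihilates the leading contribution, while the Sherman--Morrison remainders are bounded by Hölder's inequality combined with Lemma~\ref{lem:borne_VA}, exactly as in Theorem~\ref{the:estimation_produit_dependent_resolvent}, giving $O(1/\sqrt d)$. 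Finally the Lipschitz concentration of Theorem~\ref{the:concentration_resolvent} promotes these entrywise expectation controls to the claimed bound $\|\mathbb{E}[\mQ^2] - \mI_n + \mathcal{D}_v\| \le O(1/\sqrt d)$.

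The main obstacle is not the algebra but the uniform error control. One must guarantee that the $n$ leave-one-out substitutions and the accumulated off-diagonal terms combine into an $O(1/\sqrt d)$ operator-norm error rather than a mere per-entry estimate, and that linearizing the smooth maps $x\mapsto x/(1+x)$ and $x\mapsto 1/(1+x)^2$ around $\delta_i$ does not inject a bias exceeding $O(1/\sqrt d)$ through the fluctuations of $X_i$. Controlling this is exactly where the variance bounds of Lemma~\ref{lem:borne_VA} and the self-consistency of the fixed point $\delta_i$ are indispensable.
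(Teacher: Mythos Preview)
Your proposal is correct and follows essentially the same route as the paper: the algebraic identity $\mQ^2 = \mI_n - \frac{1}{Td}\mZ^\top\tilde{\mQ}\mZ - \frac{1}{Td}\mZ^\top\tilde{\mQ}^2\mZ$, leave-one-out plus Lemma~\ref{lem:borne_VA} and Theorem~\ref{the:estimation_produit_dependent_resolvent} for the diagonal entries, and leave-two-out together with the zero-mean hypothesis for the off-diagonals. Your closing caveat about promoting entrywise estimates to an operator-norm bound is well placed and actually goes slightly beyond what the paper's own proof spells out.
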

\begin{proof}
 The justifications are generally the same as in the proof of Theorem~\ref{the:estimation_produit_dependent_resolvent}, we will thus allow ourselves to be quicker in this proof.
 
    Using the definition of $\mQ = \left(\frac{\mZ^\top\mA\mZ}{Td}+\mI_{n}\right)^{-1}$, we have that 
\begin{align}\label{eq:expression_AZ}
    \frac{\mZ^\top \mZ}{Td}\mQ = \left(\frac{\mZ^\top \mZ}{Td}+\mI_{n} - \mI_{n}\right)\left(\frac{\mZ^\top\mZ}{Td}+\mI_{n}\right)^{-1} = \mI_{n} - \mQ
\end{align}
and one can then let appear $\tilde \mQ$ thanks to the relation:
\begin{align}\label{eq:relqtion_Q_tildeQ}
    \mZ \mQ = \tilde \mQ \mZ,
\end{align}
that finally gives us:
\begin{align*}
    \mQ = \mI_{n} - \frac{1}{Td}\mZ^\top \mZ\mQ = \mI_{n} - \frac{1}{Td}\mZ^\top \tilde \mQ\mZ
\end{align*}
One can then express:
\begin{align*}
    \mQ^2 
    &= \mI_n - \frac{2}{Td}\mZ^\top \tilde \mQ\mZ + \frac{1}{(Td)^2}\mZ^\top \tilde \mQ\mZ\mZ^\top \tilde \mQ\mZ \\
    &= \mI_n - \frac{1}{Td}\mZ^\top \tilde \mQ\mZ - \frac{1}{Td}\mZ^\top \tilde \mQ^2\mZ.
\end{align*}
Given $i,j\in [n]$, $i\neq j$, let us first estimate (thanks to  H\"older inequality and Lemma~\ref{lem:borne_VA}):
\begin{align*}
    \frac{1}{Td} \mathbb E\left[ \vz_i^\top \tilde \mQ\vz_j \right]
    = \frac{1}{Td}\frac{ \mathbb E\left[ \vz_i^\top \tilde \mQ_{-i,j}\vz_j \right]}{(1+\delta_i)(1+\delta_j)} + O \left( \frac{1}{\sqrt d} \right) \leq O \left( \frac{1}{\sqrt d} \right),
\end{align*}
since $\mathbb E[z_i] = \mathbb E[z_j] = 0 $. Now, we consider the case $j=i$ to get:
\begin{align*}
    \frac{1}{Td} \mathbb E\left[ \vz_i^\top \tilde \mQ\vz_i \right]
    = \frac{1}{Td}\frac{ \mathbb E\left[ \vz_i^\top \tilde \mQ_{-i}\vz_i \right]}{(1+\delta_i)^2} + O \left( \frac{1}{\sqrt d} \right) 
    = \frac{1}{Td}\frac{ \tr\left( \mS_i\bar{\tilde \mQ} \right)}{(1+\delta_i)^2} + O \left( \frac{1}{\sqrt d} \right).
\end{align*}
As before, we know that $\frac{1}{Td} \mathbb E\left[ \vz_i^\top \tilde \mQ\vz_j \right]\leq O \left( \frac{1}{\sqrt d} \right)$ if $i\neq j$. Considering $i\in [n]$, we thus are left to estimate:
\begin{align*}
    \frac{1}{Td} \mathbb E\left[ \vz_i^\top \tilde \mQ^2\vz_j \right]
    = \frac{1}{Td}\frac{ \tr\left( \mS_i \bar{\tilde \mQ}_2(\mI_n) \right)}{(1+\delta_i)^2} + O \left( \frac{1}{\sqrt d} \right) 
\end{align*}
\end{proof}
\section{Risk Estimation (Proof of Theorem \ref{th:main})}
\label{sec:asymptotic_risks}

\subsection{Test Risk}
The expected value of the MSE of the test data $\vx\in\mathbb{R}^{T\times Td}$ concatenating the feature vector of all the tasks with the corresponding response variable $\vy\in \mathbb{R}^{T\times Tq}$ reads as

\begin{align*}
    \mathcal{R}_{test}^{\infty}&= \frac{1}{T} \mathbb{E}[\| \vy - g(\vx)\|_2^2]\\
    &=\frac{1}{T}\mathbb{E}\left[\|\frac{\vx^\top\mW}{\sqrt{Td}} + \vepsilon - \frac{\vx^\top \mA\mZ\mQ\mY}{Td}\|_2^2\right]\\
    &=\frac{1}{T}\mathbb{E}\left[\|\frac{\vx^\top\mW}{\sqrt{Td}} + \vepsilon - \frac{\vx^\top \mA\mZ\mQ(\frac{\mZ^\top\mW}{\sqrt{Td}} + \vepsilon)}{Td}\|_2^2\right]\\
    &=\frac{1}{T}\mathbb{E}\left[\|\frac{\vx^\top\mW}{\sqrt{Td}} + \vepsilon - \frac{\vx^\top \mA\mZ\mQ\mZ^\top\mW}{Td\sqrt{Td}} - \frac{\vx^\top \mA\mZ\mQ \vepsilon}{Td}\|_2^2\right]\\
    &=\frac{1}{T}\mathbb{E}\left[\frac{\tr\left(\mW^\top\mSigma\mW\right)}{Td} - \frac{2\tr\left(\mW^\top\mSigma \mA\mZ\mQ\mZ^\top\mW\right)}{(Td)^2} + \tr\left(\vepsilon^\top\vepsilon\right)+ \frac{\tr\left(\mW^\top \mZ\mQ\mZ^\top \mA\mSigma \mA \mZ\mQ\mZ^\top\mW\right)}{(Td)^3} +\right.\\
    &\left. \frac{\tr\left(\vepsilon^\top \mQ\mZ^\top \mA\mSigma \mA\mZ\mQ\vepsilon\right)}{(Td)^2}\right]\\
    &=\frac{1}{T}\mathbb{E}\left[\frac{\tr\left(\mW^\top\mSigma\mW\right)}{Td} - \frac{2\tr\left(\mW^\top\mSigma \mA^{\frac 12}(\mI_{Td} - \tilde{\mQ})\mA^{-\frac 12}\mW\right)}{Td} + \right. \\
    &\hspace{1cm}\left.\tr\left(\vepsilon^\top\vepsilon\right)+ \frac{\tr\left(\mW^\top \mZ\mQ\mZ^\top \mA\mSigma \mA \mZ\mQ\mZ^\top\mW\right)}{(Td)^3} + \frac{\tr\left(\vepsilon^\top \mQ\mZ^\top \mA\mSigma \mA\mZ\mQ\vepsilon\right)}{(Td)^2}\right]\\
    &= \frac{\tr\left(\mW^\top\mSigma\mW\right)}{Td} - 2\frac{\tr\left(\mW^\top\mSigma \mA^{\frac 12}(\mI_{Td} - \tilde{\mQ})\mA^{-\frac 12}\mW\right)}{Td} + \tr\left(\vepsilon^\top\vepsilon\right) + \frac{\tr\left(\mW^\top\mSigma\mW\right)}{Td} \\
    &\hspace{1cm}- 2 \frac{\tr\left(\mW^\top\mSigma A^{\frac 12}\bar{\tilde{\mQ}} A^{-\frac 12}\mW\right)}{Td} + \frac{\mW^\top \mA^{-\frac 12}\bar{\tilde \mQ}_2(\mA)\mA^{-\frac 12}\mW}{Td} + \frac{1}{Td}\operatorname{tr}(\mSigma_N \bar \mQ_2) +O \left(\frac{1}{\sqrt d}\right)
\end{align*}
The test risk can be further simplified as 
\begin{align*}
\mathcal{R}_{test}^{\infty} = \tr\left(\mSigma_N\right)+\frac{\mW^\top \mA^{-\frac 12}\bar{\tilde \mQ}_2(\mA)\mA^{-\frac 12}\mW}{Td} + \frac{\tr\left(\mSigma_N \bar \mQ_2\right)}{Td}+O \left(\frac{1}{\sqrt d}\right)
\end{align*}

\subsection{Train Risk 
}
\label{app:subsec_compute_training_risk}

In this section, we derive the asymptotic risk for the training data. 


\begin{theorem}[Asymptotic training risk]
\label{th:training_risk}
Assuming that the training data vectors \(\vx_i^{(t)}\) and the test data vectors \(\vx^{(t)}\) are concentrated random vectors, and given the growth rate assumption (Assumption \ref{ass:growth_rate}), it follows that: 

\begin{align*}
    \mathcal{R}_{train}^\infty &\leftrightarrow \frac{1}{Tn}\tr\left(\mW^\top \mA^{-1/2} \bar{\tilde{\mQ}} \mA^{-1/2}\mW\right) - \frac{1}{Tn}\tr\left(\mW^\top \mA^{-1/2} \bar{\tilde \mQ}_2(\mI_{Td}) \mA^{-1/2}\mW\right) + \frac{1}{Tn}\tr\left(\mSigma_N\bar \mQ_2\right) 
\end{align*}
\end{theorem}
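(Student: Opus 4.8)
The plan is to reduce the residual $\mY - g(\mX)$ to a single resolvent applied to $\mY$, then split into signal and noise contributions and invoke the deterministic equivalents of Lemma~\ref{pro:deterministic_equivalent}. First I would use the algebraic identity $\frac{1}{Td}\mZ^\top\mA\mZ\,\mQ = \mI_n - \mQ$, immediate from $\mQ = (\frac{\mZ^\top\mA\mZ}{Td}+\mI_n)^{-1}$, to obtain $g(\mX) = (\mI_n-\mQ)\mY$ and hence $\mY - g(\mX) = \mQ\mY$. Consequently
\begin{align*}
\mathcal{R}_{train}^\infty = \frac{1}{Tn}\,\mathbb{E}\left[\tr\left(\mY^\top \mQ^2 \mY\right)\right].
\end{align*}
Substituting the generative model $\mY = \frac{\mZ^\top\mW}{\sqrt{Td}} + \vepsilon$ expands this quadratic form into a signal term $\frac{1}{Td}\tr(\mW^\top\mZ\mQ^2\mZ^\top\mW)$, a cross term, and a noise term $\tr(\vepsilon^\top\mQ^2\vepsilon)$.

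Since $\vepsilon$ is independent of $\mZ$ and has zero mean, the cross term vanishes in expectation even conditionally on $\mZ$. For the noise term, $\mathbb{E}[\tr(\vepsilon^\top\mQ^2\vepsilon)] = \tr\!\big(\mSigma_N\,\mathbb{E}[\mQ^2]\big)$ with $\mSigma_N$ the noise covariance, and Theorem~\ref{the:estimation_Q} (the deterministic equivalent $\mQ^2\leftrightarrow\bar\mQ_2$) yields the third term $\frac{1}{Tn}\tr(\mSigma_N\bar\mQ_2)$ up to $O(1/\sqrt{d})$.

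The heart of the argument is the signal term. I would pass from the $n\times n$ resolvent to the coresolvent via the push-through identity $\mA^{\frac12}\mZ\mQ = \tilde{\mQ}\mA^{\frac12}\mZ$, giving $\mZ\mQ = \mA^{-\frac12}\tilde{\mQ}\mA^{\frac12}\mZ$ and, by symmetry, $\mQ\mZ^\top = \mZ^\top\mA^{\frac12}\tilde{\mQ}\mA^{-\frac12}$. Combining these with $\frac{1}{Td}\mA^{\frac12}\mZ\mZ^\top\mA^{\frac12} = \tilde{\mQ}^{-1}-\mI_{Td}$ collapses the product to
\begin{align*}
\frac{1}{Td}\mZ\mQ^2\mZ^\top = \mA^{-\frac12}\left(\tilde{\mQ} - \tilde{\mQ}^2\right)\mA^{-\frac12},
\end{align*}
so the signal term equals $\tr\!\big(\mW^\top\mA^{-\frac12}(\tilde{\mQ}-\tilde{\mQ}^2)\mA^{-\frac12}\mW\big)$. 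Applying $\tilde{\mQ}\leftrightarrow\bar{\tilde{\mQ}}$ and $\tilde{\mQ}^2 = \tilde{\mQ}\,\mI_{Td}\,\tilde{\mQ}\leftrightarrow\bar{\tilde{\mQ}}_2(\mI_{Td})$ recovers the first two terms of the statement; summing all three contributions and normalizing by $Tn$ gives the claim.

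The main obstacle is controlling the second-order term $\tilde{\mQ}^2$ (equivalently $\tilde{\mQ}\,\mI_{Td}\,\tilde{\mQ}$): its deterministic equivalent is precisely the delicate quantity handled in Theorem~\ref{the:estimation_produit_dependent_resolvent}, where the leave-one-out expansion generates the self-consistent correction through the operator $(\mI_T - \frac{1}{Td}\Psi)^{-1}$. I would need to check that $\frac{1}{Td}\tr\!\big(\mW^\top\mA^{-\frac12}(\cdot)\mA^{-\frac12}\mW\big)$ and $\frac{1}{Td}\tr(\mSigma_N\,\cdot)$ are bounded (bi)linear forms, using $\|\mW\|,\|\mA^{-\frac12}\|,\|\mSigma_N\|\leq O(1)$, so that the $O(1/\sqrt{d})$ estimates of Theorems~\ref{the:estimation_produit_dependent_resolvent} and~\ref{the:estimation_Q} transfer, and finally lift the in-expectation bounds to the stated convergence via the concentration result of Theorem~\ref{the:concentration_resolvent}.
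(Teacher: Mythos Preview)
Your proposal is correct and follows essentially the same route as the paper: reduce $\mY-g(\mX)$ to $\mQ\mY$ via $\frac{1}{Td}\mZ^\top\mA\mZ\,\mQ=\mI_n-\mQ$, expand with the generative model so the cross term vanishes, rewrite the signal piece through the push-through identity $\mA^{1/2}\mZ\mQ=\tilde\mQ\mA^{1/2}\mZ$ to obtain $\frac{1}{Td}\mZ\mQ^2\mZ^\top=\mA^{-1/2}(\tilde\mQ-\tilde\mQ^2)\mA^{-1/2}$, and then apply the deterministic equivalents of Lemma~\ref{pro:deterministic_equivalent}. The only addition you make beyond the paper's writeup is the explicit remark that the cross term vanishes and the check that the relevant traces are bounded linear forms, which the paper leaves implicit.
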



\begin{proof}
We aim in this setting of regression, to compute the asymptotic theoretical training risk given by:
\begin{align*}
    \mathcal{R}_{train}^{\infty}&=\frac{1}{Tn}\mathbb{E}\left[\left\Vert\mY-\frac{\mZ^\top \mA\mZ}{Td}\mQ\mY\right\Vert_2^{2}\right]
\end{align*}
Using the definition of $\mQ = \left(\frac{\mZ^\top\mA\mZ}{Td}+\mI_{Td}\right)^{-1}$, we have that 
\begin{align*}
    \frac{\mZ^\top \mA\mZ}{Td}\mQ = \left(\frac{\mZ^\top \mA\mZ}{Td}+\mI_{Td} - \mI_{Td}\right)\left(\frac{\mZ^\top\mA\mZ}{Td}+\mI_{Td}\right)^{-1} = \mI_{Td} - \mQ
\end{align*}
Plugging back into the expression of the training risk then leads to
\begin{align*}
    \mathcal{R}_{train}^{\infty}&=\frac{1}{Tn}\mathbb{E}\left[\tr\left(\mY^\top \mQ^2\mY\right)\right]
\end{align*}
Using the definition of the linear generative model and in particular $\mY = \frac{\mZ^\top\mW}{\sqrt{Td}} + \vepsilon$, we get 
\begin{align*}
    \mathcal{R}_{train}^{\infty}& = \frac{1}{Tn}\mathbb{E}\left[\tr\left(\frac{1}{\sqrt{Td}}\mZ^\top\mW+\vepsilon\right)^\top \mQ^2\left(\frac{1}{\sqrt{Td}}\mZ^\top\mW+\vepsilon\right)\right]\\
    & = \frac{1}{Tn}\frac{1}{Td}\mathbb{E}\left[\tr\left(\mW^\top \mZ\mQ^2 \mZ^\top \mW\right)\right] + \frac{1}{Tn}\mathbb{E}\left[\tr\left(\vepsilon^\top \mQ^2 \vepsilon\right)\right]
\end{align*}
To simplify this expression, we will introduced the so-called ``coresolvent'' defined as:
\begin{align*}
    \tilde \mQ = \left(\frac{\mA^{\frac12}\mZ\mZ^\top\mA^{\frac12}}{Td} + \mI_{Td}\right)^{-1},
\end{align*}
Employing the elementary relation $\mA^{\frac 12}\mZ \mQ = \tilde \mQ\mA^{\frac 12}\mZ$, one obtains:
\begin{align*}
    \frac 1{Td}\mZ\mQ^2\mZ^\top  
    = \frac{1}{Td}\mA^{-\frac 12}\tilde \mQ\mA^{\frac 12}\mZ\mQ\mZ^\top  
    =\mA^{-\frac 12}\tilde \mQ^2 \frac{\mA^{\frac 12}\mZ\mZ^\top\mA^{\frac 12}}{Td}\mA^{-\frac 12}
    =\mA^{-\frac 12}\tilde{\mQ}\mA^{-\frac 12} - \mA^{-\frac 12}\tilde{\mQ}^2\mA^{-\frac 12}, \qquad
\end{align*}
Therefore we further get
\begin{align*}
    \mathcal{R}_{train}^{\infty}& = \frac{1}{Tn}\mathbb{E}\left[\tr\left(\mW^\top \mA^{-1/2}\tilde{\mQ} \mA^{-1/2}\mW\right)\right] - \frac{1}{Tn}\mathbb{E}\left[\tr\left(\mW^\top \mA^{-1/2}\tilde{\mQ}^2 \mA^{-1/2}\mW\right)\right] + \frac{1}{Tn}\mathbb{E}\left[\tr\left(\vepsilon^\top \mQ^2 \vepsilon\right)\right]
\end{align*}

Using deterministic equivalents in Lemma \ref{pro:deterministic_equivalent}, the training risk then leads to
\begin{align*}
    \mathcal{R}_{train}^{\infty}
    & =\frac{1}{Tn}\tr\left(\mW^\top \mA^{-1/2} \bar{\tilde{\mQ}} \mA^{-1/2}\mW\right) - \frac{1}{Tn}\tr\left(\mW^\top \mA^{-1/2} \bar{\tilde \mQ}_2(\mI_{Td}) \mA^{-1/2}\mW\right) + \frac{1}{Tn}\tr\left(\mSigma_N\bar \mQ_2\right)
    +O \left(\frac{1}{\sqrt d}\right)
\end{align*}
\end{proof}

\section{Interpretation and insights of the theoretical analysis}
\label{sec:intuitions}
\subsection{Analysis of the test risk}
We recall the test risk as
\begin{equation*}
\mathcal{R}_{test}^\infty = \tr\left(\mSigma_N\right)+\frac{\mW^\top \mA^{-\frac 12}\bar{\tilde \mQ}_2(\mA)\mA^{-\frac 12}\mW}{Td} + \frac{\tr\left(\mSigma_N \bar \mQ_2\right)}{Td}+O \left(\frac{1}{\sqrt d}\right)
\end{equation*}
The test risk is composed of a signal term of a signal term $\mathcal{S} = \frac{\mW^\top \mA^{-\frac 12}\bar{\tilde \mQ}_2(\mA)\mA^{-\frac 12}\mW}{Td}$ and a noise term $\mathcal{N} = \frac{\tr\left(\mSigma_N \bar \mQ_2\right)}{Td}$.
\subsection{Interpretation of the signal term}
Let's denote by $\bar{\mSigma} = \sum_{t=1}^T\frac{n_td_t}{Td(1+\delta_t)^2}\mSigma^{(t)}$ and $\tilde{\mSigma} = \sum_{t=1}^T \frac{c_0}{1+\delta_t}\mSigma^{(t)}$.
The signal term reads as
\begin{align*}
    \mathcal{S} &= \mW^\top\mA^{-\frac 12}\bar{\tilde \mQ}_2(\mA)\mA^{-\frac 12}\mW.
\end{align*}
Using the following identity,
\begin{align*}
    \mA^{-\frac 12}\bar{\tilde \mQ}_2(\mA)\mA^{-\frac 12} &= \mA^{-\frac 12} \bar{\mQ}\mA^{\frac 12}\left(\mI + \bar{\mSigma}\right)\mA^{\frac 12}\bar{\mQ}\mA^{-\frac 12}\\
    &=\left(\mA\tilde{\mSigma} + \mI\right)^{-1}\left(\mI + \bar{\mSigma}\right)\left(\mA\tilde{\mSigma} + \mI\right)^{-1}
\end{align*}
This finally leads to 
\begin{align*}
    \mathcal{S} = \mW^\top \left(\mA\tilde{\mSigma} + \mI\right)^{-1}\left(\mI + \bar{\mSigma}\right)\left(\mA\tilde{\mSigma} + \mI\right)^{-1}\mW
\end{align*}
The matrix $\mathcal{H} = \left(\mA\tilde{\mSigma} + \mI\right)^{-1}\left(\mI + \bar{\mSigma}\right)\left(\mA\tilde{\mSigma} + \mI\right)^{-1}$ is responsible to amplifying the signal $\mW^\top\mW$ in order to let the test risk to decrease more or less. It is is decreasing as function of the number of samples in the tasks $n_t$. Furthermore it is composed of two terms (from the independent training $\mW_t^\top\mW$) and the cross term $\mW_t^\top\mW_v$ for $t\neq v$. Both terms decreases as function of the number of samples $n_t$, smaller values of $\gamma_t$ and increasing value of $\lambda$. The cross term depends on the matrix $\mSigma_t^{-1}\mSigma_v$ which materializes the covariate shift between the tasks. More specifically, if the features are aligned $\mSigma_t^{-1}\mSigma_v = I$ and the cross term is maximal while for bigger Fisher distance between the covariance of the tasks, the correlation is not favorable for multi task learning. To be more specific the off-diagonal term of $\mathcal{H}$ are responsible for the cross term therefore for the multi tasks and the diagonal elements are responsible for the independent terms. 

To analyze more the element of $\mathcal{H}$, let's consider the case where $\Sigma^{(t)} = \mI$ and $\gamma_t=\gamma$. In this case the diagonal and non diagonal elements $\mD_{IL}$ and $\mC_{MTL}$  are respectively given by
\begin{align*}
    \mD_{IL} = \frac{(c_0(\lambda+\gamma)+1)^2+c_0^2\lambda^2}{(c_0(\lambda+\gamma)+1)^2 - c_0^2\lambda^2}, \quad \mC_{MTL} = \frac{-2c_0\lambda(c_0(\lambda+\gamma)+1)}{(c_0(\lambda+\gamma)+1)^2 - c_0^2\lambda^2}
\end{align*}
Both function are decreasing function of $\lambda$, $1/\gamma$ and $c_0$.

\subsection{Interpretation and insights of the noise terms}
We recall the definition of the noise term $\mathcal{N}$ as 
\begin{align*}
    \mathcal{N} = \tr\left(\mSigma_N \left(\mA^{-1} + \mSigma\right)^{-1}\right)
\end{align*}
Now at the difference of the signal term there are no cross terms due to the independence between the noise of the different tasks. In this case on the diagonal elements of $\left(\mA^{-1} + \mSigma\right)^{-1}$ matters. This diagonal term is increasing for an increasing value of the sample size, the value of $\lambda$. Therefore this term is responsible for the negative transfer. In the specific case where $\mSigma^{(t)}=\mI_d$ and $\gamma_t = \gamma$ for all task $t$, the diagonal terms read as 
\begin{align*}
    \mN_{NT} = \frac{(c_0(\lambda+\gamma)^2 + (\lambda+\gamma) - c_0\lambda^2)^2+\lambda^2}{\left((c_0(\lambda+\gamma)+1)^2 - c_0^2\lambda^2\right)^2}
\end{align*}

\subsection{Optimal Lambda}
\label{app:optim_lambda}
The test risk in the particular of identity covariance matrix can be rewritten as 
$$\mathcal{R}_{test}^\infty = \mD_{IL}\left(\|\mW_1\|_2^2 + \|\mW_2\|_2^2 \right) + \mC_{MTL}\mW_1^\top\mW_2 + \mN_{NT}\tr\mSigma_n.$$
Deriving $\mathcal{R}_{test}^\infty$ with respect to $\lambda$ leads after some algebraic calculus to 
\begin{equation*}
    \lambda^\star = \frac{n}{d}\textit{SNR} - \frac{\gamma}{2}
 \end{equation*}
 where the signal noise ratio is composed of the independent signal to noise ratio and the cross signal to noise ratio
$\textit{SNR} = \frac{\|\mW_1\|_2^2 + \mW_2\|_2^2}{\tr\mSigma_n} + \frac{\mW_1^\top\mW_2}{\tr\mSigma_n}$

\section{Theoretical Estimations}
\label{sec:estimation_quantities}

\subsection{Estimation of the training and test risk}
The different theorems depends on the ground truth $\mW$ that needs to be estimated through $\hat\mW$. 

To estimate the test risk, one needs to estimate functionals of the form $\mW^\top \mM \hat\mW$ and $\vepsilon^\top\mM\vepsilon$ for any matrix $\mM$.
Using the expression of $\mW = \mA\mZ\mQ\mY$, we start computing $\hat\mW^\top\mM\hat\mW$
\begin{align*}
    \hat\mW^\top\mM\mW = \mY^\top\mQ\mZ^\top\mA\mM\mA\mZ\mQ\mY
\end{align*}
Using the generative model for $\mY = \frac{\mZ^\top\mW}{\sqrt{Td}} + \vepsilon$, we obtain
\begin{align*}
\mathbb{E}\left[\hat\mW^\top\mM\mW\right] &= \mathbb{E}\left[\left(\frac{\mZ^\top\mW}{\sqrt{Td}} + \vepsilon\right)^\top\mQ\mZ^\top\mA\mM\mA\mZ\mQ\left(\frac{\mZ^\top\mW}{\sqrt{Td}} + \vepsilon\right)\right]\\
     &=\frac{1}{Td}\mathbb{E}\left[\mW^\top\mZ\mQ\mZ^\top\mA\mM\mA\mZ\mQ\mZ^\top\mW\right] + \mathbb{E}\left[\vepsilon^\top\mQ\mZ^\top\mA\mM\mA\mZ\mQ\vepsilon\right]
\end{align*}

Employing the elementary relation $\mA^{\frac 12}\mZ \mQ = \tilde \mQ\mA^{\frac 12}\mZ$, one obtains:
\begin{align*}
\mathbb{E}\left[\hat\mW^\top\mM\mW\right]&=\frac{1}{Td}\mathbb{E}\left[\mW^\top\mA^{-\frac 12}\tilde{\mQ}\mA^{\frac 12}\mZ^\top\mZ\mA^{\frac 12}\mA^{\frac 12}\mM\mA^{\frac 12}\tilde{\mQ}\mA^{\frac 12}\mZ\mZ^\top\mW\right] + \mathbb{E}\left[\vepsilon^\top\mQ\mZ^\top\mA\mM\mA\mZ\mQ\vepsilon\right]\\
&=\mathbb{E}\left[\mW^\top\mA^{-\frac 12}\left(\mI - \tilde{\mQ}\right)\mA^{\frac 12}\mM\mA^{\frac 12}\left(\mI - \tilde{\mQ}\right)\mA^{-\frac 12}\mW\right] + \mathbb{E}\left[\vepsilon^\top\mQ\mZ^\top\mA\mM\mA\mZ\mQ\vepsilon\right]\\
& = \mathbb{E}\left[\mW^\top\mM\mW\right] -2 \mathbb{E}\left[\mW^\top\mM\mA^{\frac 12}\tilde{\mQ}\mA^{-\frac 12}\mW\right] +\mathbb{E}\left[\mW^\top\mA^{-\frac 12}\tilde{\mQ}\mA^{\frac 12}\mM\mA^{\frac 12}\tilde{\mQ}\mA^{-\frac 12}\mW\right]\\
&+ \mathbb{E}\left[\vepsilon^\top\mQ\mZ^\top\mA\mM\mA\mZ\mQ\vepsilon\right]\\
\end{align*}
Using the deterministic equivalent of Lemma \ref{pro:deterministic_equivalent}, we obtain
\begin{align*}
    \hat\mW^\top \mM \hat\mW &\leftrightarrow \mW^\top \mM \mW -2\mW^\top \mA^{-\frac 12}\bar{\tilde{\mQ}}\mA^{\frac 12}\mM\mW + \tr\mSigma_n\mathcal{\mM}(\mA^{\frac 12}\mM\mA^{\frac 12})  +
    \mW^\top\mA^{-\frac 12}\bar{\tilde \mQ}_2(\mA^{\frac 12}\mM\mA^{\frac 12})\mA^{-\frac 12}\mW\\
    &\leftrightarrow \mW^\top\left(\mM - 2\mA^{-\frac 12}\bar{\tilde{\mQ}}\mA^{\frac 12}\mM + \mA^{-\frac 12}\bar{\tilde \mQ}_2(\mA^{\frac 12}\mM\mA^{\frac 12})\mA^{-\frac 12} \right)\mW +\tr\mSigma_n\bar{\tilde \mQ}_2(\mA^{\frac 12}\mM\mA^{\frac 12})\\
    &\leftrightarrow \mW^\top\kappa(\mM)\mW +\tr\mSigma_n\bar{\tilde \mQ}_2(\mA^{\frac 12}\mM\mA^{\frac 12})
\end{align*}
where We define the mapping $\kappa: \mathbb{R}^{Td\times Td} \rightarrow \mathbb{R}^{q\times q}$ as follows

\begin{equation*}
\kappa(\mM) = \mM - 2\mA^{-\frac 12}\bar{\tilde{\mQ}}\mA^{\frac 12}\mM + \mA^{-\frac 12}\bar{\tilde \mQ}_2(\mA^{\frac 12}\mM\mA^{\frac 12})\mA^{-\frac 12}.
\end{equation*}

\subsection{Estimation of the noise covariance}

The estimation of the noise covariance remains a technical challenge in this process. However, when the noise covariance is isotropic, it is sufficient to estimate only the noise variance. By observing that
\begin{equation*}
    \lim_{\lambda\rightarrow 0, \gamma\rightarrow\infty} \mathcal{R}_{train}^\infty = \sigma^2\frac{\tr \mQ_2}{kn},
\end{equation*}
 we can estimate the noise level from the training risk evaluated at large $\gamma$ and $\lambda = 0$.

\section{Multivariate Time Series Forecasting }
\label{sec:time_series}

\subsection{Related Work}
\label{app:related_work}

\paragraph{Multivariate Time Series Forecasting.}
Random matrix theory has been used in multi-task learning, but not significantly in Multi-Task Sequential Forecasting (MTSF)~\cite{yang2023precise, zhang2019multi}. MTSF is common in applications like medical data~\citep{cepulionis2016electro}, electricity consumption~\citep{electricity}, temperatures~\citep{weather}, and stock prices~\citep{sonkavde2023stock}. Various methods, from classical tools~\citep{sorjamaa2007methodology, chen2021hamiltonian} and statistical approaches like ARIMA~\citep{box1990arima, box1974forecasting} to deep learning techniques~\citep{salinas2020deepar, sen2019tcn, chen2023tsmixer, zeng2022effective, haoyi2021informer, wu2021autoformer, zhou2022fedformer, nie2023patchtst, ilbert2024unlocking}, have been developed for this task. Some studies prefer univariate models for multivariate forecasting~\cite{nie2023patchtst}, while others introduce channel-wise attention mechanisms~\cite{ilbert2024unlocking}. We aim to enhance a univariate model by integrating a regularization technique from random matrix theory and multi-task learning.
\subsection{Architecture and Training Parameters}
\label{sec:architecture}
\paragraph{Architectures without MTL regularization.} We follow~\citet{chen2023tsmixer, nie2023patchtst}, and to ensure a fair comparison of baselines, we apply the reversible instance normalization (\texttt{RevIN}) of~\citet{kim2021reversible}. All the baselines presented here are univariate i.e. no operation is performed by the network along the channel dimension.
For the \texttt{PatchTST} baseline \cite{nie2023patchtst}, we used the official implementation than can be found on \href{https://github.com/yuqinie98/PatchTST/tree/main}{Github}.
The network used in \texttt{Transformer} follows the one in \cite{ilbert2024unlocking}, using a single layer \texttt{Transformer} with one head of attention, while \texttt{RevIN} normalization and denormalization are applied respectively before and after the neural network function. 
The dimension of the model is $d_\mathrm{m} = 16$ and remains the same in all our experiments.
\texttt{DLinearU} is a single linear layer applied for each channel to directly project the subsequence of historical length into the forecasted subsequence of prediction length. It is the univariate extension of the multivariate DLinear, \texttt{DLinearM}, used in \citet{zeng2022effective}. The implementation of \texttt{SAMformer} can be found \href{https://github.com/romilbert/samformer/tree/main}{here}, and for the \texttt{iTransformer} architecture \href{https://github.com/thuml/iTransformer}{here}. These two multivariate models serve as baseline comparisons. We reported the results found in \cite{ilbert2024unlocking} and \cite{liu2024itransformer}.
For all of our experiments, we train our baselines \texttt{PatchTST}, \texttt{DLinearU} and \texttt{Transformer} with the Adam optimizer \citep{KingBa15}, a batch size of $32$ for the \texttt{ETT} datasets and $256$ for the \texttt{Weather} dataset , and the learning rates summarized in Table~\ref{tab:lr}.

\paragraph{Architectures with MTL Regularization.}
We implemented the univariate \texttt{PatchTST}, \texttt{DLinearU}, and \texttt{Transformer} baselines with MTL regularization. Initially, we scale the inputs twice using \texttt{RevIN} normalization. The first scaling is applied to the univariate components, and the second scaling is applied to the multivariate components. For each channel, we then apply our model without MTL regularization. The outputs are concatenated along the channel dimension, and this concatenation is flattened to form a matrix of shape (batch size, $q \times T$), where $q$ is the prediction horizon and $T$ is the number of channels. We then learn a square matrix $W$ of shape $(q \times T) \times (q \times T)$ for projection and reshape the result to obtain an output of shape (batch size, $q$, $T$). This method can be applied on top of any univariate model. Our regularized loss has been introduced in \ref{sec:multivariate_forecasting}.

\paragraph{Training parameters.} The training/validation/test split is $12/4/4$ months on the \texttt{ETT} datasets and $70\%/20\%/10\%$ on the \texttt{Weather} dataset. We use a look-back window $d=336$ for \texttt{PatchTST} and $d=512$ for \texttt{DLinearU} and \texttt{Transformer}, using a sliding window with stride $1$ to create the sequences. The training loss is the MSE.
Training is performed during $100$ epochs and we use early stopping with a patience of $5$ epochs. For each dataset, baselines, and prediction horizon $H \in \{96, 192, 336, 720\}$, each experiment is run $3$ times with different seeds, and we display the average of the test MSE over the $3$ trials in Table \ref{tab:all_results}. 
\begin{table}[h]
    \centering
    \caption{Learning rates used in our experiments.}
    \label{tab:lr}
    \scalebox{0.8}{
    \begin{tabular}{ccc}
        \toprule
         Dataset & ETTh1/ETTh2 & Weather\\
         \midrule
         Learning rate & $0.001$ & $0.0001$  \\
         \bottomrule
    \end{tabular}
    }
\end{table}
\subsection{Datasets}
\label{app:datasets}
We conduct our experiments on $3$ publicly available datasets of real-world time series, widely used for multivariate long-term forecasting~\citep{wu2021autoformer, chen2023tsmixer, nie2023patchtst}. The $2$ Electricity Transformer Temperature datasets ETTh1, and ETTh2~\citep{haoyi2021informer} contain the time series collected by electricity transformers from July 2016 to July 2018.
Whenever possible, we refer to this set of $2$ datasets as ETT. Weather~\citep{weather} contains the time series of meteorological information recorded by 21 weather indicators in 2020. It should be noted Weather is large-scale datasets. The ETT datasets can be downloaded \href{https://github.com/zhouhaoyi/Informer2020}{here} while the Weather dataset can be downloaded \href{https://github.com/thuml/Autoformer}{here}. Table~\ref{tab:dataset_description} sums up the characteristics of the datasets used in our experiments. 
\begin{table}[htbp]
    \centering
    \caption{Characteristics of the multivariate time series datasets used in our experiments.}
    \label{tab:dataset_description}
    \scalebox{0.8}{
    \begin{tabular}{lcc}
        \toprule
         Dataset &ETTh1/ETTh2 & Weather\\
         \midrule
         \# features & $7$ & $21$\\
         \# time steps & $17420$ & $52696$ \\
         Granularity & 1 hour & 10 minutes \\
         \bottomrule
    \end{tabular}
    }
\end{table}

\newpage

\subsection{Additional Experiments.}
\label{app:add_xp}

\begin{figure}[htp]
    \centering
    \begin{subfigure}{.33\textwidth}
        \centering
        \includegraphics[width=\linewidth]{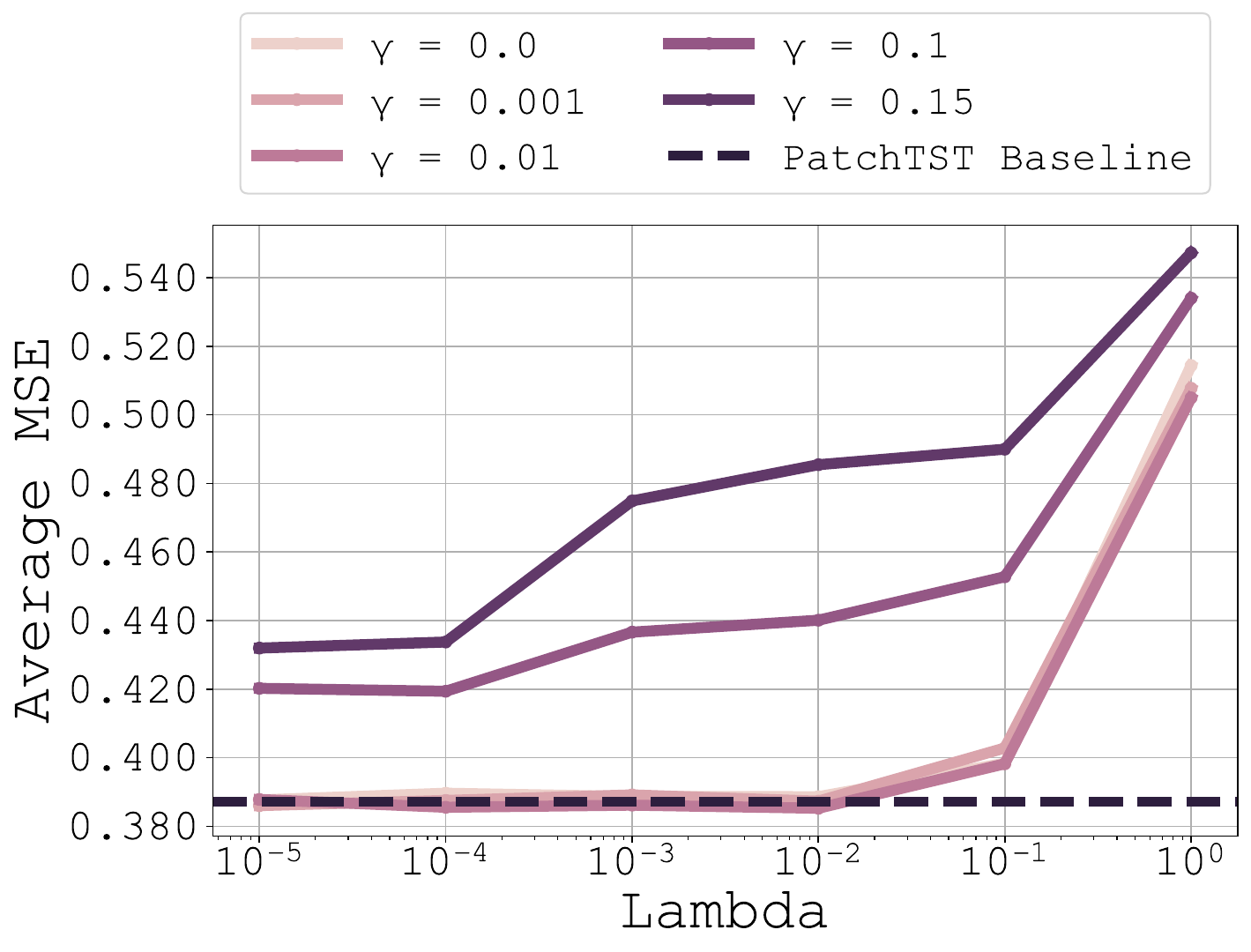}
        \caption{Dataset ETTh1, Horizon 96}
        \label{fig:etth1-96}
    \end{subfigure}%
    \begin{subfigure}{.33\textwidth}
        \centering
        \includegraphics[width=\linewidth]{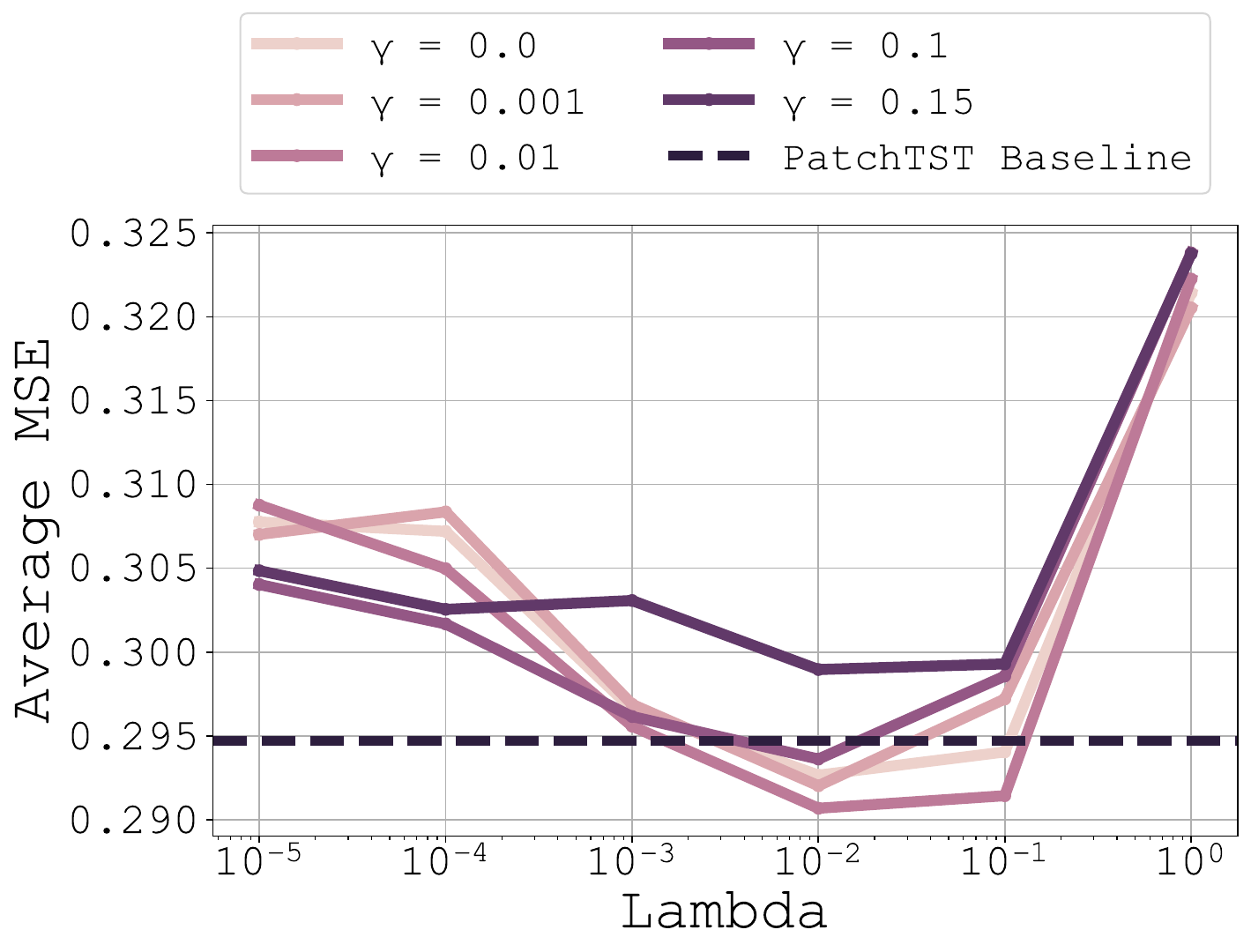}
        \caption{Dataset ETTh2, Horizon 96}
        \label{fig:etth2-96}
    \end{subfigure}
    \begin{subfigure}{.33\textwidth}
        \centering
        \includegraphics[width=\linewidth]{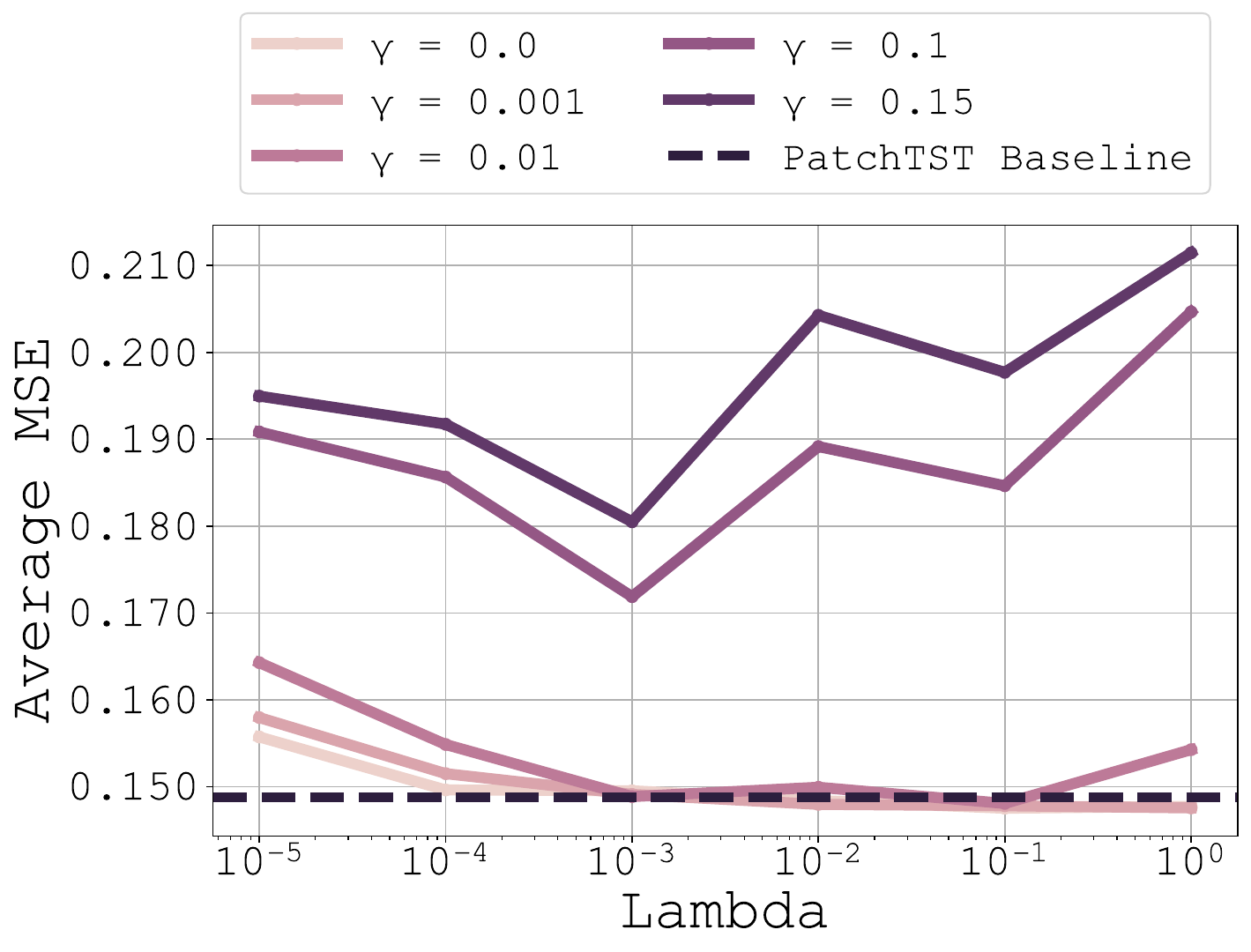}
        \caption{Dataset Weather, Horizon 96}
        \label{fig:weather-96}
    \end{subfigure}

    \begin{subfigure}{.33\textwidth}
        \centering
        \includegraphics[width=\linewidth]{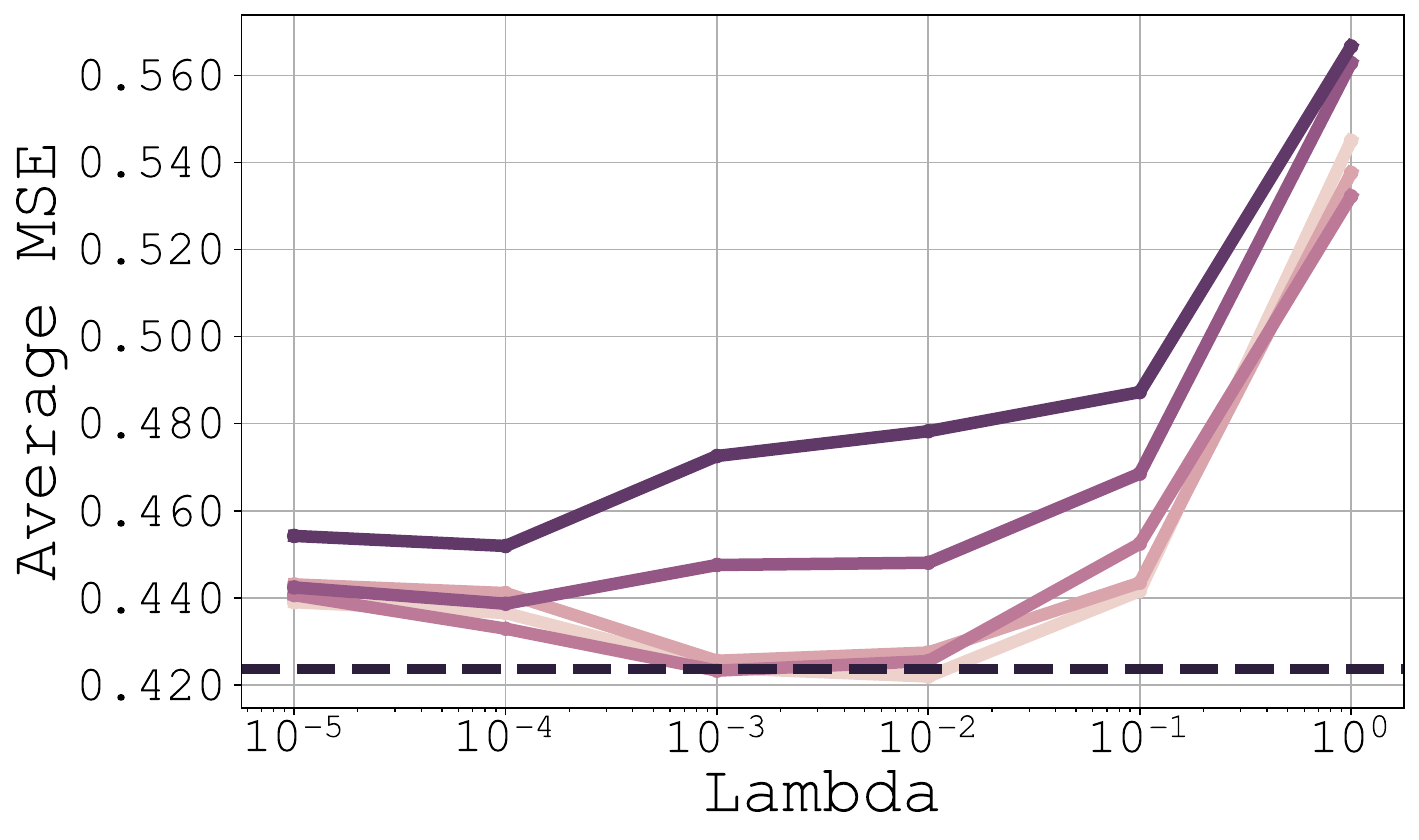}
        \caption{Dataset ETTh1, Horizon 192}
        \label{fig:etth1-192}
    \end{subfigure}%
    \begin{subfigure}{.33\textwidth}
        \centering
        \includegraphics[width=\linewidth]{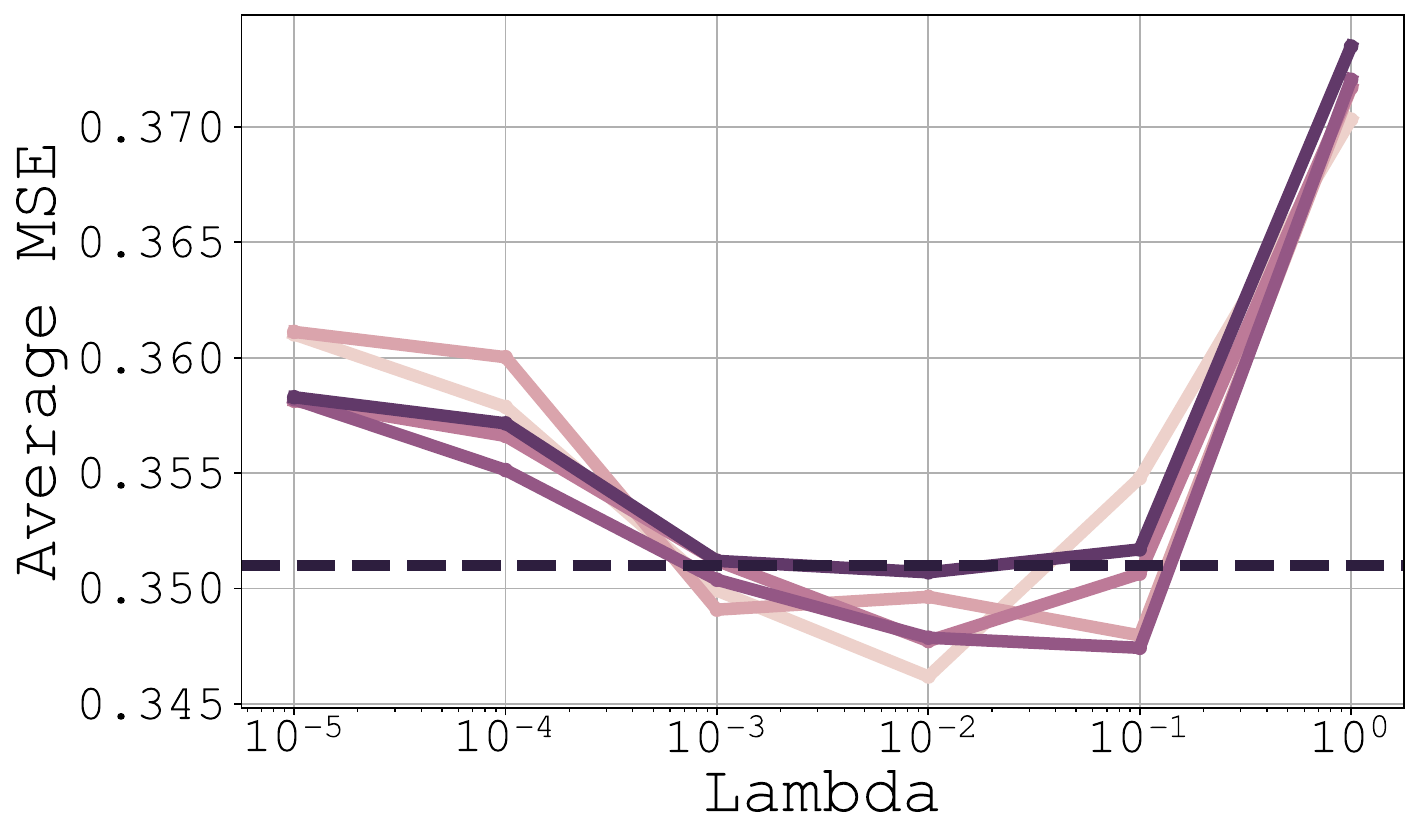}
        \caption{Dataset ETTh2, Horizon 192}
        \label{fig:etth2-192}
    \end{subfigure}
    \begin{subfigure}{.33\textwidth}
        \centering
        \includegraphics[width=\linewidth]{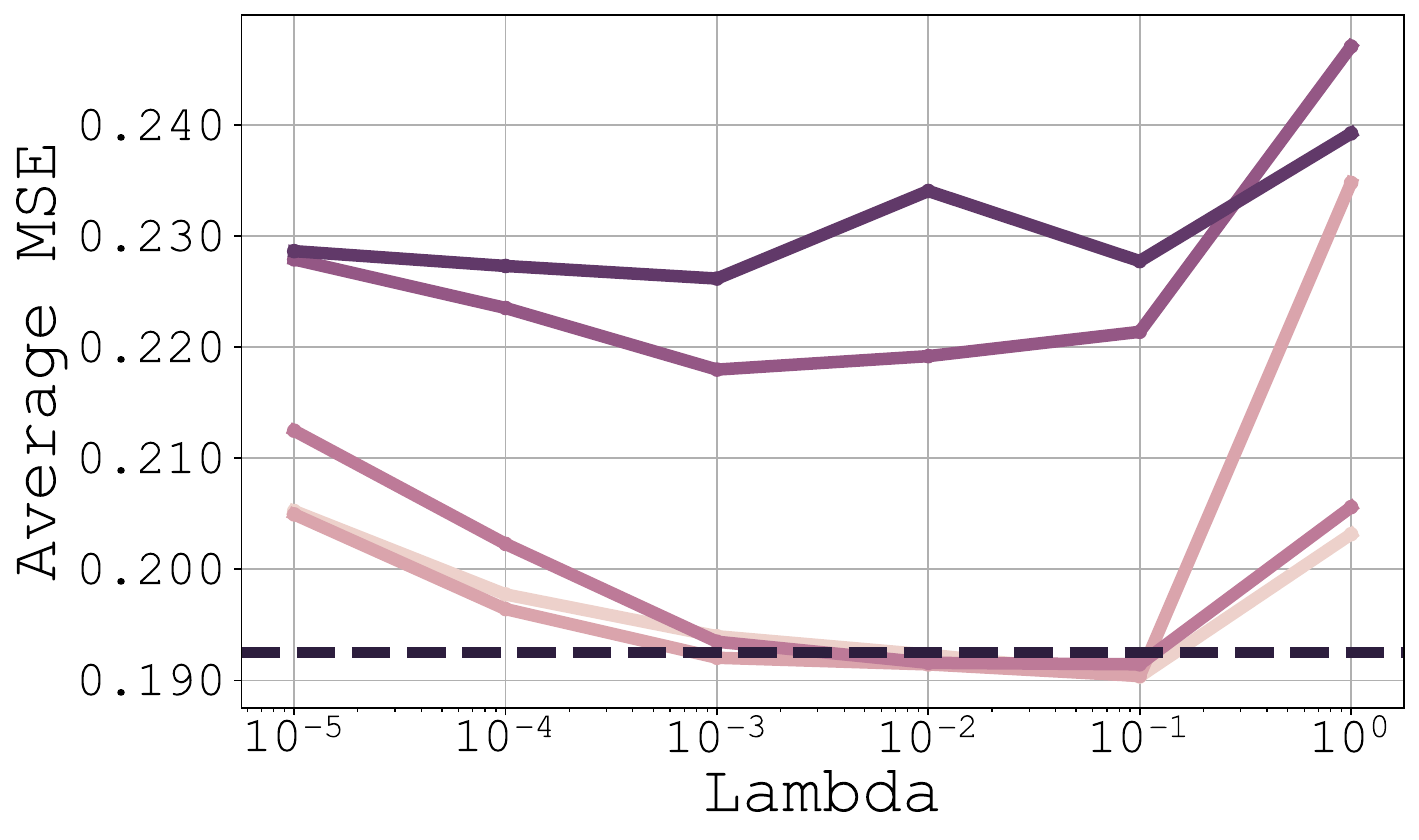}
        \caption{Dataset Weather, Horizon 192}
        \label{fig:weather-192}
    \end{subfigure}

    \begin{subfigure}{.33\textwidth}
        \centering
        \includegraphics[width=\linewidth]{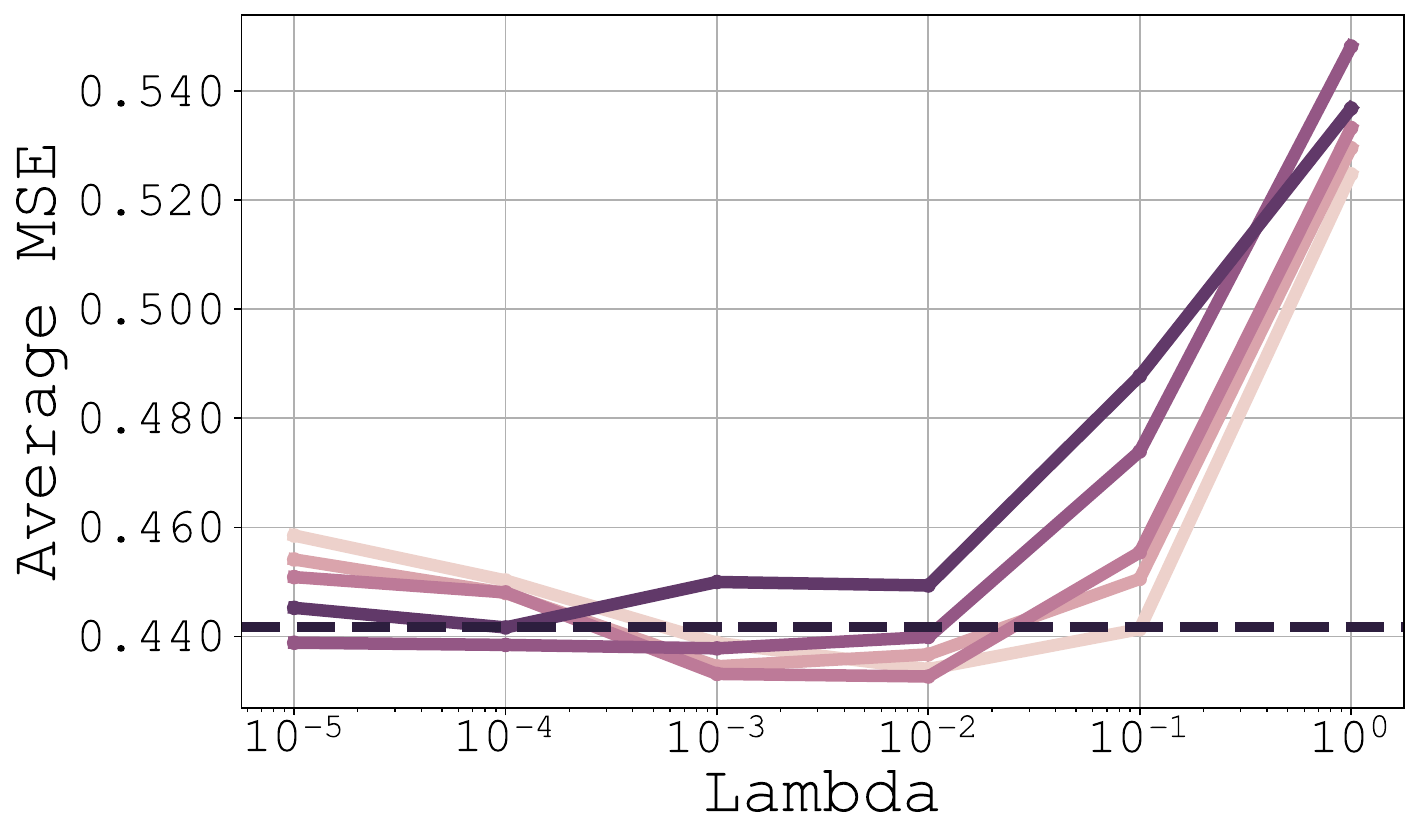}
        \caption{Dataset ETTh1, Horizon 336}
        \label{fig:etth1-336}
    \end{subfigure}%
    \begin{subfigure}{.33\textwidth}
        \centering
        \includegraphics[width=\linewidth]{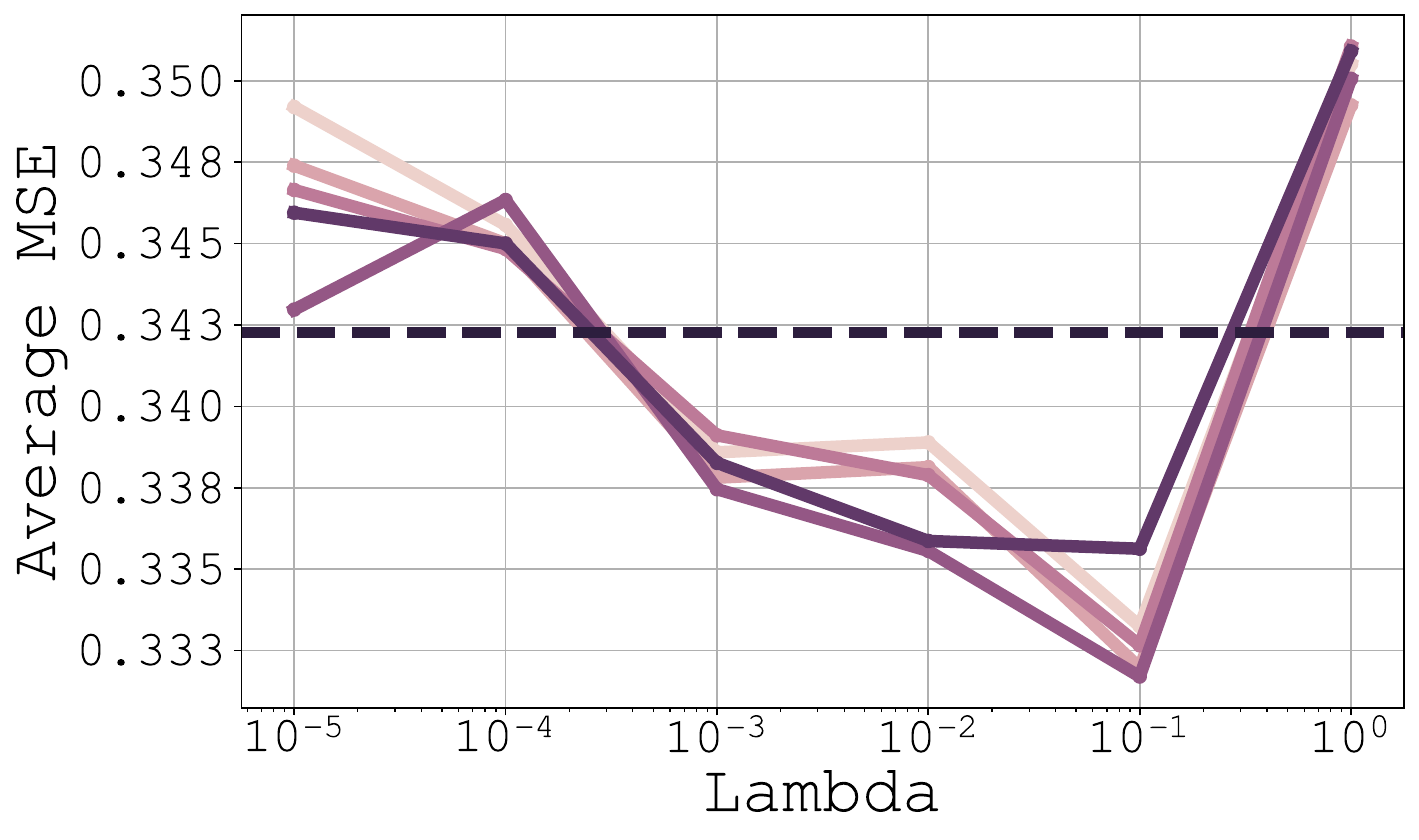}
        \caption{Dataset ETTh2, Horizon 336}
        \label{fig:etth2-336}
    \end{subfigure}
    \begin{subfigure}{.33\textwidth}
        \centering
        \includegraphics[width=\linewidth]{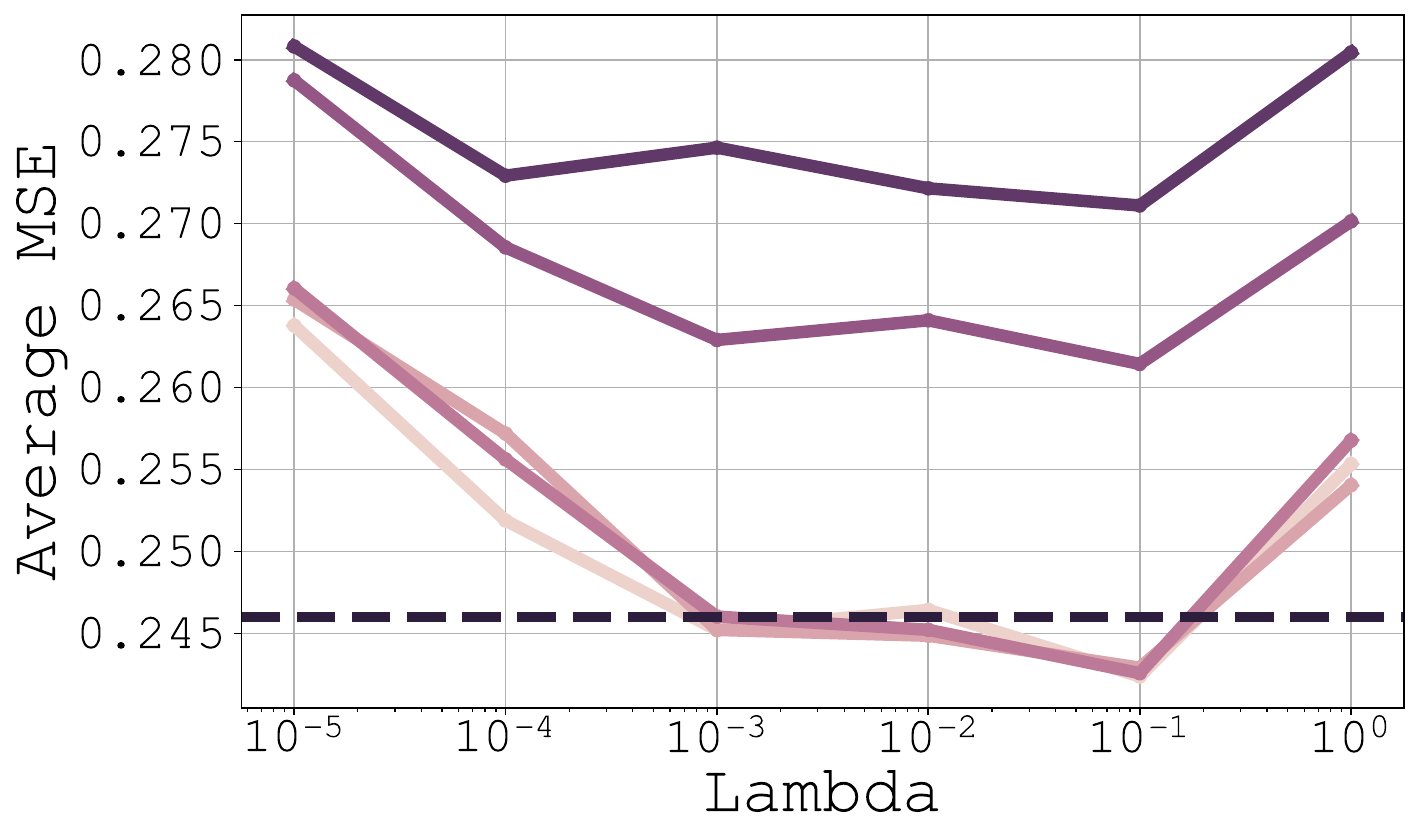}
        \caption{Dataset Weather, Horizon 336}
        \label{fig:weather-336}
    \end{subfigure}

    \begin{subfigure}{.33\textwidth}
        \centering
        \includegraphics[width=\linewidth]{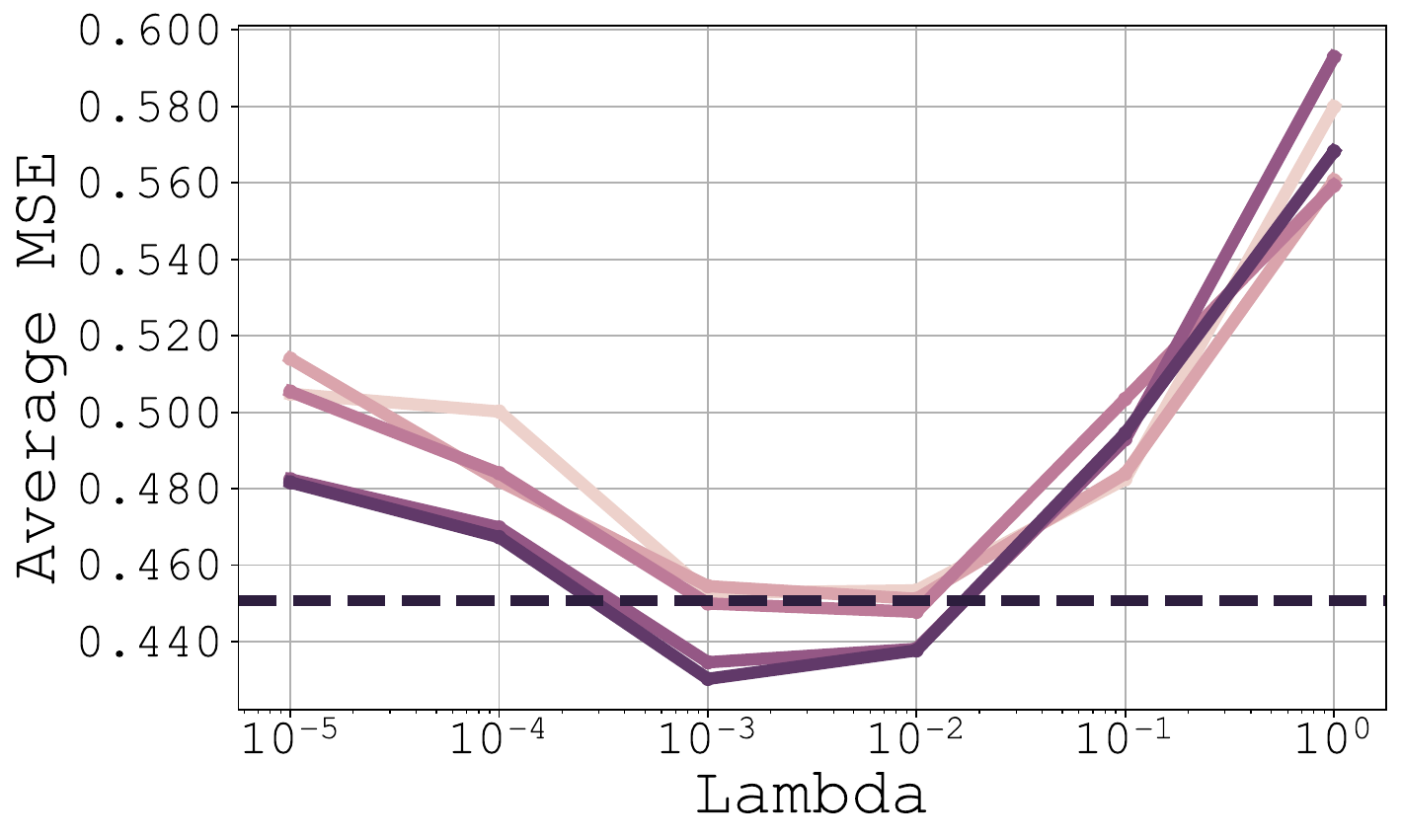}
        \caption{Dataset ETTh1, Horizon 720}
        \label{fig:etth1-720}
    \end{subfigure}%
    \begin{subfigure}{.33\textwidth}
        \centering
        \includegraphics[width=\linewidth]{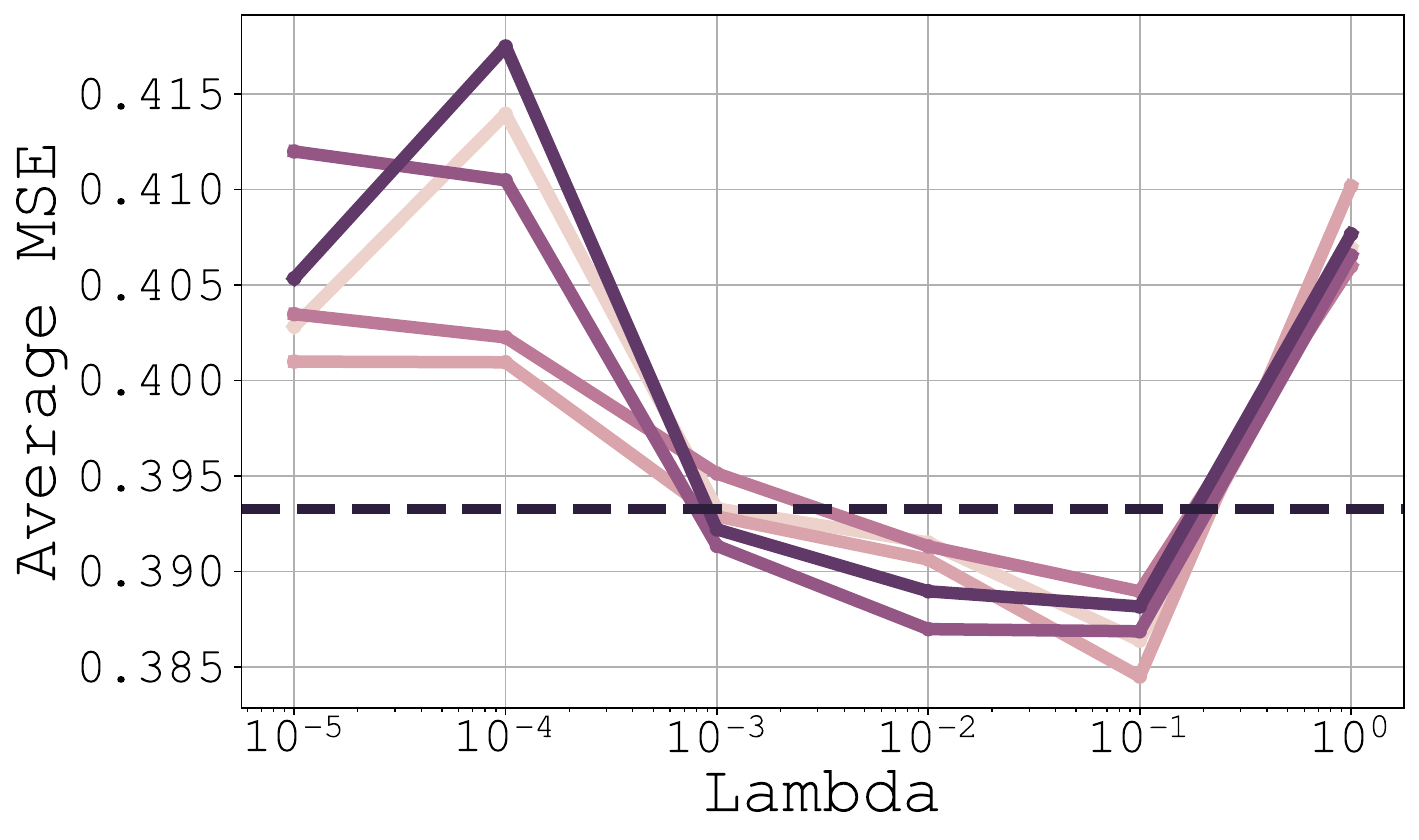}
        \caption{Dataset ETTh2, Horizon 720}
        \label{fig:etth2-720}
    \end{subfigure}
    \begin{subfigure}{.33\textwidth}
        \centering
        \includegraphics[width=\linewidth]{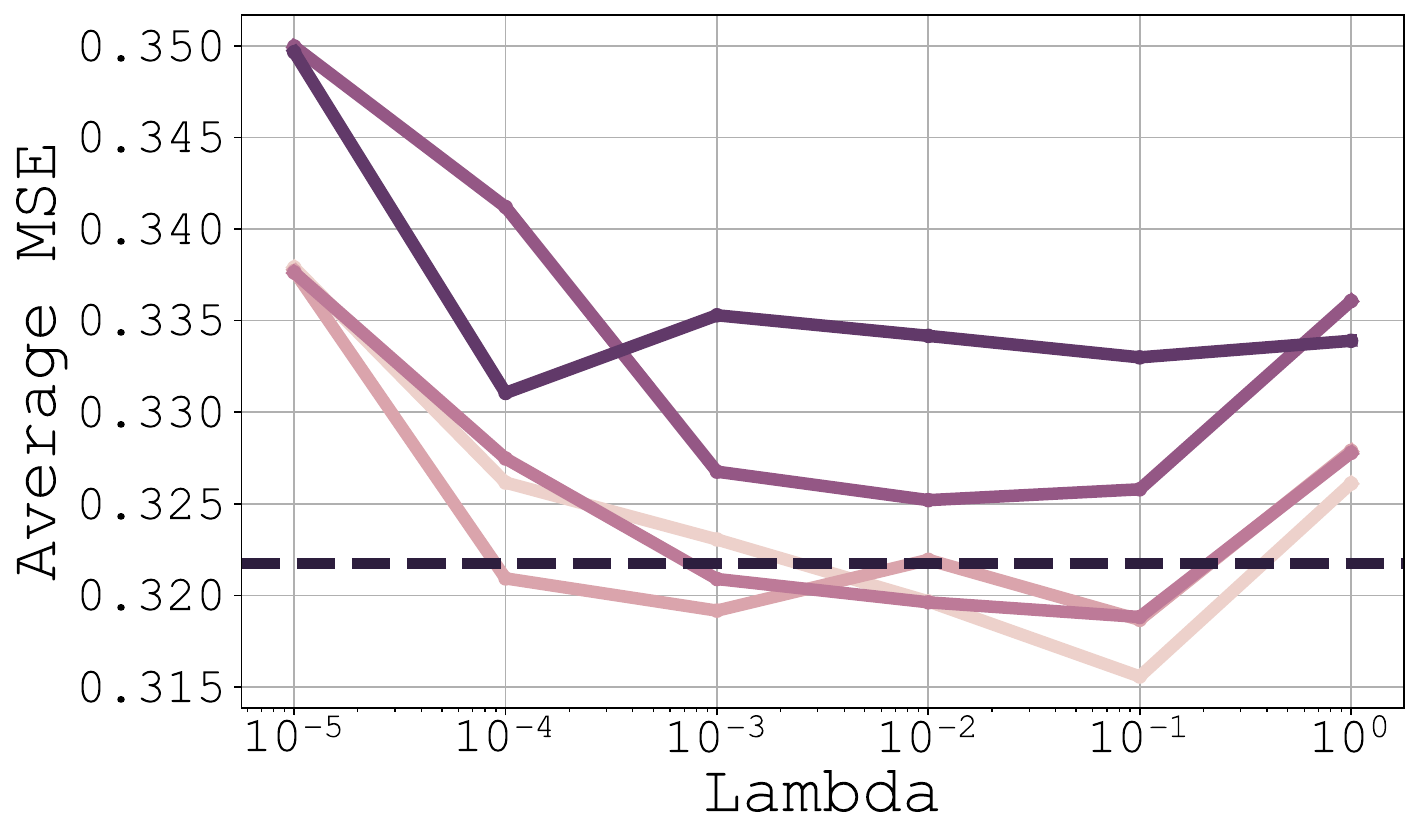}
        \caption{Dataset Weather, Horizon 720}
        \label{fig:weather-720}
    \end{subfigure}

    \caption{Results of our optimization method on different datasets and horizons averaged across 3 different seeds for each gamma and lambda values for the \texttt{PatchTST} baseline}
    \label{fig:datasets}
\end{figure}

\begin{figure}[h!]
    \centering
    \begin{subfigure}{.33\textwidth}
        \centering
        \includegraphics[width=\linewidth]{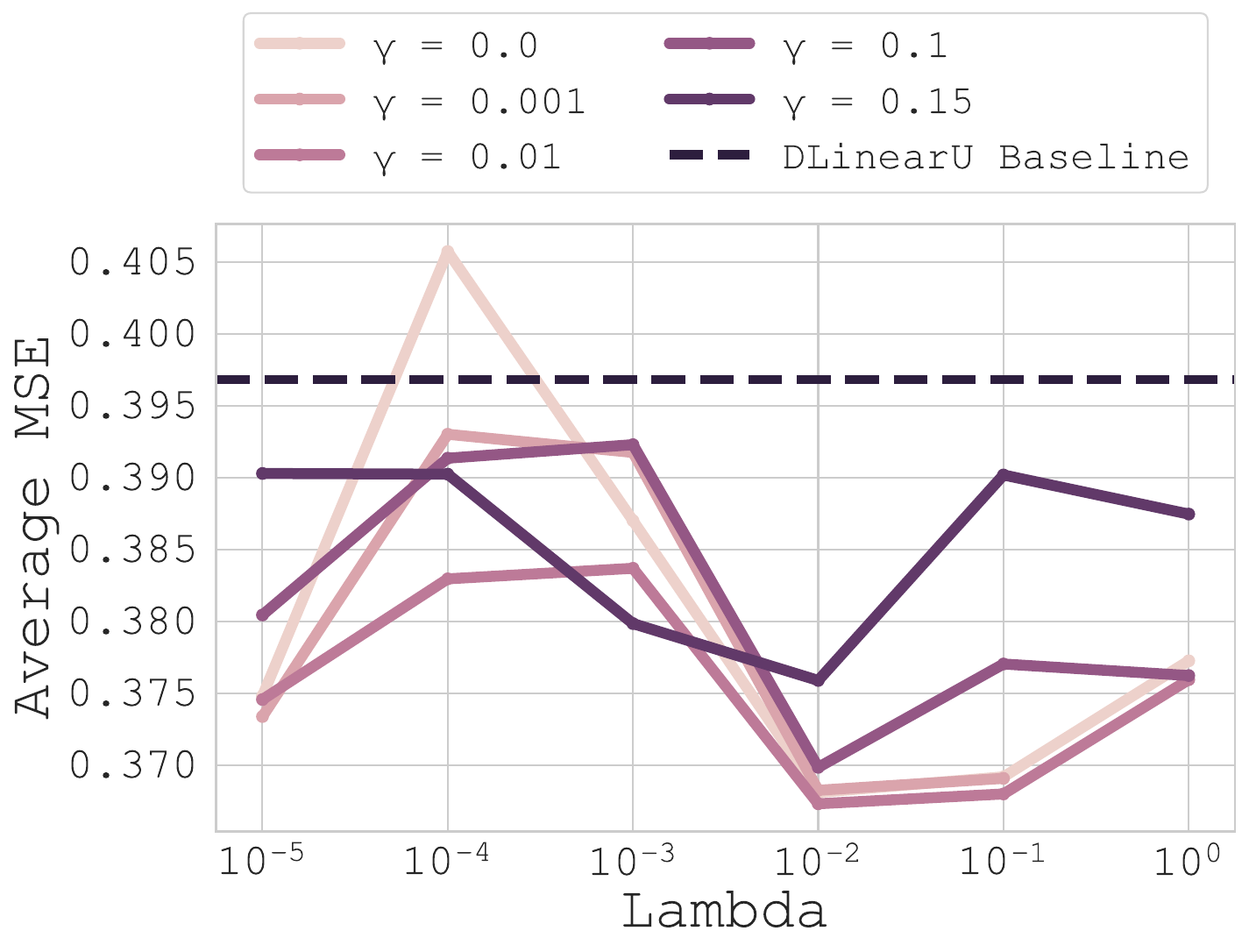}
        \caption{Dataset ETTh1, Horizon 96}
        \label{fig:etth1-96}
    \end{subfigure}%
    \begin{subfigure}{.33\textwidth}
        \centering
        \includegraphics[width=\linewidth]{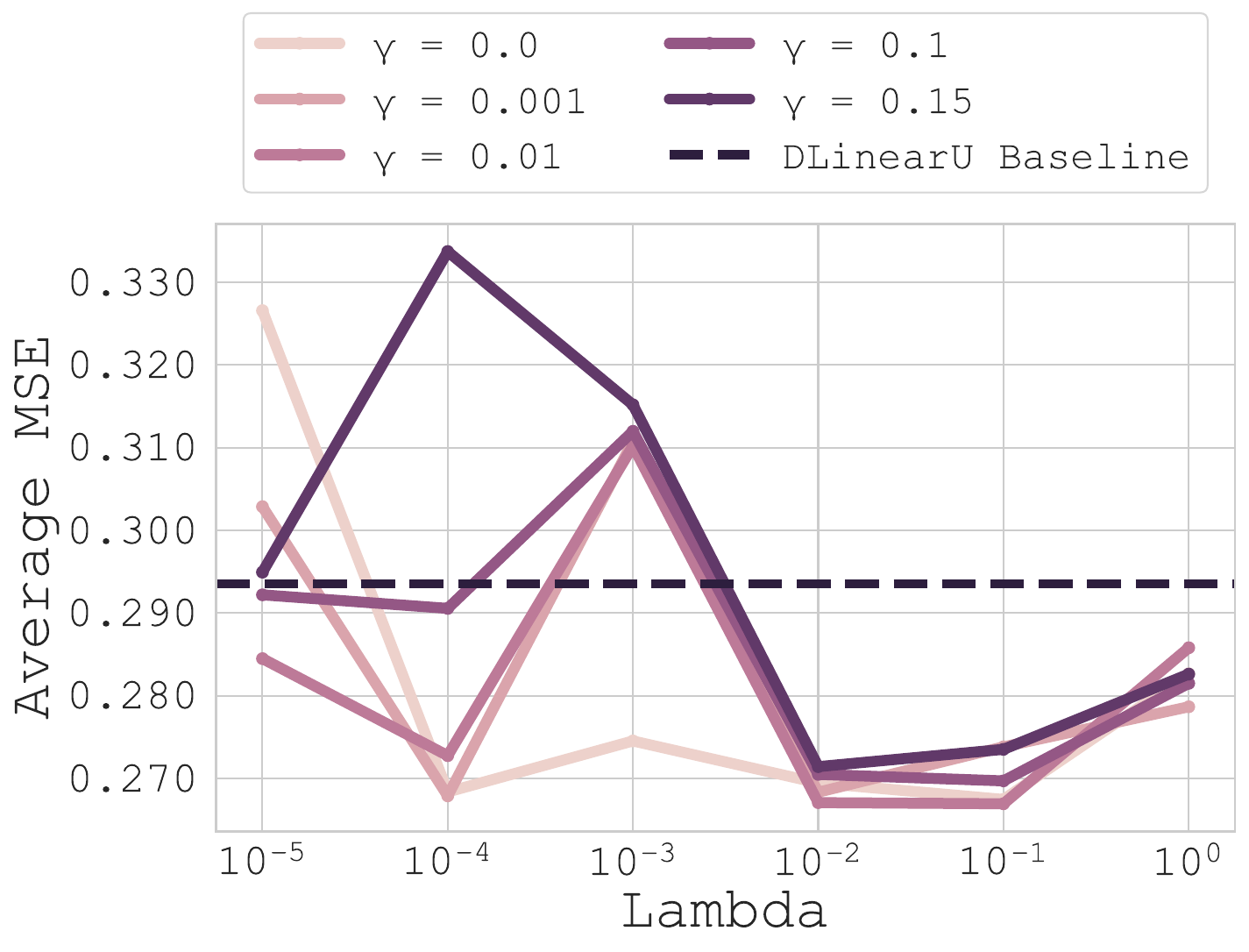}
        \caption{Dataset ETTh2, Horizon 96}
        \label{fig:etth2-96}
    \end{subfigure}
    \begin{subfigure}{.33\textwidth}
        \centering
        \includegraphics[width=\linewidth]{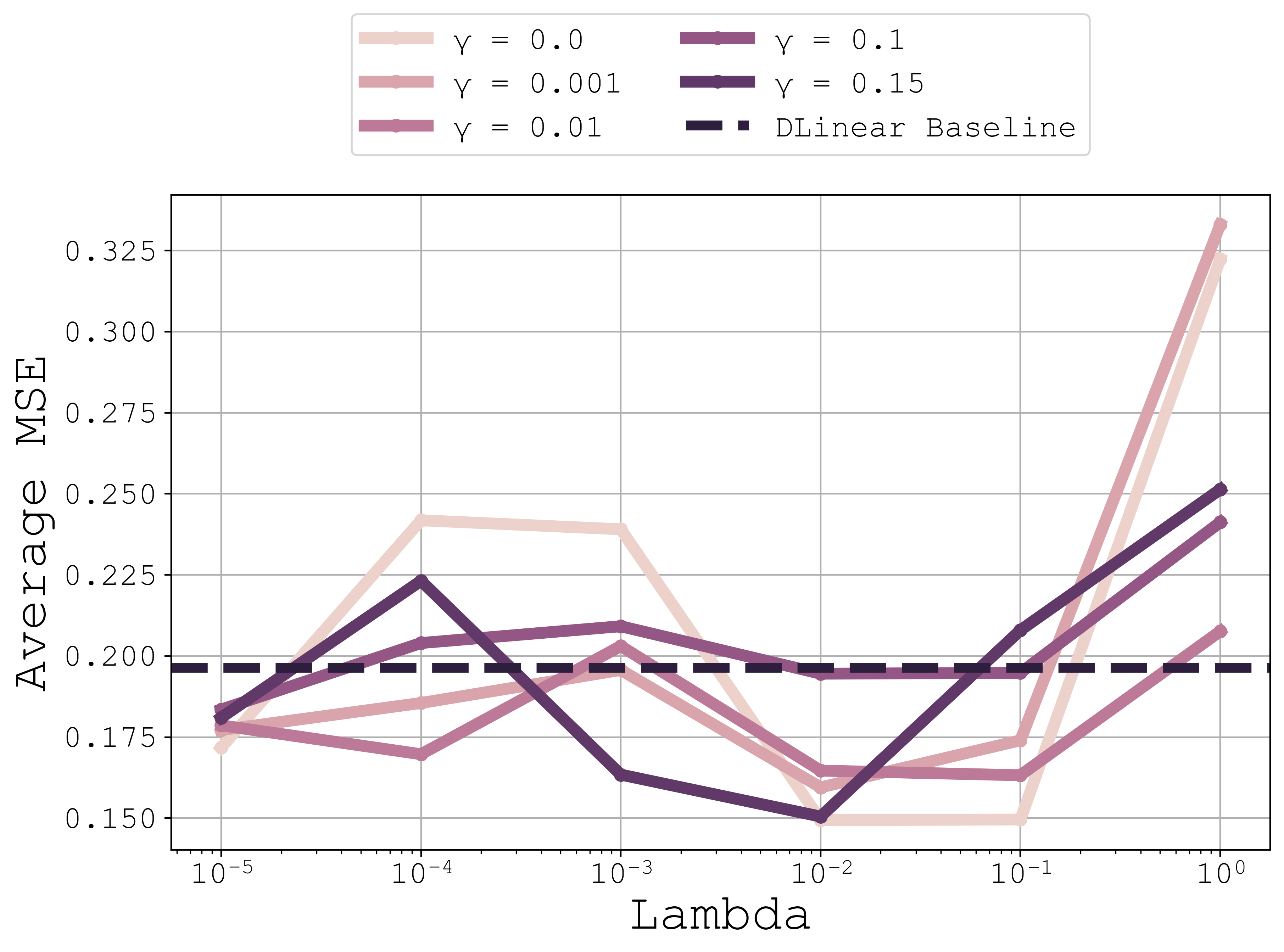}
        \caption{Dataset Weather, Horizon 96}
        \label{fig:weather-96}
    \end{subfigure}

    \begin{subfigure}{.33\textwidth}
        \centering
        \includegraphics[width=\linewidth]{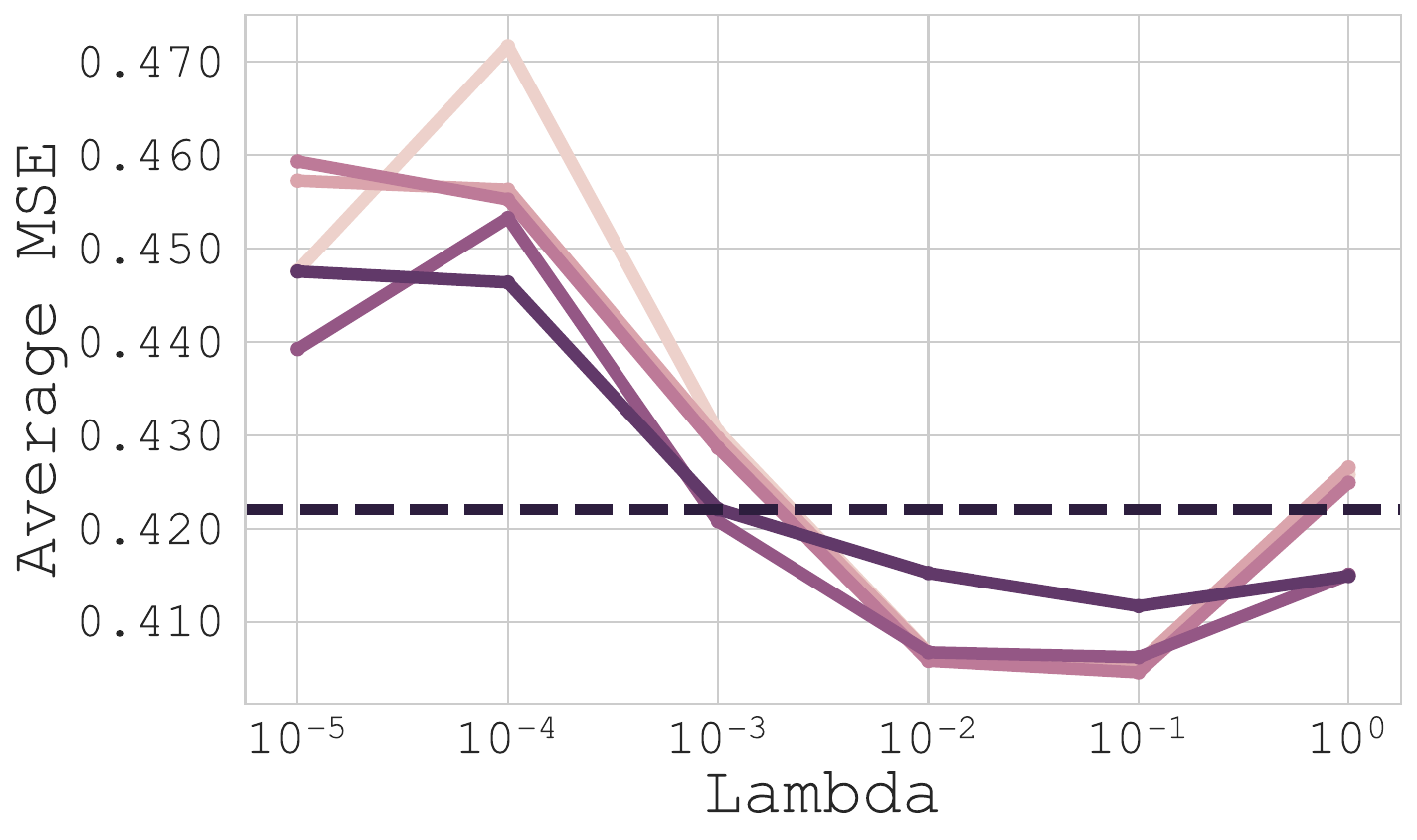}
        \caption{Dataset ETTh1, Horizon 192}
        \label{fig:etth1-192}
    \end{subfigure}%
    \begin{subfigure}{.33\textwidth}
        \centering
        \includegraphics[width=\linewidth]{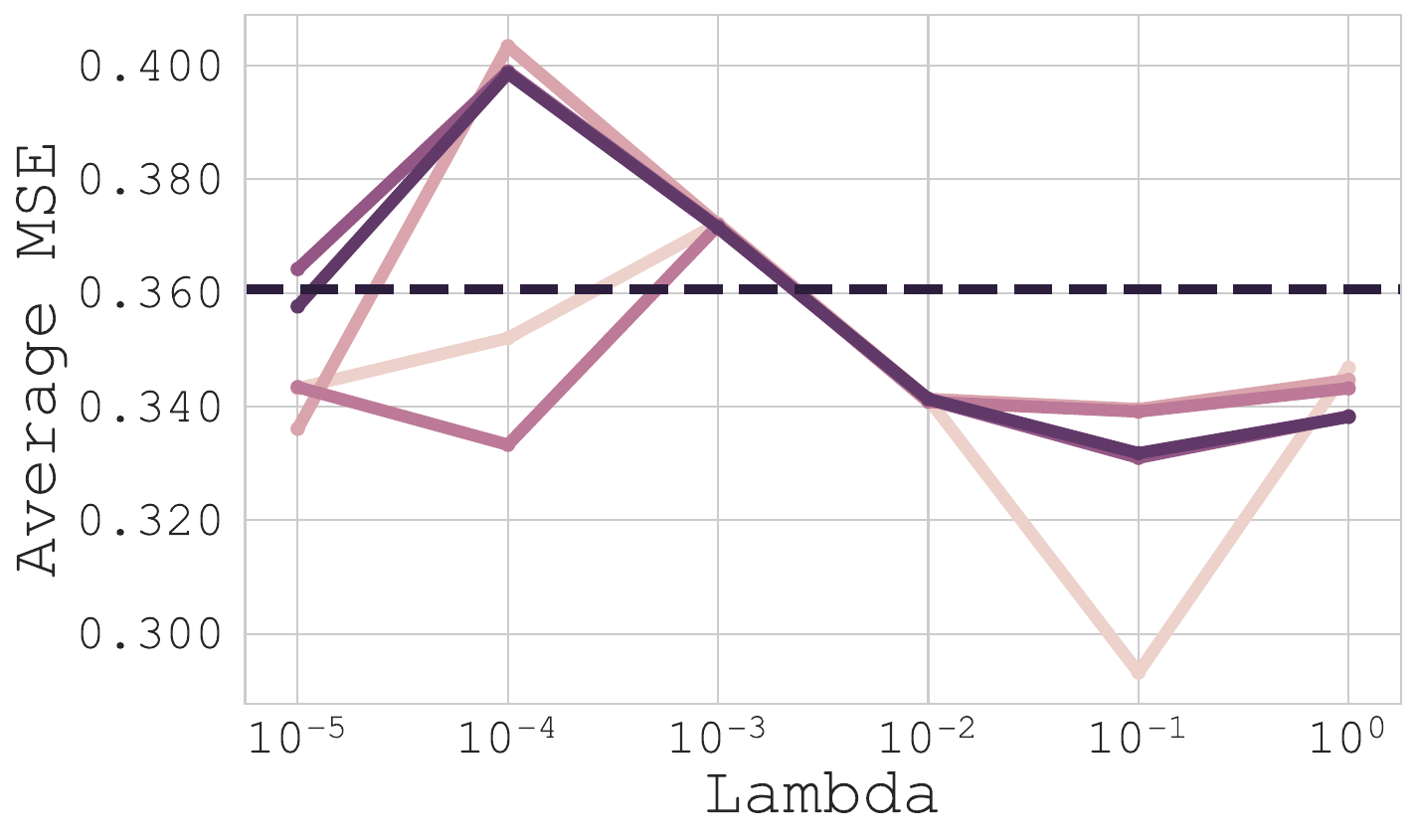}
        \caption{Dataset ETTh2, Horizon 192}
        \label{fig:etth2-192}
    \end{subfigure}
    \begin{subfigure}{.33\textwidth}
        \centering
        \includegraphics[width=\linewidth]{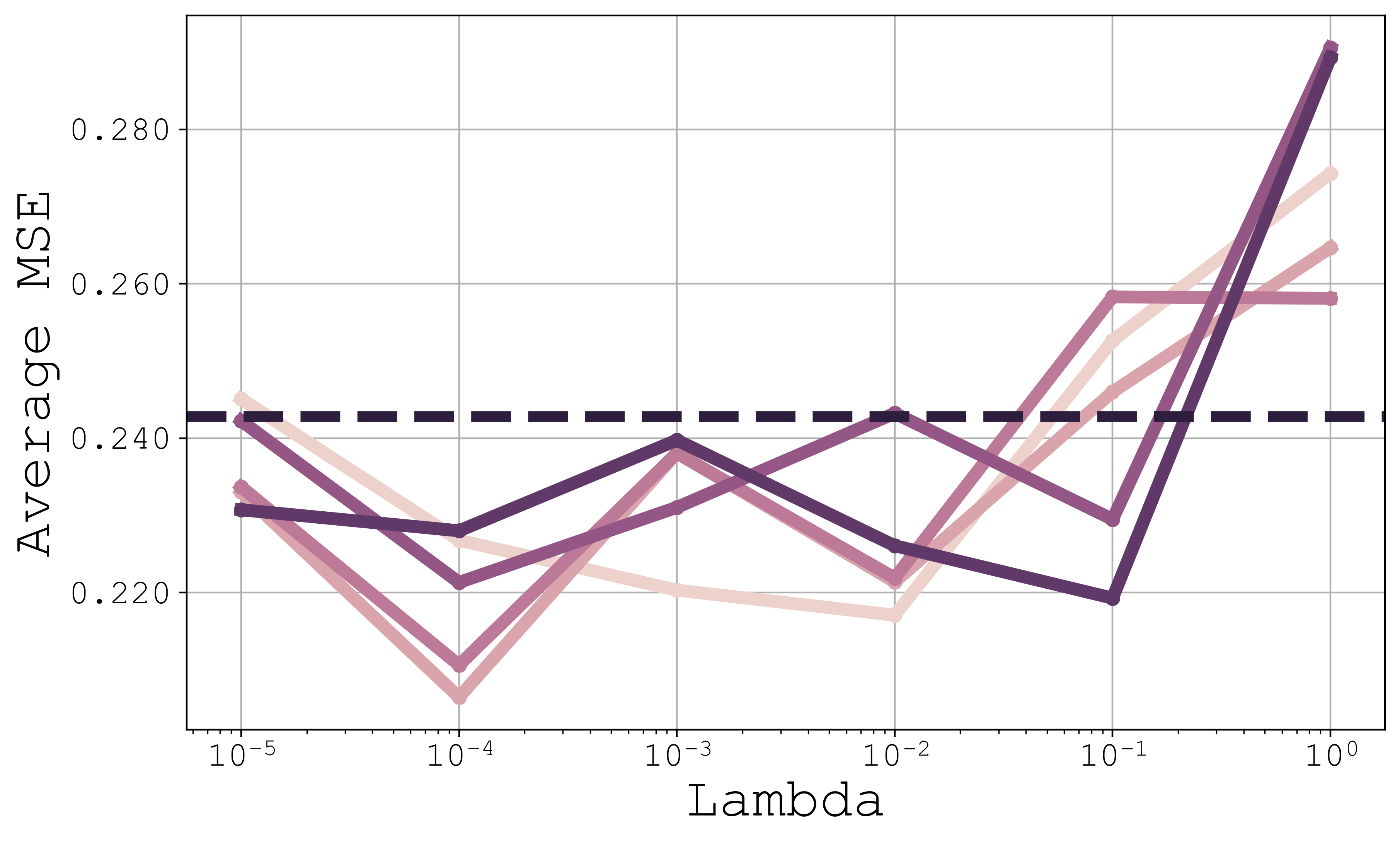}
        \caption{Dataset Weather, Horizon 192}
        \label{fig:weather-192}
    \end{subfigure}

    \begin{subfigure}{.33\textwidth}
        \centering
        \includegraphics[width=\linewidth]{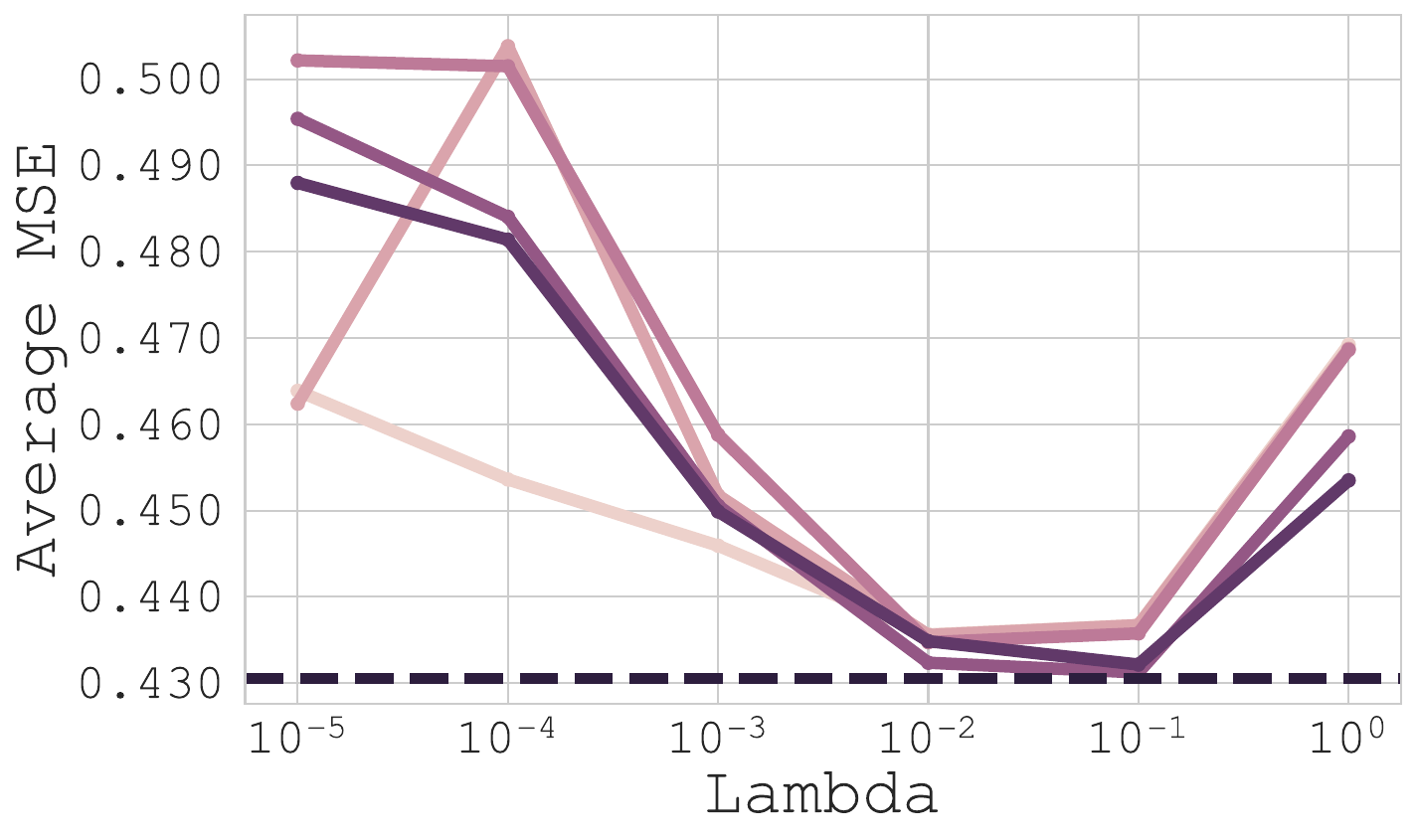}
        \caption{Dataset ETTh1, Horizon 336}
        \label{fig:etth1-336}
    \end{subfigure}%
    \begin{subfigure}{.33\textwidth}
        \centering
        \includegraphics[width=\linewidth]{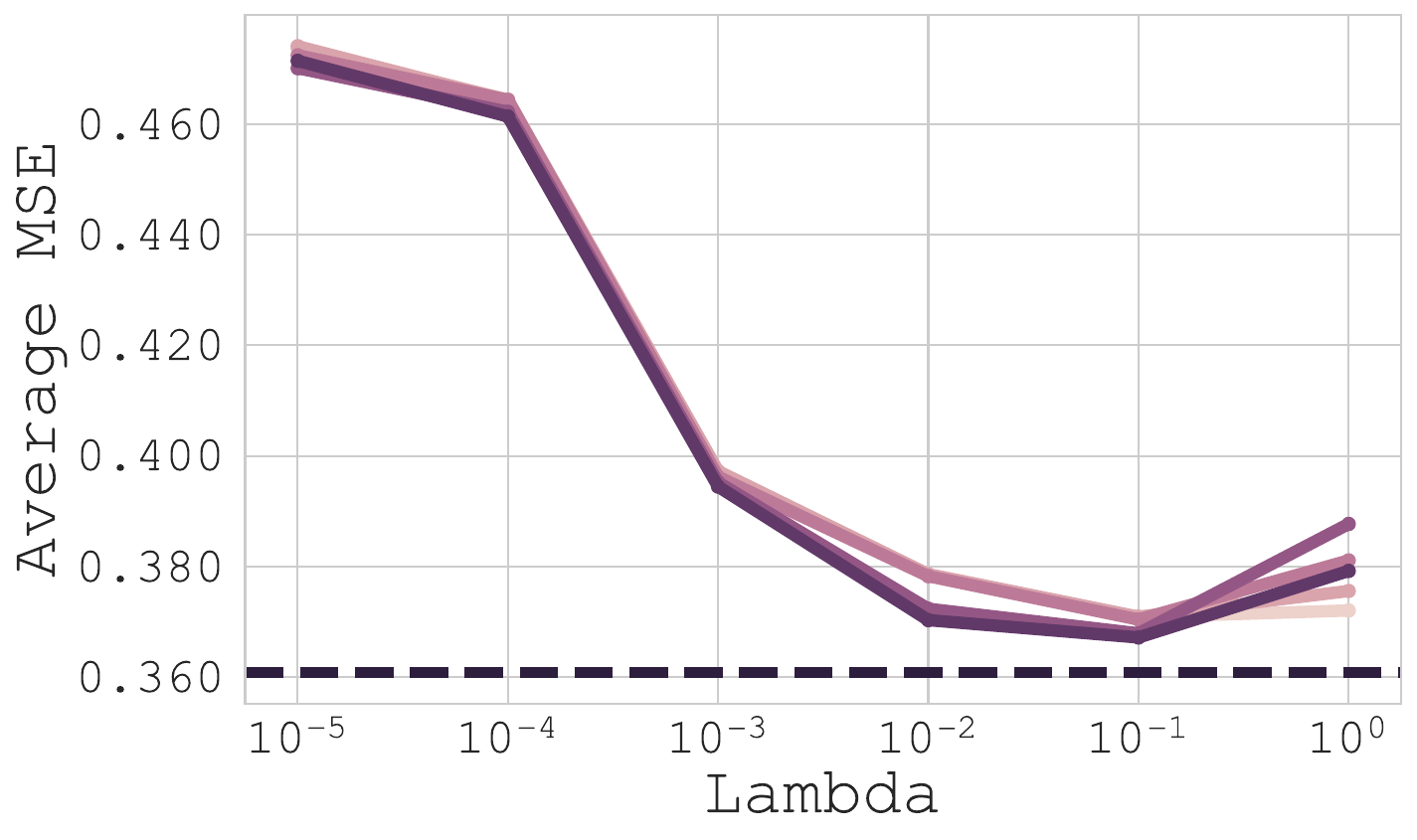}
        \caption{Dataset ETTh2, Horizon 336}
        \label{fig:etth2-336}
    \end{subfigure}
    \begin{subfigure}{.33\textwidth}
        \centering
        \includegraphics[width=\linewidth]{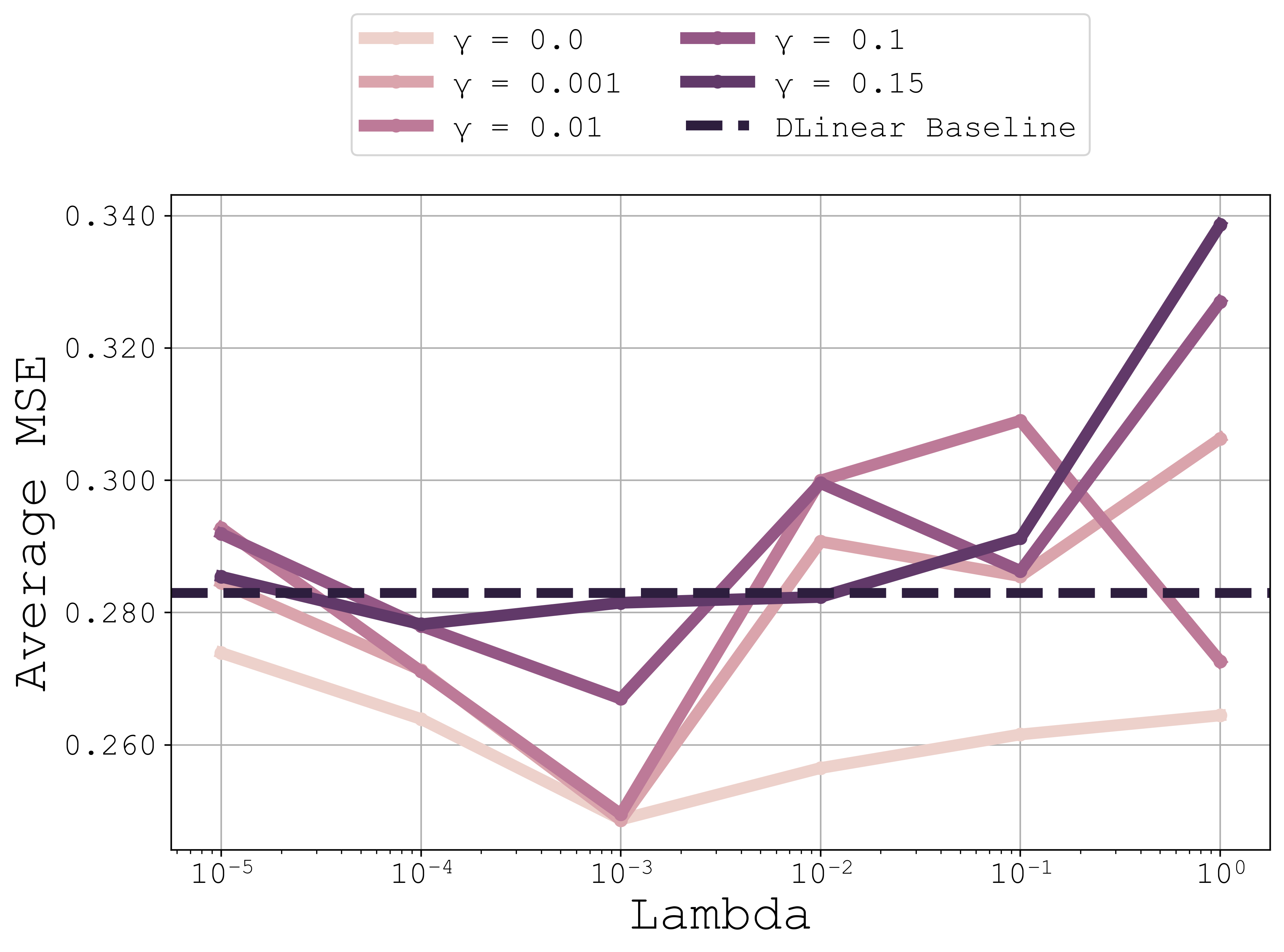}
        \caption{Dataset Weather, Horizon 336}
        \label{fig:weather-336}
    \end{subfigure}

    \begin{subfigure}{.33\textwidth}
        \centering
        \includegraphics[width=\linewidth]{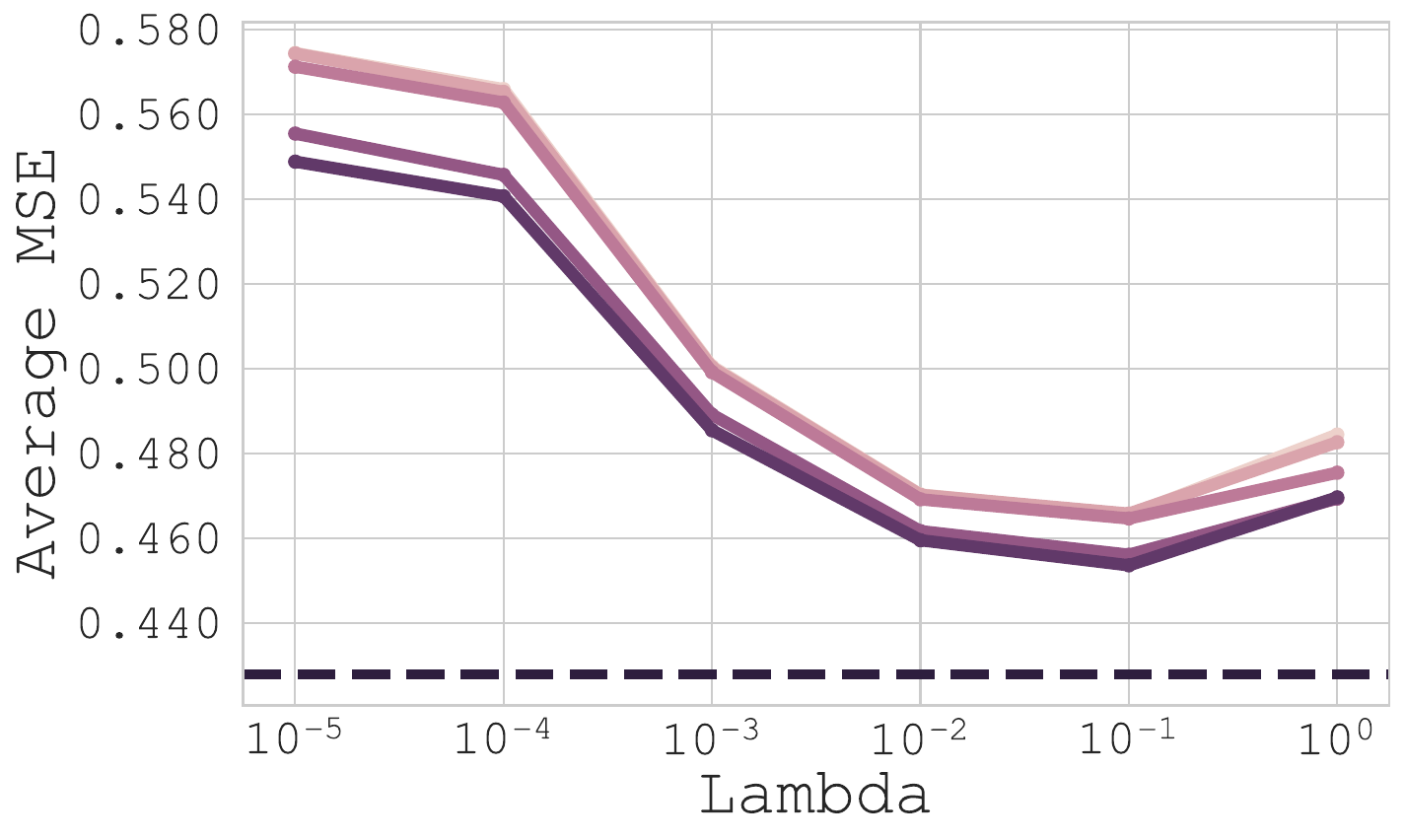}
        \caption{Dataset ETTh1, Horizon 720}
        \label{fig:etth1-720}
    \end{subfigure}%
    \begin{subfigure}{.33\textwidth}
        \centering
        \includegraphics[width=\linewidth]{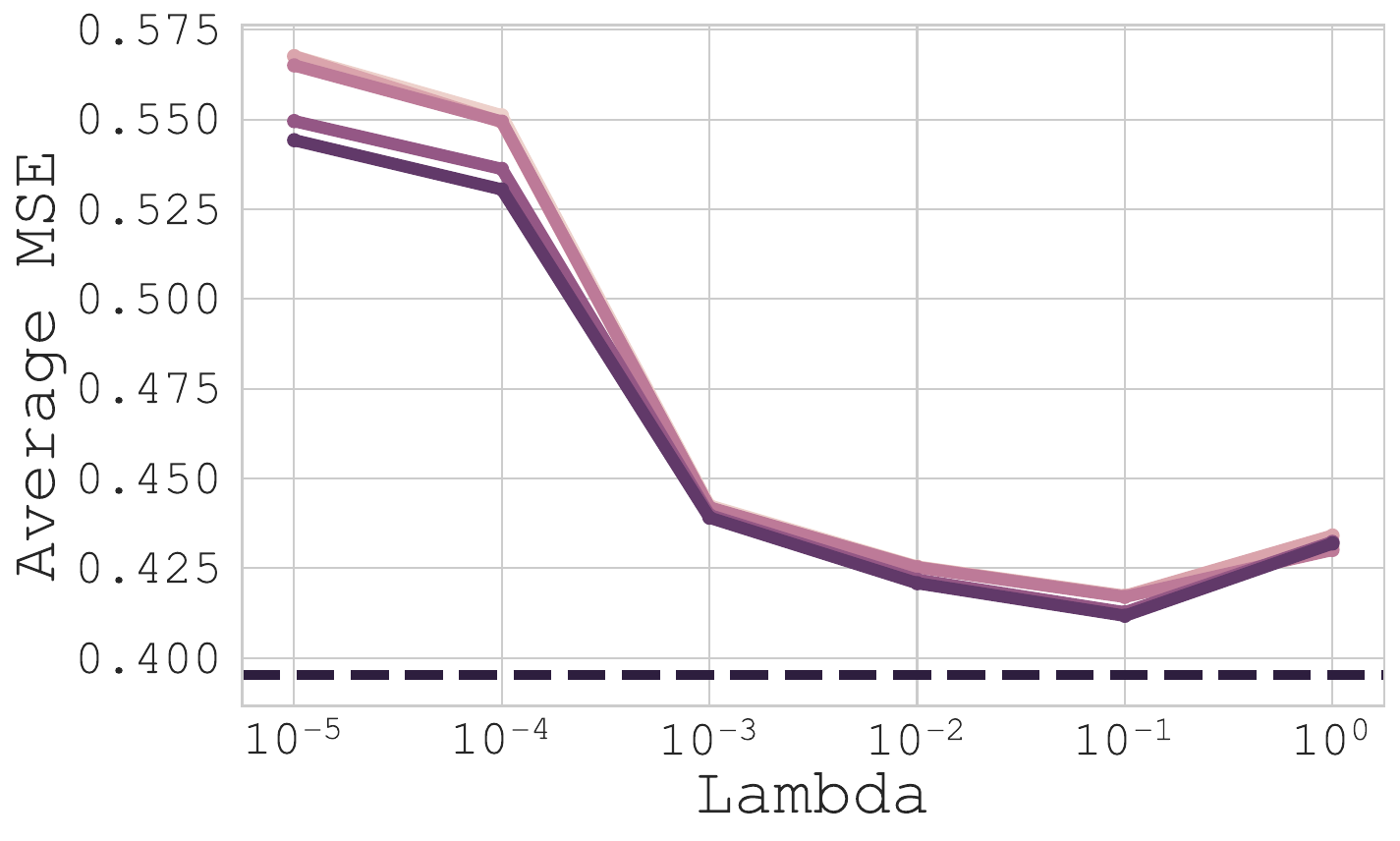}
        \caption{Dataset ETTh2, Horizon 720}
        \label{fig:etth2-720}
    \end{subfigure}
    \begin{subfigure}{.33\textwidth}
        \centering
        \includegraphics[width=\linewidth]{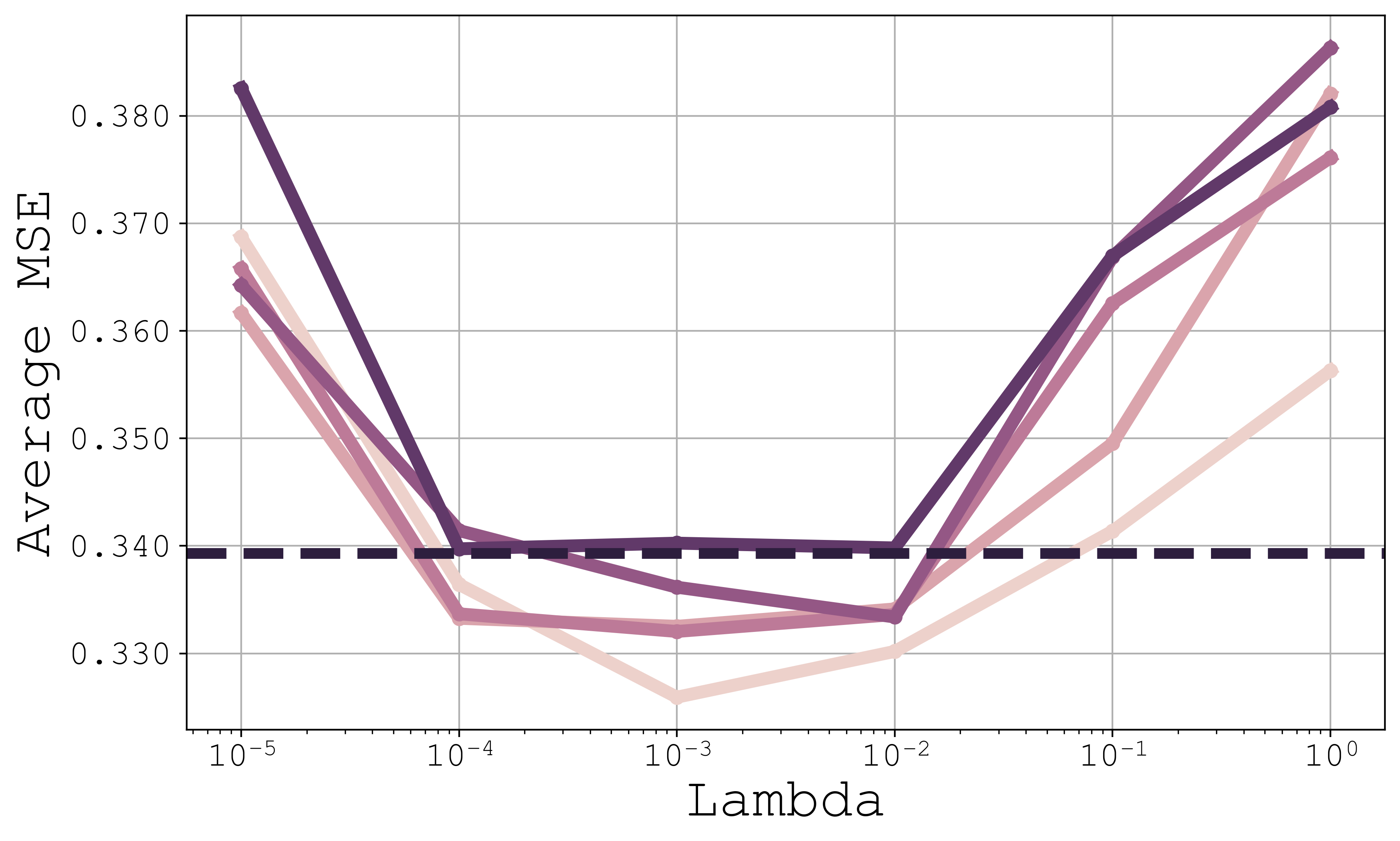}
        \caption{Dataset Weather, Horizon 720}
        \label{fig:weather-720}
    \end{subfigure}

    \caption{Results of our optimization method on different datasets and horizons averaged across 3 different seeds for each gamma and lambda values for the \texttt{DLinearU} baseline}
    \label{fig:datasets}
\end{figure}

\begin{figure}[htp]
    \centering
    \begin{subfigure}{.33\textwidth}
        \centering
        \includegraphics[width=\linewidth]{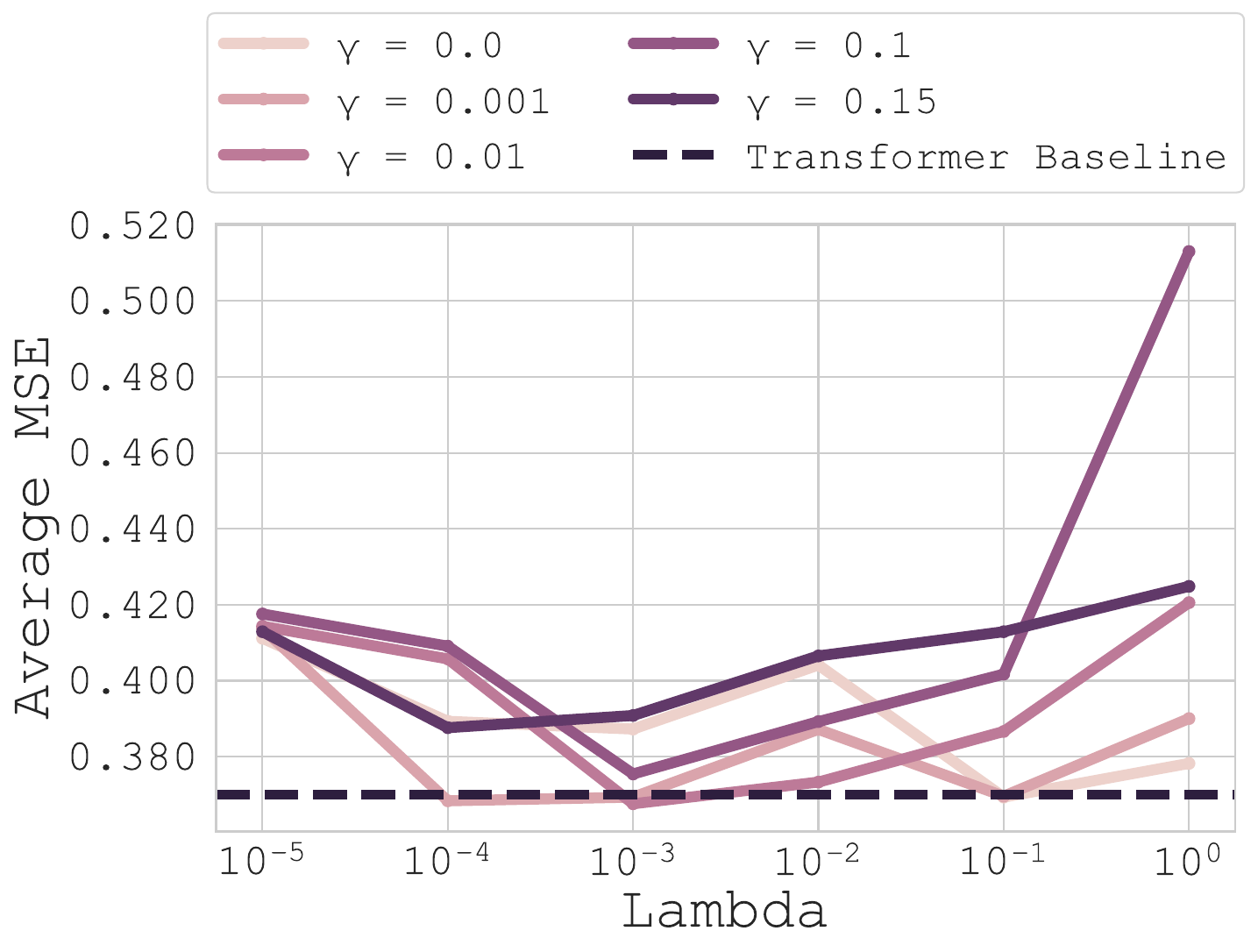}
        \caption{Dataset ETTh1, Horizon 96}
        \label{fig:etth1-96}
    \end{subfigure}%
    \begin{subfigure}{.33\textwidth}
        \centering
        \includegraphics[width=\linewidth]{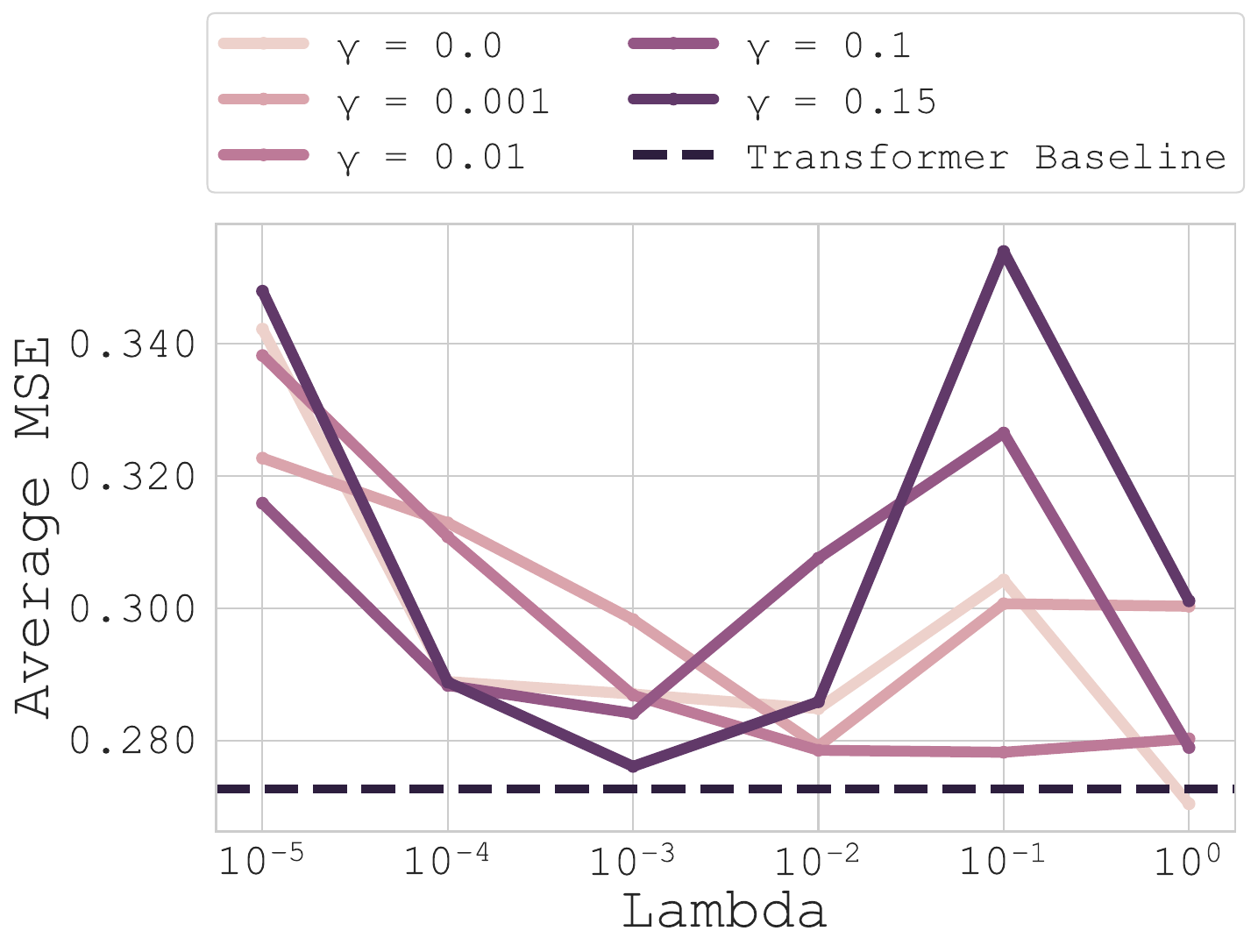}
        \caption{Dataset ETTh2, Horizon 96}
        \label{fig:etth2-96}
    \end{subfigure}
    \begin{subfigure}{.33\textwidth}
        \centering
        \includegraphics[width=\linewidth]{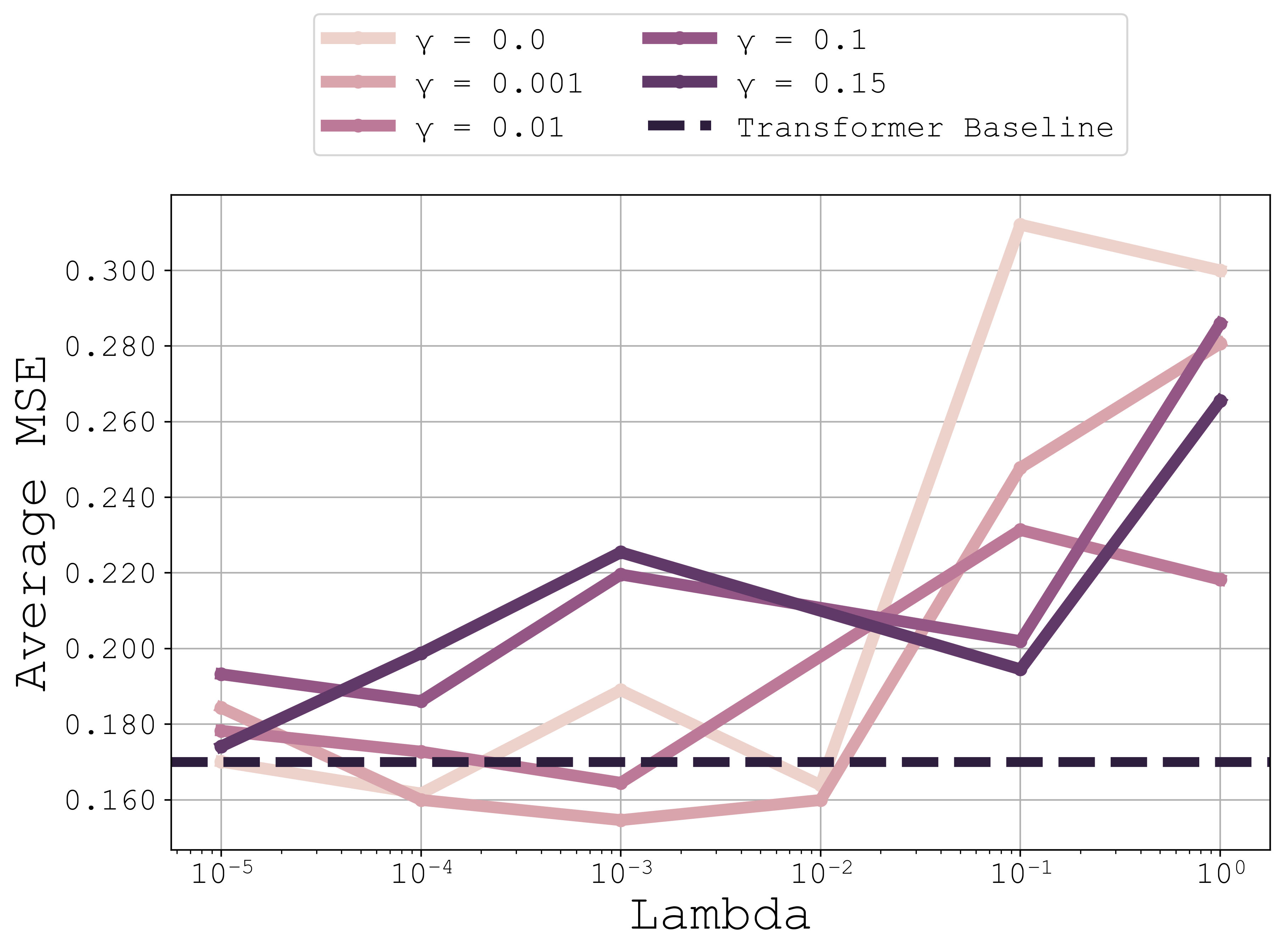}
        \caption{Dataset Weather, Horizon 96}
        \label{fig:weather-96}
    \end{subfigure}

    \begin{subfigure}{.33\textwidth}
        \centering
        \includegraphics[width=\linewidth]{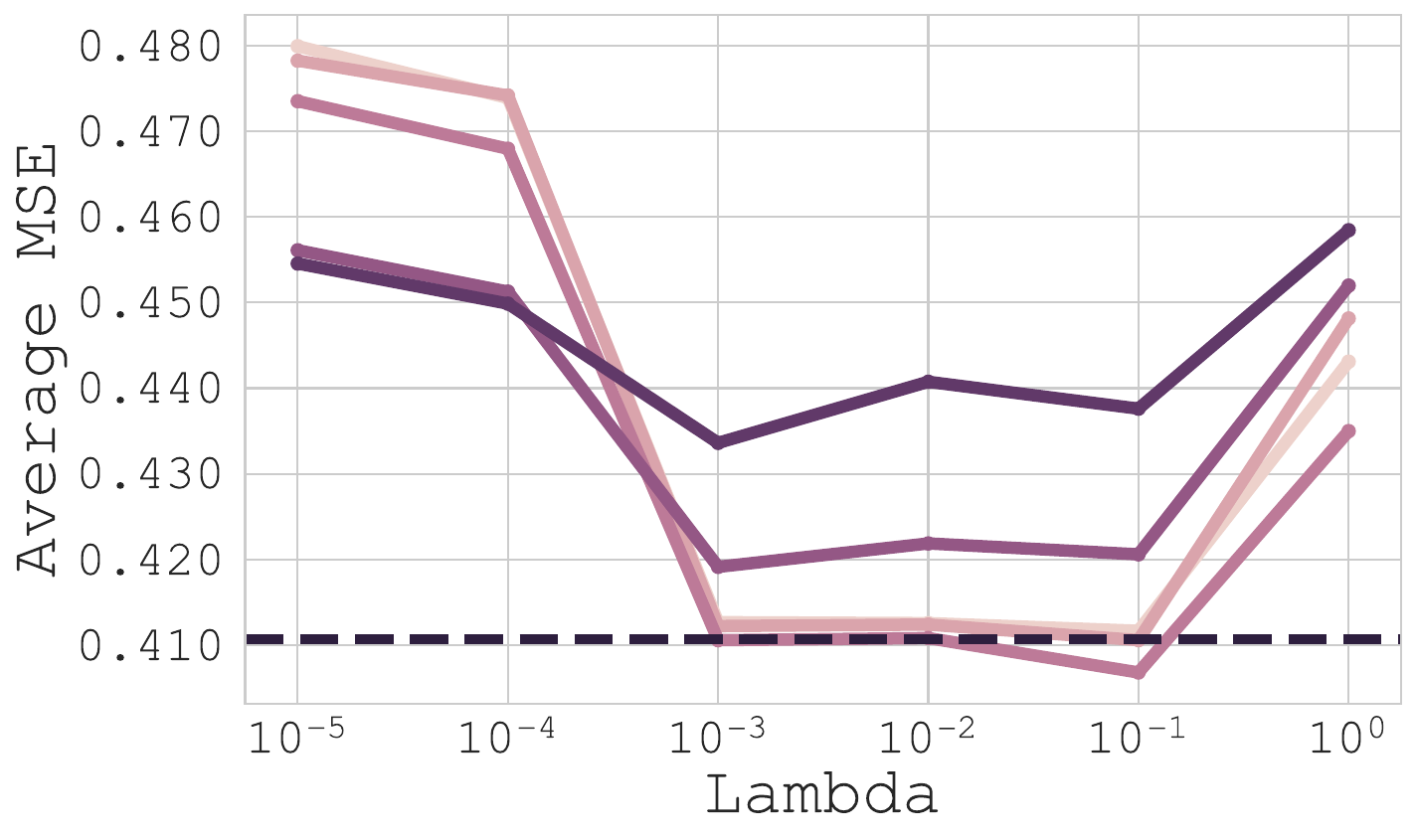}
        \caption{Dataset ETTh1, Horizon 192}
        \label{fig:etth1-192}
    \end{subfigure}%
    \begin{subfigure}{.33\textwidth}
        \centering
        \includegraphics[width=\linewidth]{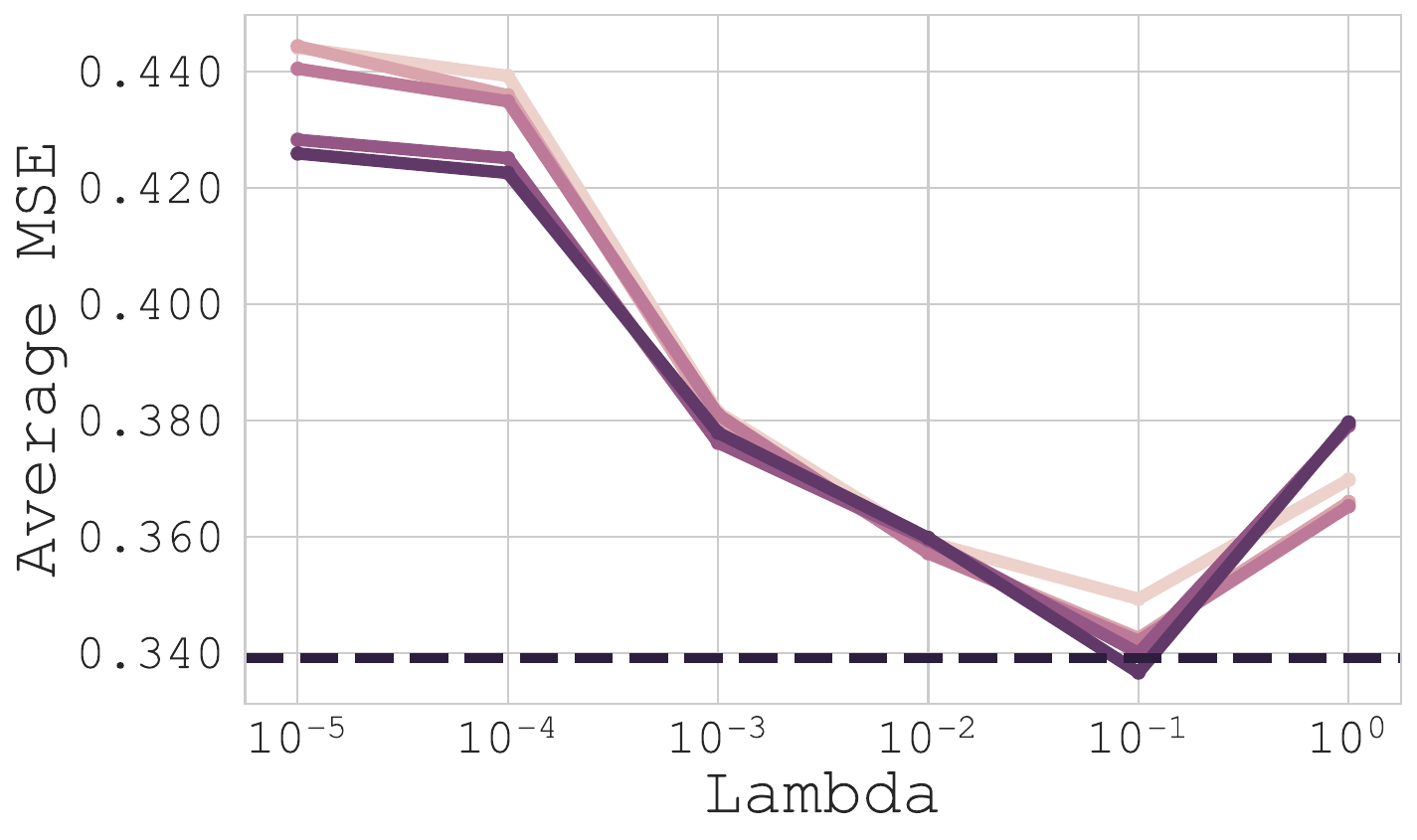}
        \caption{Dataset ETTh2, Horizon 192}
        \label{fig:etth2-192}
    \end{subfigure}
    \begin{subfigure}{.33\textwidth}
        \centering
        \includegraphics[width=\linewidth]{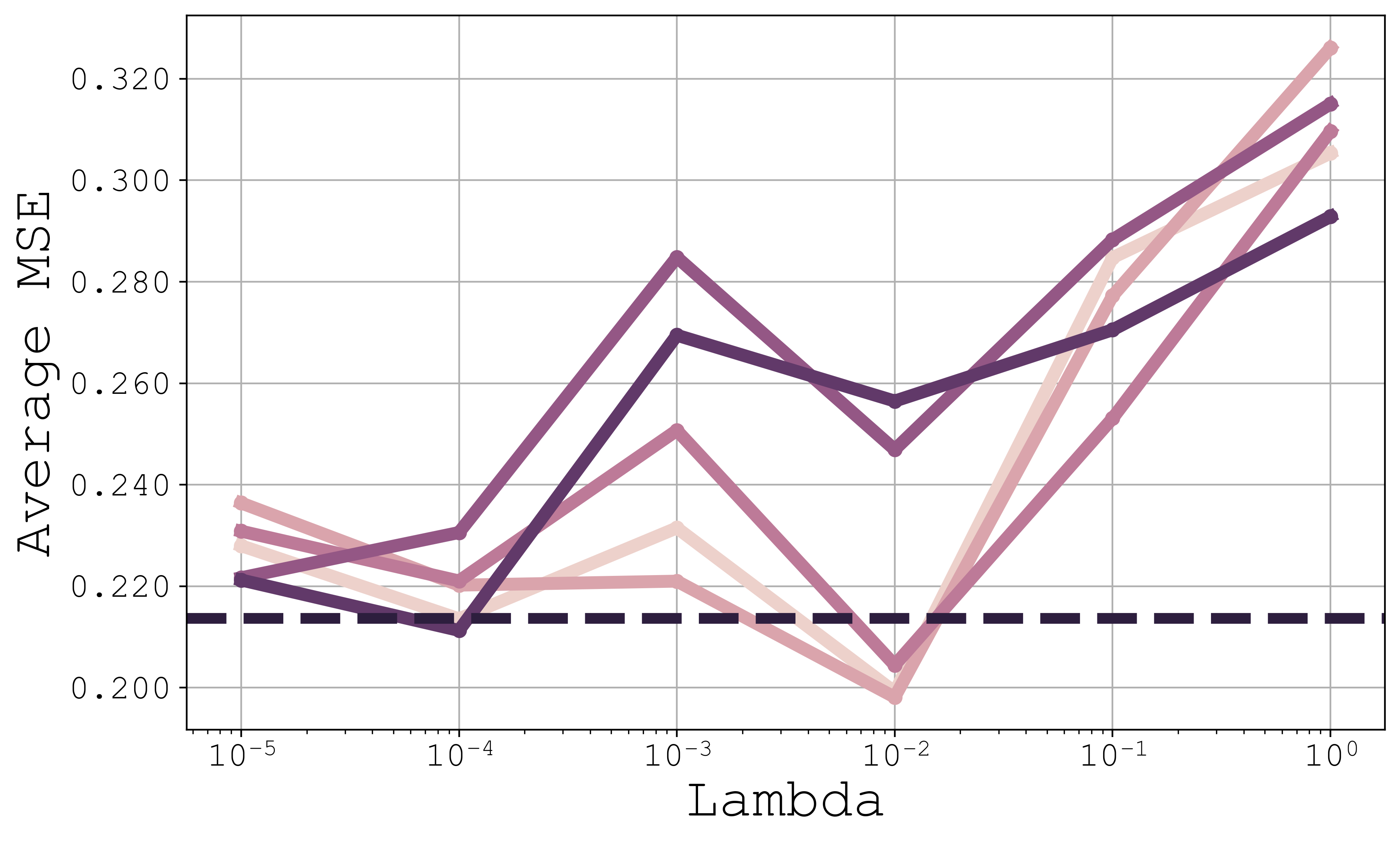}
        \caption{Dataset Weather, Horizon 192}
        \label{fig:weather-192}
    \end{subfigure}

    \begin{subfigure}{.33\textwidth}
        \centering
        \includegraphics[width=\linewidth]{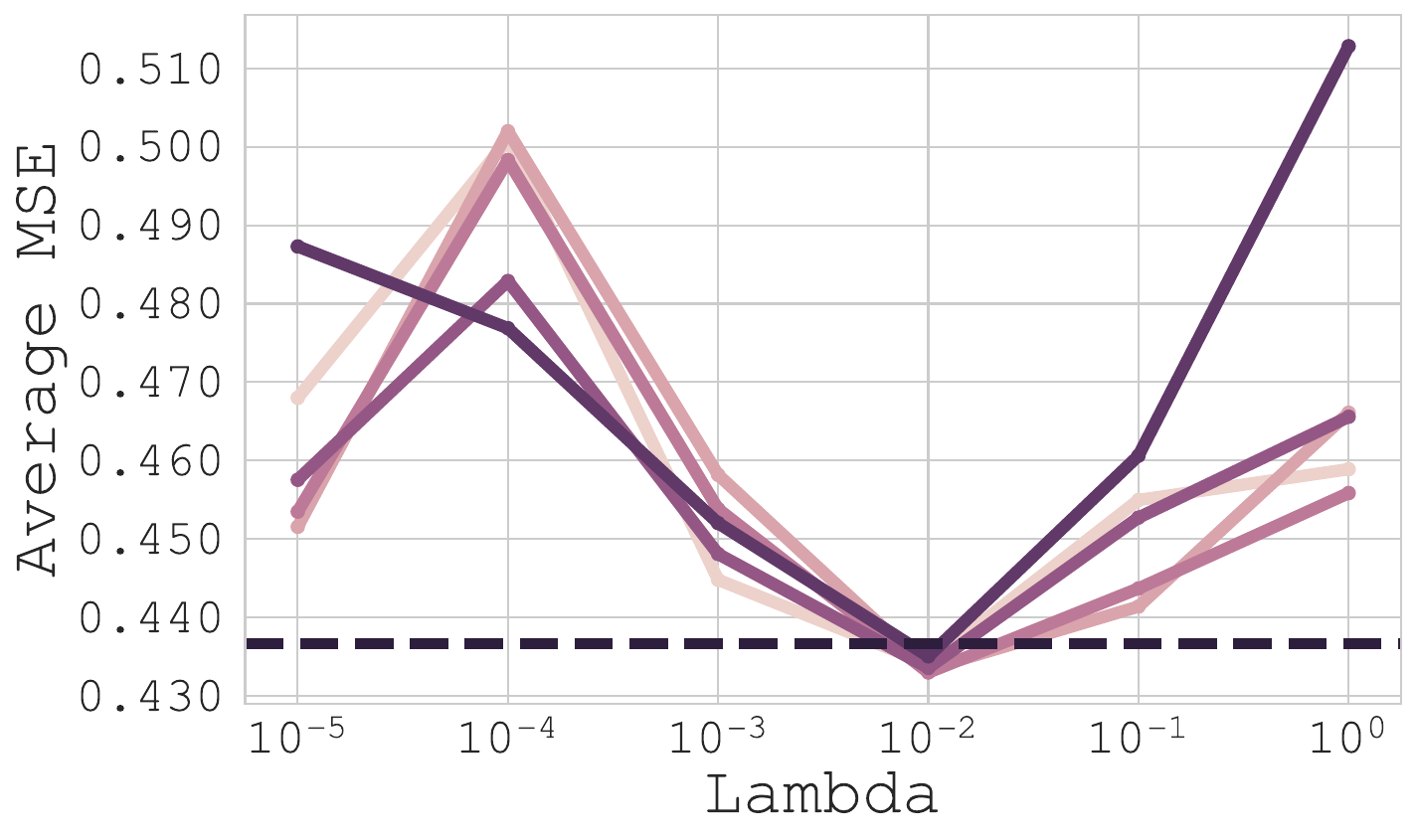}
        \caption{Dataset ETTh1, Horizon 336}
        \label{fig:etth1-336}
    \end{subfigure}%
    \begin{subfigure}{.33\textwidth}
        \centering
        \includegraphics[width=\linewidth]{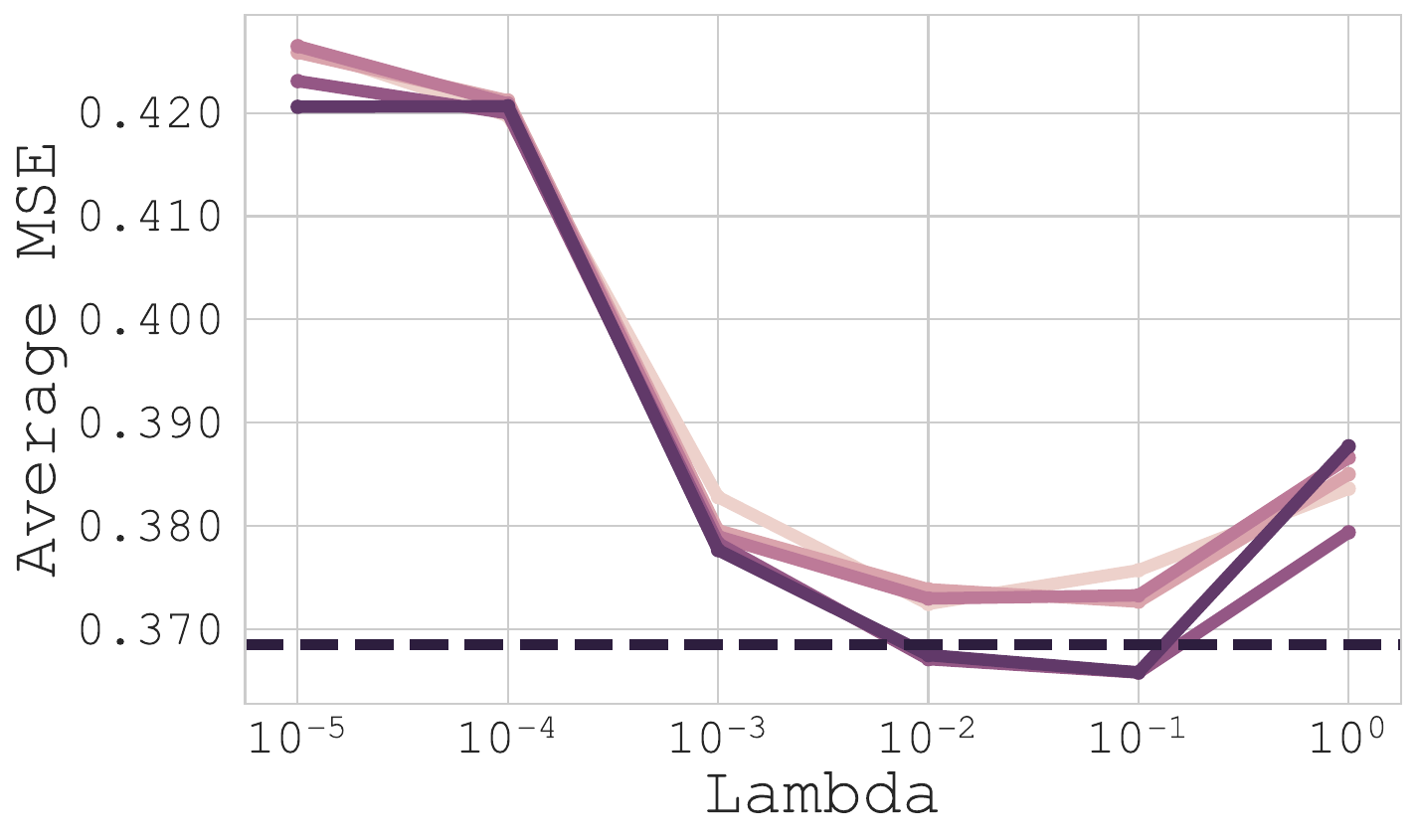}
        \caption{Dataset ETTh2, Horizon 336}
        \label{fig:etth2-336}
    \end{subfigure}
    \begin{subfigure}{.33\textwidth}
        \centering
        \includegraphics[width=\linewidth]{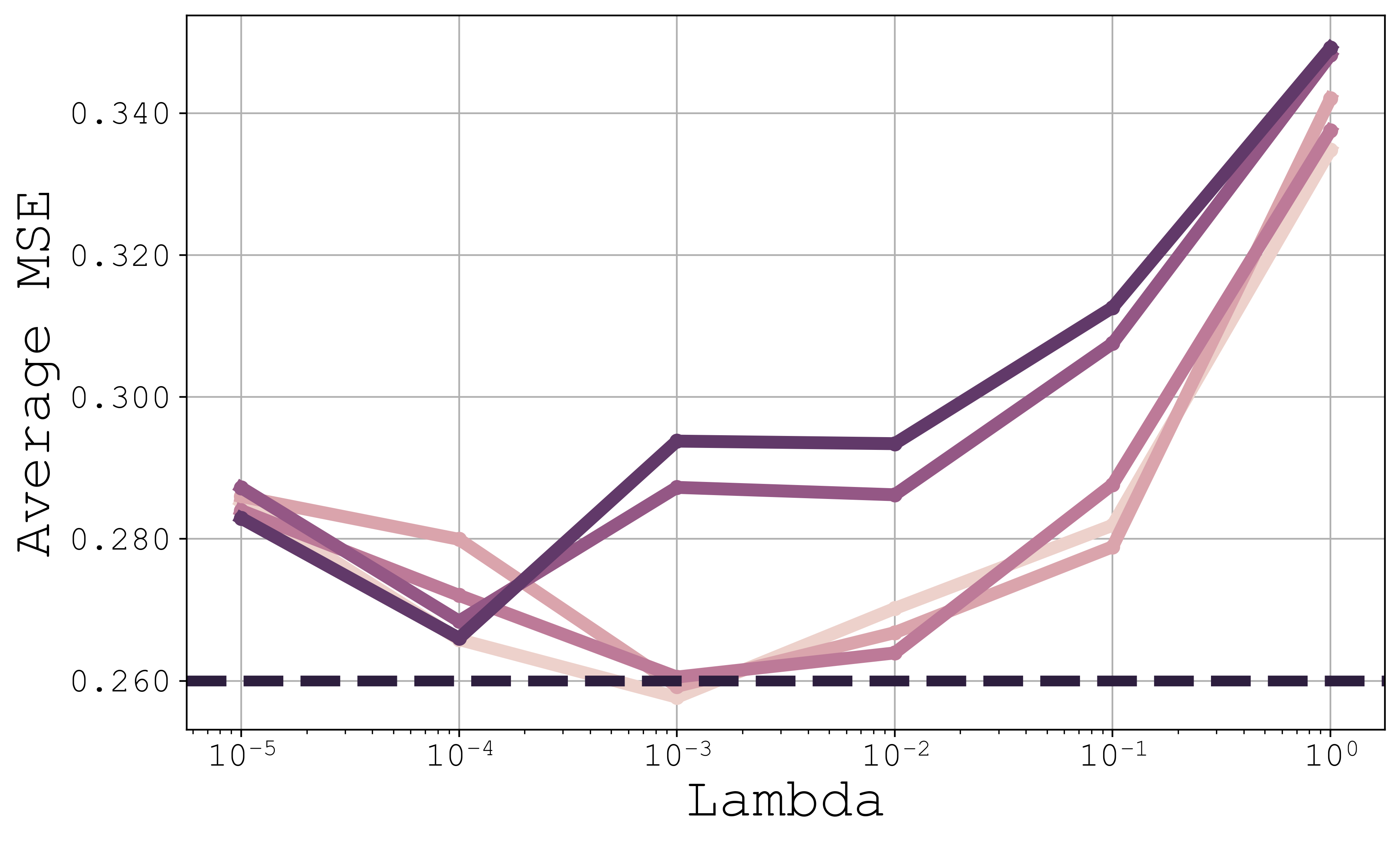}
        \caption{Dataset Weather, Horizon 336}
        \label{fig:weather-336}
    \end{subfigure}

    \begin{subfigure}{.33\textwidth}
        \centering
        \includegraphics[width=\linewidth]{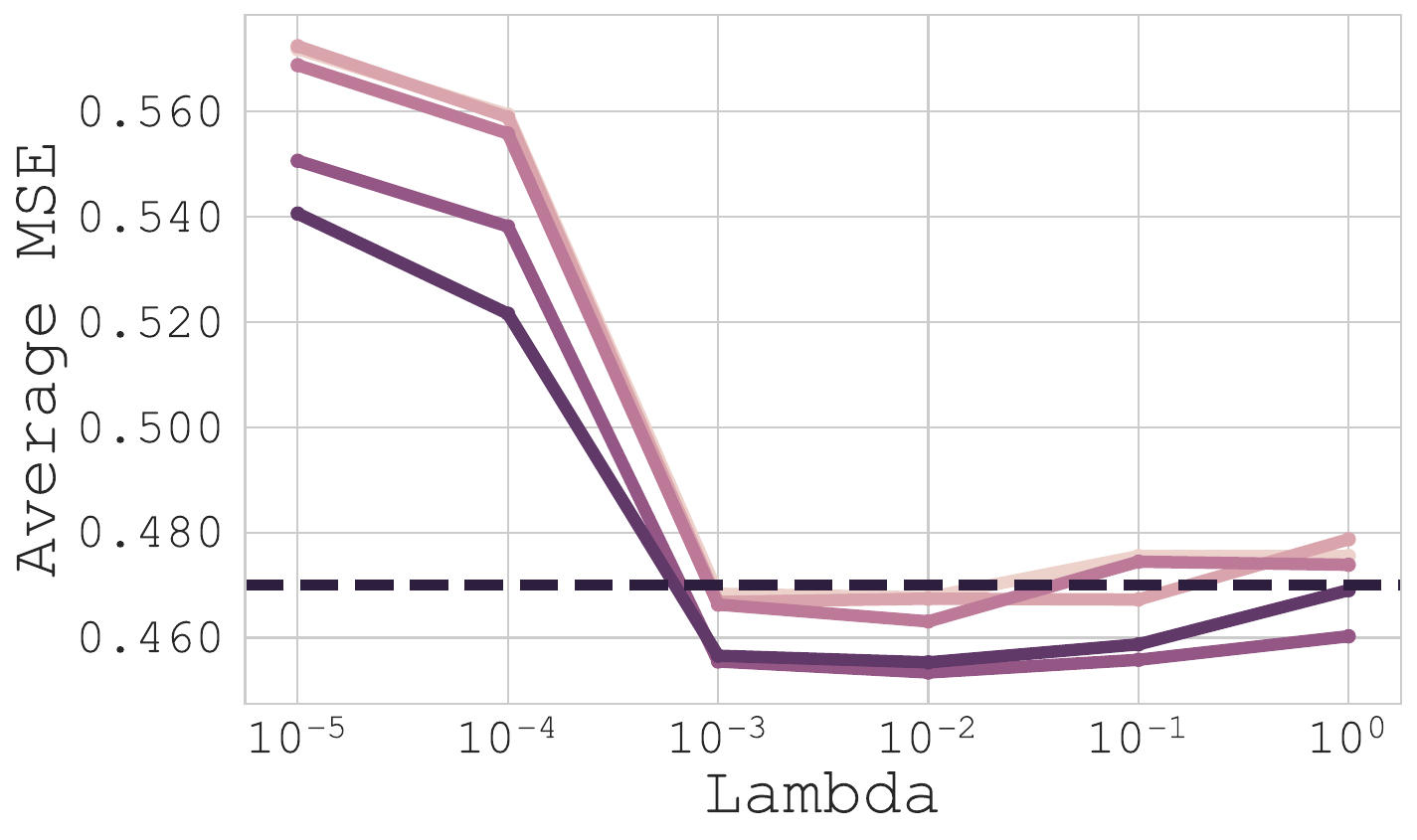}
        \caption{Dataset ETTh1, Horizon 720}
        \label{fig:etth1-720}
    \end{subfigure}%
    \begin{subfigure}{.33\textwidth}
        \centering
        \includegraphics[width=\linewidth]{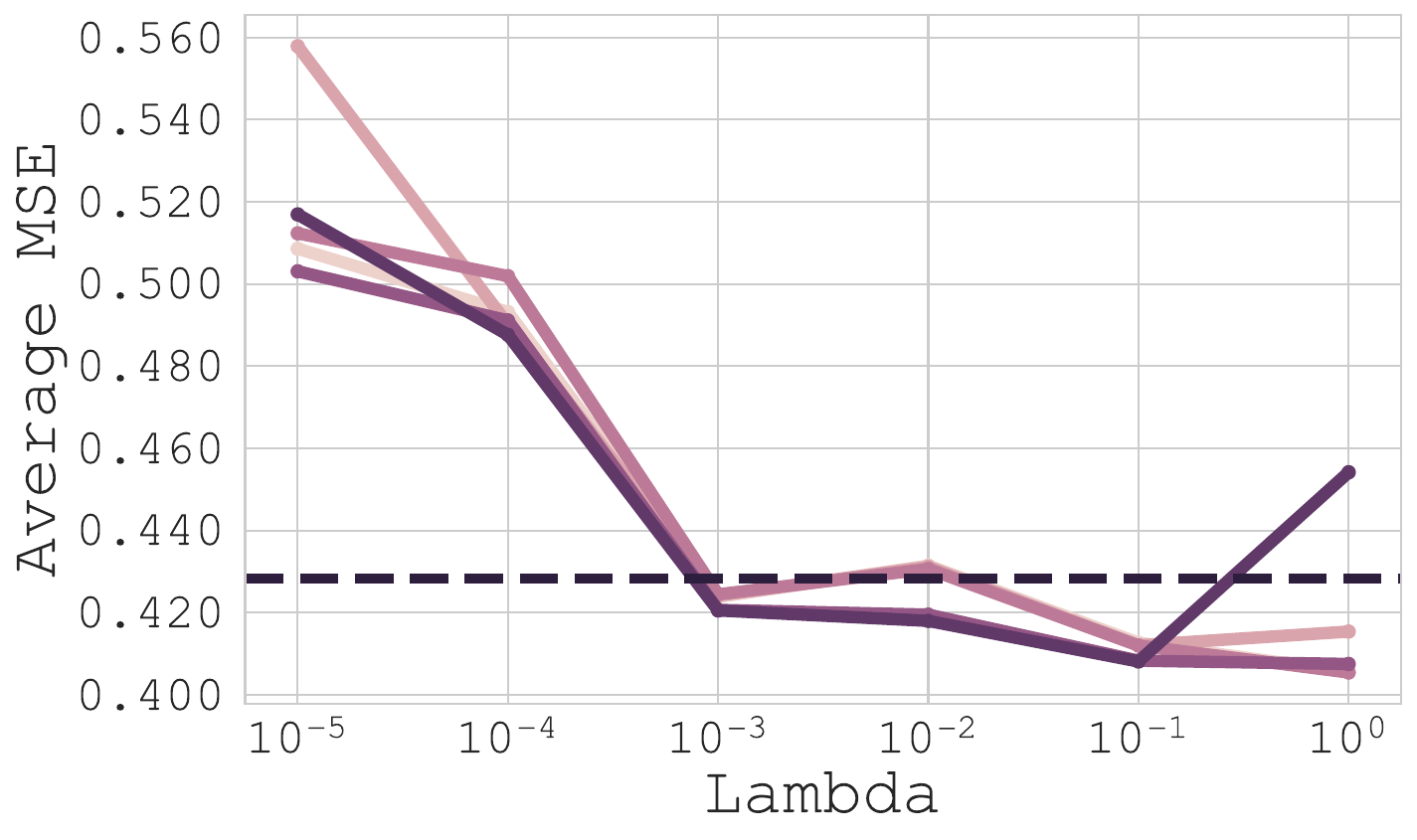}
        \caption{Dataset ETTh2, Horizon 720}
        \label{fig:etth2-720}
    \end{subfigure}
    \begin{subfigure}{.33\textwidth}
        \centering
        \includegraphics[width=\linewidth]{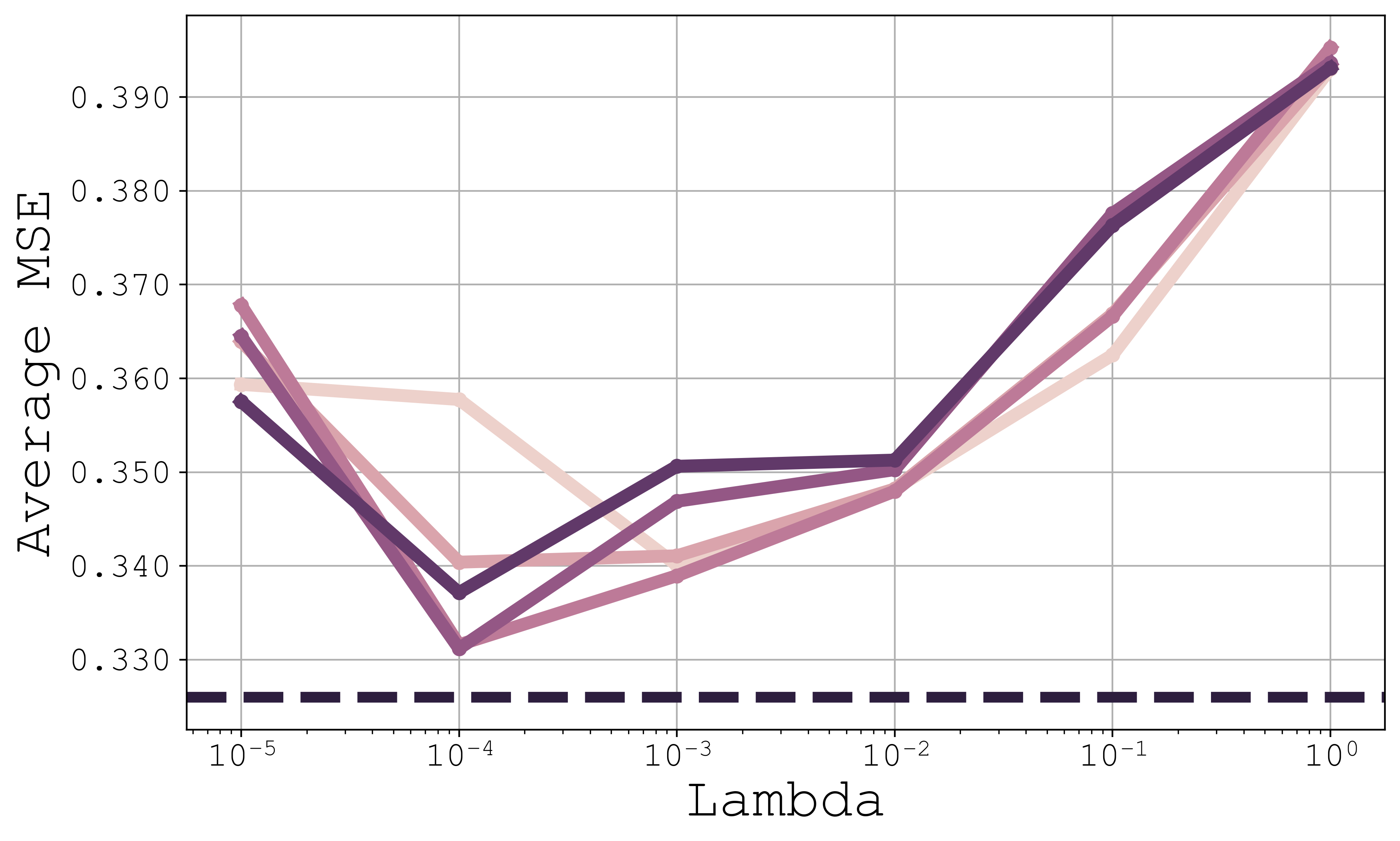}
        \caption{Dataset Weather, Horizon 720}
        \label{fig:weather-720}
    \end{subfigure}

    \caption{Results of our optimization method on different datasets and horizons averaged across 3 different seeds for each gamma and lambda values for the \texttt{Transformer} baseline}
    \label{fig:datasets}
\end{figure}

\clearpage

\section{Limitations}
\label{app:limitations}

While the study provides valuable insights through its theoretical analysis within a linear framework, it is important to acknowledge its limitations. The linear approach serves as a solid foundation for understanding more complex models, but its practical applications may be constrained. Linear models, though mathematically tractable and often easier to interpret, might not fully capture the intricacies and nonlinear relationships present in real-world data, especially in the context of multivariate time series forecasting.

To address this limitation, we decided to extend our algorithm's application to more complex models, specifically within the nonlinear setting of neural networks. This transition aims to evaluate whether the theoretical insights derived from the linear framework hold true empirically when applied to neural networks. As part of this endeavor, an optimal parameter lambda was selected by an oracle, leading to promising results, as detailed in Section \ref{sec:multivariate_forecasting}. This oracle-based selection underscores the potential efficacy of our approach when appropriately tuned, even in more complex, nonlinear contexts.

It is important to note that the limitations are not related to the real-world data itself, as our setting performs well in the context of multi-task regression for real-world data, as shown in Section \ref{sec:regression}. The difficulty arises from transitioning from a linear to a nonlinear model. The results in Section \ref{sec:multivariate_forecasting} are particularly encouraging, demonstrating that our method can improve upon univariate baselines by regularizing with an optimal lambda, as indicated by our oracle. While the oracle provides an upper bound on performance, actual implementation would require robust methods for hyperparameter optimization in non-linear scenarios, which remains an open area for further research.

By expanding the scope of our theoretical framework to encompass nonlinear models, we pave the way for future work that could focus on the theoretical analysis of increasingly complex architectures

\end{document}